\documentclass[10pt,journal,compsoc]{IEEEtran}

\usepackage{multirow}
\usepackage{makecell}
\usepackage{amsfonts}
\usepackage{nicefrac}
\usepackage{booktabs}
\usepackage{xcolor}
\usepackage{url}            
\usepackage{microtype}      
\usepackage{xcolor}         
\usepackage{color}         
\usepackage{amsmath,amssymb,amsthm,mathrsfs,lineno}
\usepackage{subfigure,graphicx}
\usepackage{algorithmic}
\usepackage{algorithm}
\usepackage{tabularx}
\usepackage{wrapfig}
\usepackage{ragged2e}
\usepackage{dsfont}
\usepackage{bbm}
\usepackage{bm}
\usepackage{makecell}

\newtheorem{theorem}{Theorem}
\newtheorem{definition}{Definition}
\usepackage{cite}
%

\ifCLASSINFOpdf

\else

\fi

\hyphenation{op-tical net-works semi-conduc-tor}

\begin{document}

\title{Theory-inspired Label Shift Adaptation via \\ Aligned Distribution Mixture}

\author{Ruidong Fan, Xiao Ouyang, Hong Tao, Yuhua Qian~\IEEEmembership{Member,~IEEE}, Chenping~Hou,~\IEEEmembership{Member,~IEEE}
\thanks{This work was partially supported by the NSF for Distinguished Young Scholars under Grant No. 62425607, the Key NSF of China under Grant No. 62136005, the NSF of China under Grant No. 61922087, 62476282 and 62006238, the NSF of Hunan Province under Grant 2023JJ20052, and the Research Innovation Project for Postgraduate Students under Grant No. CX20230012. (Corresponding author: Chenping Hou)}
\thanks{Ruidong Fan, Xiao Ouyang, Hong Tao and Chenping Hou are with the College of Science, National University of Defense Technology, Changsha, 410073, China. E-mail: fanruidong1996@hotmail.com, ouyangxiao98@hotmail.com, taohong.nudt@hotmail.com, hcpnudt@hotmail.com. Yuhua Qian is with the Institute of Big Data Science and Industry, Shanxi University, Taiyuan, 030006, China. E-mail: jinchengqyh@126.com}}

\markboth{Theory-inspired Label Shift Adaptation via  Aligned Distribution Mixture}%
{Fan \MakeLowercase{\textit{et al.}}: Theory-inspired Label Shift Adaptation via Aligned Distribution Mixture}

\IEEEtitleabstractindextext{
\begin{abstract}
\justifying{As a prominent challenge in addressing real-world issues within a dynamic environment, label shift, which refers to the learning setting where the source~(training) and target~(testing) label distributions do not match, has recently received increasing attention. Existing label shift methods solely use unlabeled target samples to estimate the target label distribution, and do not involve them during the classifier training, resulting in suboptimal utilization of available information. One common solution is to directly blend the source and target distributions during the training of the target classifier. However, we illustrate the theoretical deviation and limitations of the direct distribution mixture in the label shift setting. To tackle this crucial yet unexplored issue, we introduce the concept of aligned distribution mixture, showcasing its theoretical optimality and generalization error bounds. By incorporating insights from generalization theory, we propose an innovative label shift framework named as \textbf{A}ligned \textbf{D}istribution \textbf{M}ixture~(ADM). Within this framework, we enhance four typical label shift methods by introducing modifications to the classifier training process. Furthermore, we also propose a one-step approach that incorporates a pioneering coupling weight estimation strategy. Considering the distinctiveness of the proposed one-step approach, we develop an efficient bi-level optimization strategy. Experimental results demonstrate the effectiveness of our approaches, together with their effectiveness in COVID-19 diagnosis applications. }
\end{abstract}
\begin{IEEEkeywords}
Label shift; Generalization theory; Aligned distribution mixture; Coupling weight estimation
\end{IEEEkeywords}
}

\maketitle

\IEEEdisplaynontitleabstractindextext

\section{Introduction} \label{sec1}
The success of deep learning heavily relies on the classical assumption of supervised learning, which posits that the source~(training) and target~(testing) domain share the same distribution~\cite{murphy2012machine,bengio2021deep}. However, in practical applications, this assumption is often violated due to variations in data collection and labeling processes, leading to a notable decline in the testing performance of the trained models. This phenomenon is known as distribution shift, which poses a significant challenge in the deployment of deep learning models in real-world scenarios~\cite{quinonero2022dataset,sugiyama2012machine,zhou2022open,long2018transferable}. Label shift~\cite{Azizzadenesheli2022Importance,saerens2002adjusting,du2014semi,zhang2013domain,bai2022adapting,maity2022}, as a key branch of distribution shift, involves a change in the label distribution between the training and test sets~($P_t(Y)\ne P_s(Y)$), while maintaining unchanged conditional distributions~($P_t(X|Y) = P_s(X|Y)$). For instance, in the context of vehicle recognit1ion, the prevalent car models may differ across various cities, while the manufacturers of the same car models remain consistent, resulting in a uniform appearance. As another illustration, in the domain of bird identification, the migratory behavior of certain bird species yield fluctuations in their distribution within the same geographical area between spring and winter; however, their physical characteristics remain consistent throughout the seasons.
\begin{figure}[!t] \label{fig1}
	\centering
	\includegraphics[width=0.5\textwidth]{./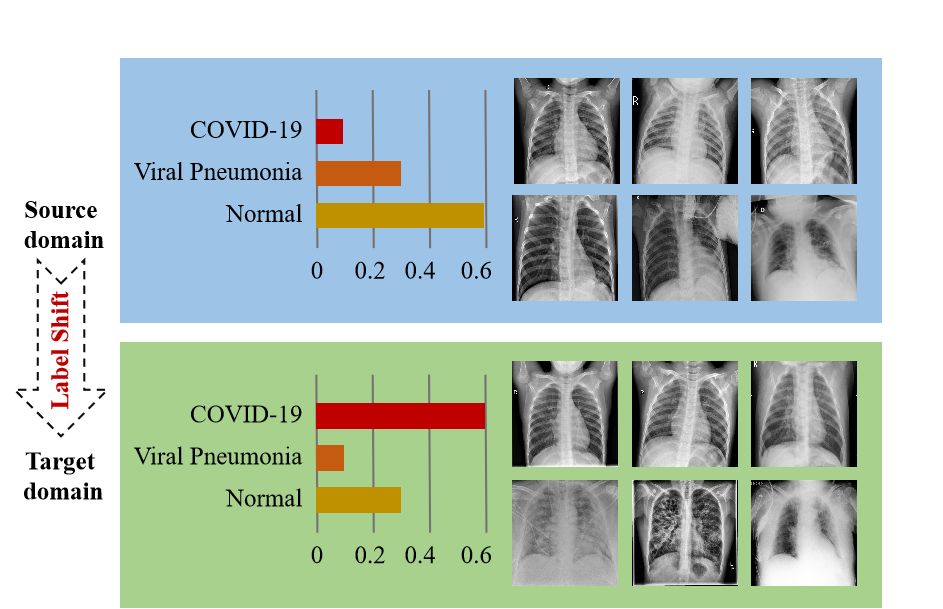}
	\caption{An applied illustration of the label shift scenario can be observed in the domain of COVID-19 diagnosis. Prior to and during the pandemic, the prevalent forms of pneumonia are viral pneumonia and COVID-19, respectively. Despite this shift, the computed tomography~(CT) manifestations of homopneumonia continue to exhibit consistency across the epidemiological periods.}
	\vskip -0.15in
\end{figure}

Motivated by the practical demand outlined above, it is crucial to adapt traditional approaches to augment the generalization capability of models and effectively tackle label shift scenarios~\cite{garg2023rlsbench,li2019target,JMLR/Dirk,wu2021online}. Most advanced label shift approaches~\cite{lipton2018detecting,iclr/Azizzadenesheli19,wacv/SipkaSM22,alexandari2020maximum,nips/GargWBL20} typically involve two primary stages: (1) estimating the target distribution; and (2) constructing an unbiased estimation of the target risk. In the initial stage, certain studies~\cite{lipton2018detecting,iclr/Azizzadenesheli19,wacv/SipkaSM22} infer the target distribution by leveraging the confusion matrix and predicted target labels derived from a pre-trained source classifier. Meanwhile, other works~\cite{alexandari2020maximum,nips/GargWBL20} estimate the target distribution through the utilization of Kullback-Leibler~(KL) divergence to minimize the dissimilarity between the weighted feature and target feature distributions. Subsequently, in the second stage, a substantial portion of existing research~\cite{JMLR/Afonso,uai/PodkopaevR21} employs labeled source samples and the importance-weighted empirical risk minimization~(ERM) framework to train an unbiased target classifier. Moreover, the label shift scenario postulates the concurrent utilization of both labeled and unlabeled data, aligning with the principles of semi-supervised learning~(SSL). SSL~\cite{yang2022survey,mey2022improved,EngelenH20,ChenMZA24}, where the training data is drawn from a direct amalgamation of labeled source and unlabeled target distributions, has exhibited success across various domains, such as image classification~\cite{sohn2020fixmatch}, medical image analysis~\cite{CheplyginaBP19}, and other applications~\cite{jeong2019consistency,HuHLXSST23,JiaoZDXZCJ24}. Nevertheless, when labeled and unlabeled training data stem from inconsistent distributions, SSL may encounter significant performance deterioration~\cite{abs220300190,Fang00023}. Current SSL approaches under inconsistent distributions commonly assume a covariate shift between labeled and unlabeled training data~\cite{ryan2015semi,aminian2022information}, which diverges from the label shift setting. 

Despite the advancements achieved by these approaches, there remain at least two challenges that necessitate attention in addressing the label shift predicament~(as depicted in Fig. 1). Firstly, traditional label shift approaches tend to rely exclusively on labeled source samples for training a re-weighted classifier, overlooking the potential advantages of integrating unlabeled target samples utilized in estimating the target distribution. This limited utilization of available information may result in suboptimal model performance. Secondly, a viable solution to above challenge is found in SSL, which entails the direct integration of source and target distributions during the classifier training. However, traditional SSL methods have never explicitly focused on the label shift scenario. Therefore, the rationale and efficacy of attempting to employ existing SSL techniques to enhance performance in label shift setting remain uncertain. 

\begin{figure*}[!t] 
	\label{fig3}
	\centering
	\includegraphics[width=1\textwidth]{./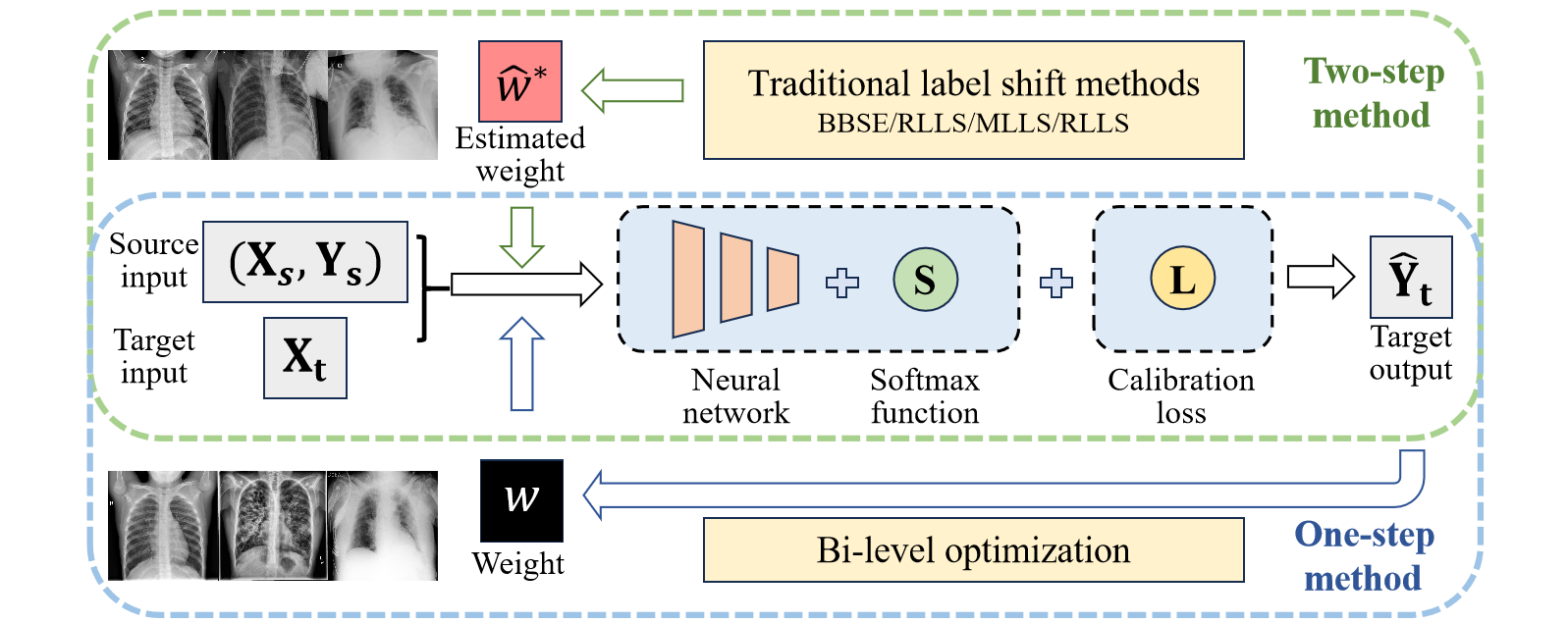}
	\caption{Visualization of the ADM framework. ADM framework, comprising a neural network, softmax function and calibration loss, encompasses four two-step approaches based on traditional weight estimation, alongside a one-step approach based on novel coupling weight estimation. The two-step approaches employs the traditional label shift methods to estimate the alignment weight, subsequently integrating it into the proposed framework. In contrast, the one-step approach creatively unifies the weight estimation and classifier training into a cohesive loss function, employing a bi-level optimization strategy for resolution.}
\end{figure*}
In order to address the aforementioned challenges with this novel yet underexplored learning scenario, we first verify the theoretical deviation of the direct distribution mixture in label shift scenario, revealing its unsuitability. Building upon this analysis, we introduce the concept of aligned distribution mixture~(ADM), where a weighted parameter is employed to align the source and target distributions, alongside a tradeoff parameter to blend the aligned distributions. We theoretically establish the optimality and upper bound of generalization error for ADM framework. By ignoring the Rademacher complexity inherent in generalization theory and ensuring classifier calibration, we propose a theory-inspired label shift framework incorporating a calibration loss, featuring both two-step and one-step approaches~(as depicted in Fig. 2). The two-step approaches leverage the traditional label shift techniques to estimate the alignment weight, which is then incorporated into the proposed framework. Moreover, the one-step approach creatively integrates weight estimation and classifier training into a unified loss function, employing a bi-level optimization strategy for resolution. Lastly, extensive experiment results are presented to validate the efficacy of ADM framework. In summary, the contributions of our study are delineated as follows:
\begin{itemize}
	\item We introduce the ADM framework to tackle this pivotal yet underexplored issue prevalent across diverse application domains. This study represents an innovative endeavor in formulating a label shift framework that adeptly incorporates unlabeled target samples into the training of re-weighted classifiers.
	\item We introduce the concept of ADM and validate its theoretical optimality in the label shift scenario. Building upon this foundation, we investigate the upper bound of generalization error and formulate the experience loss framework by disregarding the Rademacher complexity.
	\item With the ADM framework, we introduce four two-step approaches rooted in conventional weight estimation, alongside a one-step approach founded on innovative coupling weight estimation and bi-level optimization strategy. To the best of our knowledge, this marks the inaugural endeavor towards resolving the label shift quandary in a single step.
	\item We demonstrate the effectiveness of our approaches across a diverse array of datasets. The experimental results consistently indicate the superior performance of our approaches in comparison to alternative methods across the majority of cases. Furthermore, the application of our framework in the realm of COVID-19 diagnosis underscores its potential versatility and utility in practical settings.
\end{itemize}

The subsequent sections of the manuscript are organized as follows: Section 2 presents an overview of related works. The delineation of the problem setting is expounded in Section 3. Theoretical findings are demonstrated in Section 4, followed by the exposition of the ADM framework encompassing both two-step and one-step approaches in Section 5. Empirical results on various benchmark datasets are showcased in Section 6. Ultimately, the conclusions drawn from this study are summarized in Section 7.

\section{Relate Work}
\subsection{Label Shift} 
Label shift represents a notable form of distribution shift that has garnered considerable attention in recent years~\cite{KirchmeyerRBG22,Shui0LGL021,LiMMDC19,YuLH23,YeTPB24,FanOLHH23,ParkYCY23,SunMED23}. It occurs when the source and target domains exhibit different label distributions while maintaining identical conditional distributions~\cite{RakotomamonjyFG22,HwangLKOK22}. Existing label shift methods primarily focus on precise target distribution estimation. Previous studies~\cite{lipton2018detecting,iclr/Azizzadenesheli19} leverage arbitrary black box predictors to estimate the target distribution through the utilization of confusion matrices and predicted target labels, thereby offering robust statistical assurances. Certain works~\cite{wacv/SipkaSM22,alexandari2020maximum,nips/GargWBL20} theoretically and empirically affirm the necessity of classifier calibration for handling label shift, and employ KL divergence to minimize the distribution disparities in target distribution estimation. Moreover, label shift pervades a multitude of learning tasks, including online learning~\cite{bai2022adapting,wu2021online,QianBZZZ23,BabyGYBL023}, federated learning~\cite{XuH23,PlassierMRMP23}, active learning~\cite{0005KXSP0L023,HwangLKOK22,ZhaoLAY21}, multi-task learning~\cite{Shui0LGL021,FanOLHH23,KimTSFGCH22}, and more. Nevertheless, existing label shift methods solely rely on unlabeled target samples for estimating the target distribution, neglecting its direct utilization in training the target classifier, thereby leading to suboptimal exploitation of sample information.
\subsection{Semi-supervised Learning} 
Semi-supervised learning~(SSL), which serves as a typical learning scenario, combines labeled and unlabeled data during the training phase. Two fundamental approaches in SSL are entropy regularization and pseudo-label learning~\cite{yang2022survey,mey2022improved}. In entropy minimization, the main empirical risk loss is augmented with an entropy regularization of the predicted conditional distribution from unlabeled samples, penalizing uncertainties in label predictions~\cite{GrandvaletB04}. The efficacy of the entropy minimization algorithm relies on certain assumptions, such as the manifold assumption~\cite{iscen2019label}, positing that both labeled and unlabeled data are drawn from a shared manifold, or the cluster assumptions~\cite{chapelle2002cluster}, which presume that samples with similar features exhibit similar labels. In contrast, pseudo-label learning involves training the model in a supervised manner using labeled samples, and subsequently assigning pseudo-labels to unlabeled samples with high confidence~\cite{ZouYLKW19,MinBL24}. However, when labeled and unlabeled training data originate from inconsistent distributions, SSL encounters significant performance deterioration. Existing SSL methods with inconsistent distributions mainly focus on covariate shift, assuming a discrepancy in the feature distribution between labeled and unlabeled samples~\cite{GuSX24,aminian2022information,JiaG0SXL23}. However, semi-supervised learning under label shift remains an unexplored territory of vital importance, warranting further investigation and resolution.
\section{Problem Setting}
In this section, we commence by presenting key definitions outlined in the paper. Subsequently, we provide a succinct overview of prevalent label shift methods and the traditional semi-supervised learning framework grounded in direct distribution mixture.
\subsection{Notations} 
Let $\mathcal{X} \subseteq \mathbb{R}^{d}$ denote the feature space, and $\mathcal{Y} \subseteq [K]$ represent the discrete label space, where $d$ is the dimensionality of the features, and $K$ represents the total number of classes. We define the source and target distributions, denoted as $P_s$ and $P_t$ respectively. Building upon this foundation, we define the weighted source distribution $P_s^w$ characterized by important weight $w\in \mathbb{R}^{K}$ as follows: $P_s^w(Y) = w(Y)P_s(Y)$ and $P_s^w(X|Y) = P_s(X|Y)$. We have access to labeled data $(X_s,Y_s) =\{x_i,y_i\}_{i=1}^{n} $, which is independently and identically drawn from the source distribution $P_s$. Furthermore, we have unlabeled data $X_t =\{x_i\}_{i=n+1}^{n+m}$, which is independently and identically drawn from the target distribution $P_t$. We assume access to an arbitrary classifier $h \in \mathcal{H}$: $\mathcal{X} \mapsto {\Delta _{K - 1}}$, where $\Delta _{K - 1}$ denotes the standard $K$-dimensional probability simplex. We give a loss function $l(\cdot,\cdot) : \mathcal{Y} \times \mathcal{Y} \to \mathbb{R}^{1}$ and define the expected loss, the weighted expected loss, the empirical loss and the weighted empirical loss respectively as follows:
\begin{equation}\label{eq1}
	\begin{aligned}
	&\mathbb{E}(h,{P_s}) = \int_{\cal X} {\int_{\cal Y} {{P_s}(x,y)l(h(x),y){\rm{d}}x{\rm{d}}y} } ,
	\\&\mathbb{E}(h,P_s^w) = \int_{\cal X} {\int_{\cal Y} {w(y){P_s}(x,y)l(h(x),y){\rm{d}}x{\rm{d}}y} },
	\\&	\mathbb{\hat E}(h,{P_s}) = \frac{1}{n}\sum\limits_{i = 1}^n {l(h({x_i}),{y_i})},
	\\& \mathbb{\hat E}(h,P_s^w) = \frac{1}{n}\sum\limits_{i = 1}^n {w({y_i})l(h({x_i}),{y_i})} .
	\end{aligned}
\end{equation} 
We denote the direct mixture of two distributions $P_s$ and $P_t$ with proportion $\beta$ as ${P}_m = {{\rm{MIX}}_{\beta}}({P}_s,{P}_t)$. For enhanced precision, we redefine the direct distribution mixture~(DDM) by discerning the label and conditional distributions as separate entities:
\begin{equation}
	\begin{aligned}
	&\qquad{P}_m(Y) = \beta{P}_s(Y) + (1-\beta){P}_t(Y),\\&
	{P}_m(X|Y) = \beta{P}_s(X|Y) + (1-\beta){P}_t(X|Y).
	\end{aligned}
\end{equation} 
Furthermore, for the ease of theoretical expression, we present the definitions of Rademacher Complexity, Total Variation (TV) distance, canonical calibration and bounded loss function.
\begin{definition} {\bf{(Rademacher Complexity)}}
Define the function set $\mathcal{G}(l,\mathcal{H}) = \{l(h(x),y),\,h\in\mathcal{H}\}$, where $\{x_i,y_i\}_{i=1}^{n}$ represents a set containing $n$ source samples. Subsequently, the Rademacher complexity of $\mathcal{G}$ in relation to the sample set is explicated as:  
\begin{equation}
	\begin{aligned}
	{\rm{Rad}}_{n}(\mathcal{G}) := \mathbb{E}_{\bm{\sigma}}\left[\mathop {\sup }\limits_{h\in\mathcal{H}}\frac{1}{n}\sum\limits_{i = 1}^n {{\sigma _i}l(h({x_i}),{y_i})} \right],
	\end{aligned}
\end{equation}
where $\bm{\sigma} = \{\sigma_i\}_{i=1}^{n}$ and $\sigma_i$ are i.i.d. uniform random variables taking values in $\{+1,-1\}$.
\end{definition}

\begin{definition}{\bf{(TV Distance)}}
	The TV distance $d_{\rm{TV}}(\cdot,\cdot)$ between two probability measures $P_s(Y)$ and $P_t(Y)$, is defined as
\begin{equation}	
	d_{\rm{TV}}(P_s(Y),P_t(Y)) =  \frac{1}{2}\int_\mathcal{Y} \left|{P_s}(Y=y) - {P_t}(Y=y)\right|{\rm{d}}y.
\end{equation}
\end{definition} 

\begin{definition}{\bf{(Canonical Calibration)}}
A classifier $h \in \mathcal{H}$ is canonically calibrated on the source domain if for any $j\in[K]$ and $(x,y)\in (X_s,Y_s)$,
	\begin{equation}	
		P_s(y=j|h(x)) = h_j(x),
	\end{equation}
where $h_j(x)$ represents the $j$-th element of the output $h(x)$.
\end{definition}

\begin{definition}{\bf{(Bounded Loss Function)}}
	A function $l(\cdot,\cdot) : \mathcal{Y} \times \mathcal{Y} \to \mathbb{R}^{1}$ is called bounded loss if there exists a constant $C$ such that for any $(y_0,y_1) \in  \mathcal{Y} \times \mathcal{Y}$,
\begin{equation}
	\left| {l(y_0,y_1)} \right| \le C.
\end{equation}
\end{definition}

\subsection{Label Shift Technologies} 

\textbf{Reweighted Strategy.} Under the label shift assumption, i.e., the same conditional distribution $P_s(X|Y) = P_t(X|Y)$ and the different label distribution $P_s(Y) \ne P_t(Y)$, the goal of label shift setting is to find an optimal target classifier $h_t$ which minimizes the following reweighted strategy:
\begin{equation}\label{eq2}
	\begin{aligned}
		\mathcal{L}(h_t) &= \mathbb{E}(h_t,P_t)  = \mathbb{E}_{(X,Y) \sim {P_t}}[l(h_t(X),Y)] \\& = \mathbb{E}_{(X,Y) \sim {P_s}}\left[w(Y)l(h_t(X),Y)\right] = \mathbb{E}(h_t,P_s^{w}),
	\end{aligned}
\end{equation} 
where $w$ represents the importance weight that measures the difference in label distributions between the source and target domains, and $w(Y) = {{{{P}_t}{(Y)}} \mathord{\left/{\vphantom {{{{P}_t}{(Y)}} {{{P}_s}{\rm{(Y)}}}}} \right.\kern-\nulldelimiterspace} {{{P}_s}{(Y)}}}$. 

In practical applications, the true weight parameter $w$ is frequently unknown. In response, traditional label shift methods strive to approximate the true weight by obtaining the estimated weight $\hat w$ based on finite samples. Subsequently, we provide a concise overview of the existing techniques for importance weight estimation.

\noindent\textbf{Importance Weight Estimation.} Given  $C_h$ as the confusion matrix with $[C_h]_{ij} = P_s(h(X_s)=i,Y_s=j)$ and $q_h$ as the vector which represents the probability mass function of $h(X_t)$, the estimated $\hat{C}_h$ and $\hat{q}_h$ in the typical finite sample setting are derived from the labeled source and unlabeled target samples. BBSE~\cite{lipton2018detecting} computes the importance weight by $\hat w = {\hat{C}_h}^{-1} \hat{q}_h$. RLLS~\cite{iclr/Azizzadenesheli19} evaluates the weight shift degree $\hat{\theta} = \mathop {\arg\min_\theta} || \hat{C}_h{\theta} - (\hat{q}_h- \hat{C}_h\mathbf{1}) ||_2 + {\Delta _C}\left\|\theta\right\|_2$, and deduces the importance weight by $\hat{w} = \gamma \hat{\theta} + \mathbf{1}$, where both  $\Delta _C$ and $\gamma$ serve as tradeoff parameters.
Moreover, assuming $h_s$ as a well-calibrated source classifier, MLLS~\cite{nips/GargWBL20} estimates the importance weight utilizing the equation: $\hat w = \arg {\max _w}\frac{1}{m}\sum\nolimits_{i = n + 1}^{n + m} {\log {h_s}{{({x_i})}^{\rm T}}(w\odot P_s(Y))} $. Furthermore, to mitigate the impact of negative weight, SCML~\cite{wacv/SipkaSM22} determines the importance weight by maximizing the log-likelihood: $\hat w = \arg {\max _{w \in \Delta _{K - 1}}}\sum\nolimits_{k = 1}^K {{m_k}\log {w(k)[C_{h_s}]_{k:}}}$, where $m_k$ denotes the numbers of decisions made by the source classifier for $k$-th class on the unlabeled data $X_t$. 

\subsection{SSL Framework via Direct Distribution Mixture} 
The current label shift methods solely rely on unlabeled target data for importance weight estimation, which is insufficient to fully leverage the unlabeled information. Utilizing the unlabeled data for classifier training has the potential to enhance model performance. SSL via DDM emerges as the requisite learning framework for this purpose, enabling the joint utilization of labeled and unlabeled data to acquire a hypothesis that facilitates predictions for new unlabeled instances. Let us introduce the definition of the unsupervised expected loss function:
\begin{equation}\label{eq3}
	\begin{split} 
		\mathbb{E}(h,{P_t}) &\buildrel \Delta \over = \int_{\mathcal X} {\int_{\mathcal Y} {{P_t}(x,y)l(h(x),y)\text{d}x\text{d}y} } 
		\\&=  \int_{\mathcal X} {{P_t}(x)\underbrace {\left( {\int_{\mathcal Y} {{P_t}(y|x)l(h(x),y)\text{d}y} } \right)}_{{l_u}(h(x);{P_t})}} \text{d}x
		\\& =  \int_{\mathcal X} {{P_t}(x){l_u}(h(x);{P_t})}\text{d}x,
	\end{split}
\end{equation}
where ${l_u}(h(x);{P_t})$ denotes the unsupervised loss function that quantifies the loss linked to unlabeled target data. On this basis, the SSL framework via DDM can be defined as:
\begin{equation}\label{eq4}
	\begin{split} 
		\mathbb{E}(h,{\rm{MIX}}({P_s},{P_t})) \buildrel \Delta \over = \beta\mathbb{E}(h,{P_s}) + (1-\beta)\mathbb{E}(h,{P_t}),
	\end{split}
\end{equation}
and the corresponding total loss function is as follows:
\begin{equation}\label{eq5}
	\begin{split} 
		\mathcal{L}_{SSL} = \frac{\beta}{n}\sum\limits_{i = 1}^n {l(h({x_i}),{y_i})} + \frac{1-\beta}{m}\sum\limits_{j = n+1}^{n+m} {l_u(h({x_j});{P_t})},
	\end{split}
\end{equation}
where $\beta$ balances between the supervised and unsupervised risk, and $0 \le \beta \le 1$. If we choose $\beta=0$, the above problem reduces to an unsupervised learning setting and if we choose $\beta=1$, the above problem reduces to a supervised learning scenario.

\section{Theoretical Results} \label{sec2}

The success of the SSL framework relies on the assumption that all samples are drawn from a common distribution~\cite{WangJWWM23,tsmc/HuangD22}. This leads us to question the feasibility of achieving optimal performance by straightforwardly employing the traditional SSL framework in the label shift scenario. Therefore, in this section, this section delves into an examination of the generalization capabilities of the traditional SSL framework in label shift scenario and introduces a novel theoretical framework to tackle this challenge.

\subsection{Impossibility of SSL Framework via DDM to Correct Label Shift}
We contemplate the capacity of the SSL framework via DDM to acquire the optimal target classifier $h_t$, aimed at minimizing the expected target loss. To delve into this inquiry, we introduce the ensuing theorem:
\begin{theorem}\label{th1} {\bf{(The Irrationality of Direct Distribution Mixture)}}
    Assume that ${P}_m = {{\rm{MIX}}_{\beta}}({P}_s,{P}_t)$ and the sets of label conditional distribution $\{P_t(X|Y=y), y\in[K]\}$ are linearly independent, we can prove that for any hypothesis $h\in \mathcal{H}$, $\mathbb{E}(h,{P}_m)+ \mathbb{E}(h,{P}_t)>0$.
\end{theorem}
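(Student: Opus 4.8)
The plan is to argue by contradiction, exploiting the nonnegativity of the loss together with the structural mismatch that direct mixing introduces into the label marginal. First I would suppose the conclusion fails, i.e.\ that there exists some $h\in\mathcal{H}$ with $\mathbb{E}(h,P_m)+\mathbb{E}(h,P_t)\le 0$. Since the loss $l(\cdot,\cdot)$ is taken to be nonnegative (as for the cross-entropy / calibration loss used later), both $\mathbb{E}(h,P_m)\ge 0$ and $\mathbb{E}(h,P_t)\ge 0$, so the assumption forces $\mathbb{E}(h,P_m)=\mathbb{E}(h,P_t)=0$ simultaneously. The whole theorem then reduces to showing that \emph{no single classifier can annihilate both expected losses}, which is precisely where the linear-independence hypothesis must do the work.

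Next I would unfold each expected loss over its class-conditional decomposition. Writing $\mathbb{E}(h,P_t)=\sum_{y}P_t(Y=y)\int_{\mathcal X}P_t(x\mid y)\,l(h(x),y)\,\mathrm{d}x$, and similarly expanding $P_m$ via $P_m(x\mid y)=\beta P_s(x\mid y)+(1-\beta)P_t(x\mid y)$ and $P_m(Y=y)=\beta P_s(Y=y)+(1-\beta)P_t(Y=y)$, the vanishing of $\mathbb{E}(h,P_t)$ yields $l(h(x),y)=0$ for $P_t(\cdot\mid y)$-almost every $x$ and every class of positive target mass, while the vanishing of $\mathbb{E}(h,P_m)$ gives the analogous almost-everywhere condition on the support of $P_m(\cdot\mid y)$. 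Because $P_m$ carries the \emph{mixed} label marginal $\beta P_s(Y)+(1-\beta)P_t(Y)$, which differs from $P_t(Y)$ whenever $\beta\in(0,1)$ and $P_s(Y)\ne P_t(Y)$, the two ``perfect-prediction'' requirements pin the single classifier $h$ to two distinct posterior assignments over a shared feature space.

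The crux, and the step I expect to be the main obstacle, is converting this tension into an outright contradiction via the linear independence of $\{P_t(X\mid Y=y),\,y\in[K]\}$. The idea is to collect the zero-loss constraints across classes and show they would entail a nontrivial functional relation $\sum_{y}c_y P_t(X\mid Y=y)=0$ with coefficients $c_y$ built from the gap between the target and mixture marginals; linear independence then forbids all $c_y$ from being simultaneously nonzero, which is the sought contradiction. The delicate part is making rigorous the passage from almost-everywhere zero-loss conditions to a genuine identity among the conditional densities: one must control the relevant null sets, verify that the induced coefficients are not all zero, and ensure the non-degeneracy (effectively overlapping support) encoded by linear independence genuinely precludes a common deterministic minimizer. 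I would therefore isolate an intermediate lemma linking the zero-loss (equivalently, calibrated) behavior of $h$ to the induced posteriors $P_t(Y\mid X)$ and $P_m(Y\mid X)$, and only then invoke linear independence to conclude $\mathbb{E}(h,P_m)+\mathbb{E}(h,P_t)>0$ for every $h\in\mathcal{H}$.
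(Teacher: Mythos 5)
Your opening reduction is the same as the paper's: with a nonnegative loss, $\mathbb{E}(h,P_m)+\mathbb{E}(h,P_t)=0$ forces both terms to vanish, so the theorem reduces to showing that no single $h$ annihilates both risks. From there, however, your plan does not close, and the step you yourself flag as ``the main obstacle'' is a genuine gap rather than a technicality. The difficulty is structural. Under the label shift assumption the mixture has the \emph{same} class-conditionals as the target, $P_m(X|Y=y)=P_t(X|Y=y)$, and for $\beta<1$ every class with positive $P_t$-mass also has positive $P_m$-mass. Consequently the zero-loss requirement for $P_m$ --- $l(h(x),y)=0$ for $P_m(\cdot|y)$-a.e.\ $x$, for every $y$ with $P_m(y)>0$ --- \emph{contains} the zero-loss requirement for $P_t$: the two ``perfect-prediction requirements'' you describe are nested, not in tension. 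The marginal gap $\beta\bigl(P_s(Y)-P_t(Y)\bigr)$ only rescales each class's contribution to the risk by a positive factor; it never changes which constraints are active, so there is no mechanism by which it can surface as nonzero coefficients $c_y$ in a relation $\sum_y c_y P_t(X|Y=y)=0$. Your own parenthetical ``(effectively overlapping support)'' names the real missing ingredient: linear independence of the conditionals does not imply overlapping supports, and when the class-conditional supports are disjoint a single deterministic rule is simultaneously zero-loss for $P_m$ and $P_t$ --- exactly the configuration your argument would have to exclude and cannot exclude from the zero-loss constraints alone.

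The paper's proof takes a different route at the crux: it uses linear independence to show that the \emph{feature marginals} differ, via $P_m(X)-P_t(X)=\beta\int_{\mathcal Y}P_t(x|y)\bigl(P_s(y)-P_t(y)\bigr)\,\mathrm{d}y\neq 0$, and then argues through the induced posteriors that $P_t(Y|X)$ cannot coincide with the Dirac posterior that a zero-$P_m$-loss classifier presupposes, so some off-label mass survives under $P_t$ and contributes positive loss. Whatever one makes of the details of that posterior comparison, the point is that the linear-independence hypothesis is deployed there against the marginals $P_m(X)$ versus $P_t(X)$, not against the class-conditionals as in your plan; the class-conditionals of $P_m$ and $P_t$ are identical by assumption and cannot carry the contradiction. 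To rescue your route you would need the intermediate lemma you allude to --- a calibration-type statement forcing a zero-loss $h$ to recover $P_m(Y|X)$ and $P_t(Y|X)$ simultaneously --- at which point the argument becomes the paper's posterior-mismatch argument and is no longer about linear relations among the conditional densities.
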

\begin{proof}
In the initial phase, let us revisit the formulation of the expected loss function:
\begin{equation}\label{seq1}
\begin{split} 
\mathbb{E}(h,{P}_m) 
&= \mathbb{E}_{P_m}[l(h(X),Y)]
\\& = \int_\mathcal{X} {\int_\mathcal{Y} {{P_m}(x,y)l(h(x),y)\text{d}x\text{d}y}} 
\\& = \int_\mathcal{X} P_m(x)\left(\int_\mathcal{Y} {{P_m}(y|x)l(h(x),y)\text{d}y}\right)\text{d}x,
\end{split}
\end{equation}
Note that the condition $\mathbb{E}(h,{P}_m) = 0$ is achievable when all datasets following the distribution ${P}_m$ are separable completely, and the label $Y$ for a given $X$ is determinate. Correspondingly, a labeling rule $h_m$ exists such that $y=h_m(x)$ for every $(x,y)$ sampled from ${P}_m$, with the conditional distribution adhering to the Dirac distribution, i.e.,
\begin{equation}\label{seq2}
\begin{split} 
{P_m}{(y|x)}=\left\{ {\begin{array}{*{20}{c}}
	{{\delta},\quad \text{if}\;y = {h_m}(x),}\\
	{0,\;\;\;  \text{else}.\quad \quad \quad \;\;\;\;}
	\end{array}} \right.
\end{split}
\end{equation}
Similarly, the condition $\mathbb{E}(h,{P}_t)=0$ holds true only if there exists an $h_t$ that satisfies the aforementioned criteria. Note that $\mathbb{E}(h,{P}_m) + \mathbb{E}(h,{P}_t) = 0$ if and only if $\mathbb{E}(h,{P}_m) = \mathbb{E}(h,{P}_t) = 0$. This indicates that all distributions attain zero errors with a unified label rule $h^* = h_m = h_t$. Assuming  $\mathbb{E}(h_m,{P}_m)=0$, we can derive
\begin{equation}\label{seq3}
\begin{split} 
\mathbb{E}(h_m,{P}_t) 
&= \mathbb{E}_{P_t}[l(h_{m}(X),Y)]
\\& = \int_\mathcal{X} P_t(x)\left(\int_\mathcal{Y} {{P_t}(y|x)l(h_{m}(x),y)\text{d}y}\right)\text{d}x.
\end{split}
\end{equation}
By employing the Bayes formula, it becomes evident that
\begin{equation}\label{seq4}
\begin{split} 
{P_t}(Y |X) = \frac{{P_m}(X)}{{P_t}(X)}{{P}_m}{(Y|X)}.
\end{split}
\end{equation}
In addition, we have
\begin{equation}\label{seq5}
\begin{split} 
{P_m}(X)-{P_t}(X)& =  \int_\mathcal{Y} {{P_m}(x,y)\text{d}y}-\int_\mathcal{Y} {{P_t}(x,y)\text{d}y}
\\& \mathop=\limits^{\textcircled{\small{1}}} \int_\mathcal{Y} {{P_t}(x|y)({P_m}(y)-{P_t}(y))\text{d}y}
\\&= \int_\mathcal{Y} {\beta{P_t}( x|y)({P_s}(y)-{P_t}(y))\text{d}y}.
\end{split}
\end{equation}
Where the validity of equality $\textcircled{\small{1}}$ arises from the underlying conditional distribution presumption within the label shift scenario. Given that the label conditional distributions $\{P_t(X|Y=y), y\in[K]\}$ are linearly independent and the label distributions differ~(${P_s}(Y)\ne{P_t}(Y)$), we can establish
\begin{equation}\label{seq6}
\begin{split} 
{P_m}(X)-{P_t}(X) \ne 0.
\end{split}
\end{equation}
Combining with Eq. (\ref{seq4}), we have
\begin{equation}\label{seq7}
\begin{split} 
{P_t}(Y |X) \ne {P_m}(Y|X).
\end{split}
\end{equation}
Given that ${P_m}(Y|X)$ conforms to the Dirac distribution, there exists $y_0 \ne h_m(x)$ corresponding to $l(h_m(x),y_0)>0$, satisfying ${P_t}( y_0|x)\ne 0$. Revisiting Eq. (\ref{seq3}), we find 
\begin{equation}\label{seq8}
\begin{split} 
\mathbb{E}(h_m,{P}_t)
& = \int_\mathcal{X} P_t(x)\left(\int_\mathcal{Y} {{P_t}(y|x)l(h_{m}(x),y)\text{d}y}\right)\text{d}x
\\& \ge  \int_\mathcal{X} P_t(x){P_t}(y_0|x)l(h_{m}(x),y_0)\text{d}x
\\& >0.
\end{split}
\end{equation}
Thus, it proves that for any hypothesis $h_m\in\mathcal{H}$ satisfying $\mathbb{E}(h_m,{P}_m) = 0$, $\mathbb{E}(h_m,{P}_t)>0$ always holds, and vice versa. Finally, we have for any hypothesis $h\in\mathcal{H}$, 
\begin{equation}\label{seq10}
\begin{split} 
\mathbb{E}(h,{P}_m) + \mathbb{E}(h,{P}_t)>0.
\end{split}
\end{equation}
\end{proof}
Theorem \ref{th1} explicitly shows the limitations of the SSL framework via DDM, demonstrating the inability of any hypothesis to attain theoretical optimality even in scenarios where datasets are linearly separable. Subsequently, we establish upper bounds on the expected error disparity between $P_m$ and $P_t$, aiming to delve into the underlying causes for the ineffectiveness of the SSL framework via DDM.

\begin{theorem}\label{th2} {\bf{(Upper Bound on The Expected Error Disparity)}}
	Assume that the loss function $l$ is upper bounded by a constant $C$, then for any hypothesis $h\in\mathcal{H}$, we have
	\begin{equation}\label{seq11}
	\begin{split} 
	\left|\mathbb{E}(h,{P}_m) - \mathbb{E}(h,{P}_t)\right|\le{2\beta C}d_{\rm{TV}}(P_s(Y),P_t(Y)).
	\end{split}
	\end{equation}
\end{theorem}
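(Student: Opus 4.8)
The plan is to exploit the shared conditional structure of the two distributions and reduce the error disparity to a weighted integral over the label marginals alone. First I would write both expected losses in the factored form used throughout the excerpt, namely $\mathbb{E}(h,P_m) = \int_{\mathcal{Y}} P_m(y)\left(\int_{\mathcal{X}} P_m(x|y)\, l(h(x),y)\,\mathrm{d}x\right)\mathrm{d}y$ together with the analogous expression for $P_t$. The crucial observation is that the label shift assumption forces the conditionals to coincide: by definition $P_m(x|y) = \beta P_s(x|y) + (1-\beta)P_t(x|y)$, and since $P_s(x|y)=P_t(x|y)$ under label shift, this collapses to $P_m(x|y)=P_t(x|y)$ for every $y$. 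Consequently the inner integral $g(y) := \int_{\mathcal{X}} P_t(x|y)\, l(h(x),y)\,\mathrm{d}x$ is identical in both expansions, and the entire difference is carried by the label marginals.

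Subtracting the two expressions, I would obtain $\mathbb{E}(h,P_m) - \mathbb{E}(h,P_t) = \int_{\mathcal{Y}} \bigl(P_m(y)-P_t(y)\bigr) g(y)\,\mathrm{d}y$. Invoking the mixture definition $P_m(y)=\beta P_s(y)+(1-\beta)P_t(y)$ gives the clean identity $P_m(y)-P_t(y) = \beta\bigl(P_s(y)-P_t(y)\bigr)$, so the disparity factors as $\beta \int_{\mathcal{Y}} \bigl(P_s(y)-P_t(y)\bigr) g(y)\,\mathrm{d}y$. Passing to absolute values and applying the triangle inequality for integrals then yields $\left|\mathbb{E}(h,P_m)-\mathbb{E}(h,P_t)\right| \le \beta \int_{\mathcal{Y}} \left|P_s(y)-P_t(y)\right|\,\left|g(y)\right|\,\mathrm{d}y$.

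Finally I would bound the conditional loss expectation uniformly: because $\left|l\right|\le C$ by the bounded-loss assumption and $P_t(\cdot|y)$ is a probability density integrating to one, we have $\left|g(y)\right|\le C$ for every $y$. Substituting this bound and recalling the definition $d_{\rm{TV}}(P_s(Y),P_t(Y)) = \tfrac{1}{2}\int_{\mathcal{Y}} \left|P_s(y)-P_t(y)\right|\,\mathrm{d}y$ produces exactly the factor $2\beta C$ in the claimed bound. The argument is essentially a direct computation, so I do not anticipate a deep obstacle; the only step requiring genuine care is the cancellation of the conditional distributions, where one must explicitly invoke the label shift assumption to equate $P_m(x|y)$ with $P_t(x|y)$. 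This is precisely what decouples $g(y)$ from the two distributions and isolates the label-marginal discrepancy, and keeping track of the factor $\tfrac{1}{2}$ in the TV definition is what converts the raw $L^1$ gap into the stated $2\beta C\, d_{\rm{TV}}$ estimate.
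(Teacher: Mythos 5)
Your proposal is correct and follows essentially the same route as the paper: both exploit the label shift assumption to collapse $P_m(x|y)$ to $P_t(x|y)$, reduce the disparity to $\beta\int_{\mathcal{Y}}|P_s(y)-P_t(y)|$ times a conditional loss term bounded by $C$, and recover the factor $2\beta C$ from the $\tfrac{1}{2}$ in the TV definition. The only cosmetic difference is that you bound the inner integral $g(y)$ at the end, whereas the paper pulls the constant $C$ out first and then uses $\int_{\mathcal{X}}P_t(x|y)\,\mathrm{d}x=1$.
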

\begin{proof}
	Firstly, drawing from the definition of $P_m$ and the label shift assumption $P_s(X|Y) = P_t(X|Y)$, we deduce
\begin{equation}
\begin{split} 
	{P}_m(X|Y) = \beta{P}_s(X|Y) + (1-\beta){P}_t(X|Y) = {P}_t(X|Y).
\end{split}
\end{equation}
Subsequently, we formulate the expected error disparity in integral form and analyze its upper bound as follows:
\begin{equation}\label{seq12}
\begin{split} 
&\left|\mathbb{E}(h,{P}_m) - \mathbb{E}(h,{P}_t)\right|
\\& \le \int_\mathcal{X} \int_\mathcal{Y} l(h(x),y)\left|{P_m}(x,y)- {P_t}(x, y)\right|\text{d}x\text{d}y
\\& \le C\int_\mathcal{X} \int_\mathcal{Y} \left|{P_m}(x|y){P_m}(y) - {P_t}(x|y){P_t}(y)\right|\text{d}x\text{d}y
\\& = C\int_\mathcal{Y} |{P_m}(y) - {P_t}(y)| \left(\int_\mathcal{X} {P_t}(x|y)\text{d}x\right)\text{d}y
\\& \mathop=\limits^{\textcircled{\small{2}}} C \int_\mathcal{Y} \left|\beta{P_s}(y) + \left(1-\beta-1\right){P_t}(y)\right|\text{d}y
\\& = {2\beta C}d_{\rm{TV}}(P_s(Y),P_t(Y)),
\end{split}
\end{equation}
where the equality ${\textcircled{\small{2}}}$ is valid due to the fact that $\int_\mathcal{X} {P_t}(x|y)\text{d}x = 1$ holds for any $y\in \mathcal{Y}$. 
\end{proof}
Theorem \ref{th2} reveals that the inefficacy of SSL framework via DDM in label shift scenarios stems from the discrepancy in label distributions. This observation motivates us to align the label distributions of $P_m$ and $P_t$, while upholding the equivalence of their conditional distributions. Therefore, in the ensuing subsection, we provide theoretical validation and technical elucidation for the alignment of label distributions.
\subsection{ADM Framework to Correct Label Shift}
We propose a novel distribution mixture technique to align label distributions. This method involves assigning weights to the source distribution and then amalgamating it with the target distribution. The efficacy of this approach is supported by theoretical guarantees, which are expounded upon as follows:
\begin{theorem}\label{th3} {\bf{(Sufficiency of ADM Correction)}}
	Assume that ${P}^w_m = {{\rm{MIX}}_{\beta}}({P}^w_s,{P}_t)$, where ${P}^w_s(Y) = w(Y){P}_s(Y)$ and ${P}^w_s(X|Y) = {P}_s(X|Y)$. Then for any $h\in\mathcal{H}$, there exists a unique $w^*(Y) = {{P}_t(Y) \mathord{\left/{\vphantom {{P}_t(Y) {P}_s(Y)}} \right. \kern-\nulldelimiterspace} {P}_s(Y)} $, such that
	\begin{equation}\label{seq13}
	\begin{split} 
	\mathbb{E}(h,{P}^{w^*}_m) = \mathbb{E}(h,{P}_t).
	\end{split}
	\end{equation}
\end{theorem}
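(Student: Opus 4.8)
The plan is to split the statement into an existence claim and a uniqueness claim, with $w^*(Y)=P_t(Y)/P_s(Y)$ as the candidate weight that collapses the aligned mixture onto the target distribution. The guiding observation is that, unlike the direct mixture treated in Theorem~\ref{th1} and Theorem~\ref{th2}, reweighting the source \emph{before} mixing lets us neutralize the label-distribution discrepancy that produced those negative results; so the whole argument should reduce to checking that the aligned joint density literally coincides with the target joint density.

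For existence, I would first substitute $w^*(Y)=P_t(Y)/P_s(Y)$ into the reweighted source label distribution, obtaining $P^{w^*}_s(Y)=w^*(Y)P_s(Y)=P_t(Y)$, after verifying this is a valid distribution since $\sum_y w^*(y)P_s(y)=\sum_y P_t(y)=1$. Plugging this into $P^{w^*}_m(Y)=\beta P^{w^*}_s(Y)+(1-\beta)P_t(Y)$ immediately gives $P^{w^*}_m(Y)=P_t(Y)$. On the conditional side, the label shift assumption $P_s(X|Y)=P_t(X|Y)$ together with $P^w_s(X|Y)=P_s(X|Y)$ forces $P^{w^*}_m(X|Y)=\beta P_s(X|Y)+(1-\beta)P_t(X|Y)=P_t(X|Y)$, exactly the opening step of Theorem~\ref{th2}. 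Multiplying the matching label and conditional factors yields the joint identity $P^{w^*}_m(X,Y)=P_t(X,Y)$, whence $\mathbb{E}(h,P^{w^*}_m)=\mathbb{E}(h,P_t)$ follows for every $h$ by writing both expected losses as the same integral against the common joint density.

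For uniqueness, I would start from an arbitrary weight $w$ for which the equality holds for all $h$ and measure the expected-loss gap. Because the conditional factor is always $P_t(X|Y)$ irrespective of $w$, the difference collapses, as in Eq.~(\ref{seq12}), to $\mathbb{E}(h,P^w_m)-\mathbb{E}(h,P_t)=\beta\sum_{y\in[K]}P_s(y)\bigl(w(y)-w^*(y)\bigr)\int_\mathcal{X}P_t(x|y)\,l(h(x),y)\,\mathrm{d}x$, using $P^w_m(y)-P_t(y)=\beta P_s(y)(w(y)-w^*(y))$. Requiring this to vanish for every $h\in\mathcal{H}$ should force each coefficient $P_s(y)(w(y)-w^*(y))$ to be zero; I would justify this either by richness of $(\mathcal{H},l)$ (so that equal expected loss for all $h$ is equivalent to equal joints, hence equal label distributions since the conditionals already agree), or more directly by solving the alignment equation $\beta w(y)P_s(y)+(1-\beta)P_t(y)=P_t(y)$ termwise. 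Either route delivers $w(y)=P_t(y)/P_s(y)=w^*(y)$ on the support of $P_s$.

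The hard part will be the uniqueness half rather than existence, which is essentially a substitution. Two points need care: first, the degenerate case $\beta=0$, where $P^w_m=P_t$ for \emph{any} $w$ and uniqueness genuinely fails, so the claim should be read under $\beta\in(0,1]$; second, passing from ``equal expected loss for all $h$'' to ``equal label distributions'' requires either a nondegeneracy hypothesis on the pair $(\mathcal{H},l)$ or an appeal to the linear independence of $\{P_t(X|Y=y)\}_{y\in[K]}$ invoked in Theorem~\ref{th1}, so as to preclude cancellation across classes in the displayed sum. Once these are pinned down, the two halves combine to give the stated identity with the explicit optimal weight $w^*$.
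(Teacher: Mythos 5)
Your proposal matches the paper's proof essentially step for step: existence by substituting $w^*$ to get $P^{w^*}_m(Y)=P_t(Y)$ and $P^{w^*}_m(X|Y)=P_t(X|Y)$ so that the joint distributions coincide, and uniqueness by writing the expected-loss gap as $\beta\int\!\!\int l(h(x),y)\,P_t(x|y)\bigl(P_t(y)-P^{w}_s(y)\bigr)\,\mathrm{d}x\,\mathrm{d}y$ and invoking linear independence of the class-conditionals to force $P^{w}_s(Y)=P_t(Y)$. The only minor divergence is that the paper argues uniqueness for each \emph{fixed} $h$ via a case analysis on whether $l(h(X),y)$ vanishes for some class, whereas you quantify over all $h\in\mathcal{H}$; the caveats you raise about $\beta>0$ and the unstated reliance on linear independence are both conditions the paper's own argument uses.
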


\begin{proof}
	We provide distinct proofs to establish both the existence and uniqueness of the theorem.
	
	\noindent{\bf{[Existence]}}
	By the definition of $P^w_m$, we have
	\begin{equation}\label{seq14}
	\begin{split} 
	&{P}^{w^*}_m(X|Y) = \beta{P}^{w^*}_s(X|Y) + (1-\beta){P}_t(X|Y)=  {P}_t(X|Y),\\&
	{P}^{w^*}_m(Y) = \beta{P}^{w^*}_s(Y) + (1-\beta){P}_t(Y)=  {P}_t(Y).
	\end{split}
	\end{equation}
    Thus, the following formula holds:   
	\begin{equation}\label{seq15}
	\begin{split} 
	\mathbb{E}(h,{P}^{w^*}_m) 
	& = \int_\mathcal{X} \int_\mathcal{Y} l(h(x),y){P^{w^*}_m}(x,y)\text{d}x\text{d}y
	\\& = \int_\mathcal{X} \int_\mathcal{Y} l(h(x),y){P^{w^*}_m}(x|y){P^{w^*}_m}(y)\text{d}x\text{d}y
	\\& = \int_\mathcal{X} \int_\mathcal{Y} l(h(x),y){P_t}(x|y){P_t}(y)\text{d}x\text{d}y
	\\& = \mathbb{E}(h,{P}_t).
	\end{split}
	\end{equation}
	{\bf{[Uniqueness]}} Suppose there exists $w_2 \ne w^*$ for which $\mathbb{E}(h,{P}^{w_2}_m) = \mathbb{E}(h,{P}^{w^*}_m) = \mathbb{E}(h,{P}_t)$, then we derive
	\begin{equation}\label{seq16}
	\begin{split} 
	&\mathbb{E}(h,{P}^{w^*}_m) - \mathbb{E}(h,{P}^{w_2}_m)
	\\& = \int_\mathcal{X} \int_\mathcal{Y} l(h(x),y)\left({P^{w^*}_m}(x,y) - {P^{w_2}_m}(x,y)\right)\text{d}x\text{d}y
	\\& = \int_\mathcal{X} \int_\mathcal{Y} l(h(x),y){P_t}(x|y)\left({P^{w^*}_m}(y)- {P^{w_2}_m}(y)\right)\text{d}x\text{d}y
	\\& = \int_\mathcal{X} \int_\mathcal{Y} \beta l(h(x),y){P_t}(x|y)\left({P_t}(y)-{P^{w_2}_s}(y)\right)\text{d}x\text{d}y
	\\& = 0.
	\end{split}
    \end{equation}
    Since $\beta > 0$, for any $y\in [K]$, we have
    \begin{equation}\label{seq17}
    \begin{split}
    l(h(X),y){P_t}(X|y)\left({P_t}(y)-{P^{w_2}_s}(y)\right)=0.
    \end{split}
    \end{equation} 
    If $l(h(X),y)>0$ holds for every $y\in [K]$, it is apparent that ${P_t}(y) = {P^{w_2}_s}(y)$ and $w_2 = w^*$, thereby contradicting the initial assumption. Let us consider the scenario where there exists $y = K_0$ such that $l(h(X),K_0) = 0$ and $l(h(X),y) > 0$ for other cases, resulting in
    \begin{equation}\label{seq18}
    \begin{split}
    \sum\nolimits_{y\backslash \{ {K_0}\}} {P_t}(X|y)\left({P_t}(y)-{P^{w_2}_s}(y)\right) = 0.
    \end{split}
    \end{equation} 
     Since the label conditional distributions $\{P_t(X|Y=y), y\in[K]\}$ are linearly independent, ${P_t}(y) = {P^{w_2}_s}(y)$ holds, leading to $w_2 = w^*$ and contradicting the initial assumption. In conclusion, we can deduce that $w^*$ stands as the unique optimal solution.   
\end{proof}

Theorem \ref{th3} introduces a novel ADM framework, ensuring a zero upper bound on the expected error disparity. Subsequently, to validate the model's efficacy, we delve into the generalization performance of the ADM framework. Since the true $w$ is unknown, we introduce an estimated weight $\hat w$, followed by an analysis of the generalization error bound within the ADM framework.

\begin{theorem}\label{th4} {\bf{(Generalization Error Bound of ADM Framework)}}
Given $n$ samples drawn from the source distribution $P_s$, $m$ samples drawn from the target distribution $P_t$ and a bound loss function $l$ with a constant $C$, the following generalization bound holds with probability at least $1-2\delta$
\begin{equation}
	\begin{split}
		\mathbb{E}(h,{P}^{w^*}_m) &\le \beta\mathbb{\hat E}(h,P_s^{\hat w}) +(1 - \beta ) \mathbb{\hat E}(h,{P_t}) + \beta C\left\| w^*-\hat w \right\|_2
		\\& \qquad\,\, + \beta\left( 2 {\rm{Rad}}_{n}(\mathcal{G}_1)+ C\left\| w^*\right\|_{\infty}\sqrt {\frac{\log{\frac{2}{\delta}}}{n}} \right)
		\\& \qquad\,\, + (1-\beta)\left( 2 {\rm{Rad}}_{m}(\mathcal{G}) + C\sqrt {\frac{\log{\frac{2}{\delta}}}{m}} \right).     	
	\end{split}
\end{equation}
\end{theorem}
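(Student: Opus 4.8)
The plan is to begin from the mixture identity for the weighted source distribution, split the bound into a target piece and a weighted-source piece, control each with its own high-probability (symmetrisation plus McDiarmid) argument, and then recombine the two via a union bound. By the linearity of the mixed expectation (cf. Eq.~(\ref{eq4})),
\[
\mathbb{E}(h,{P}^{w^*}_m) = \beta\,\mathbb{E}(h,{P}^{w^*}_s) + (1-\beta)\,\mathbb{E}(h,{P}_t),
\]
so I would bound $\mathbb{E}(h,{P}^{w^*}_s)$ and $\mathbb{E}(h,{P}_t)$ separately and reassemble with the weights $\beta$ and $1-\beta$ at the very end; this already explains the two parenthesised blocks in the statement.

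For the target term $\mathbb{E}(h,{P}_t)$ I would apply the standard symmetrisation-plus-concentration argument to the loss class $\mathcal{G}=\mathcal{G}(l,\mathcal{H})$ on the $m$ target samples. Since $|l|\le C$, altering one sample perturbs the empirical average by a multiple of $C/m$, so McDiarmid together with the symmetrisation lemma yields, with probability at least $1-\delta$ and uniformly over $h\in\mathcal{H}$,
\[
\mathbb{E}(h,{P}_t) \le \mathbb{\hat E}(h,{P}_t) + 2\,{\rm{Rad}}_{m}(\mathcal{G}) + C\sqrt{\tfrac{\log(2/\delta)}{m}},
\]
which reproduces the last parenthesised block.

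For the weighted-source term I would insert the empirical weighted losses for both $w^*$ and $\hat w$ and telescope,
\[
\mathbb{E}(h,{P}^{w^*}_s) = \bigl[\mathbb{E}(h,{P}^{w^*}_s) - \mathbb{\hat E}(h,P_s^{w^*})\bigr] + \bigl[\mathbb{\hat E}(h,P_s^{w^*}) - \mathbb{\hat E}(h,P_s^{\hat w})\bigr] + \mathbb{\hat E}(h,P_s^{\hat w}).
\]
The first bracket is a generalisation gap for the weighted loss class $\mathcal{G}_1 = \{\,w^*(y)\,l(h(x),y):h\in\mathcal{H}\,\}$, whose members are bounded by $C\|w^*\|_\infty$; the same Rademacher/McDiarmid argument, now with range $C\|w^*\|_\infty$ on the $n$ source samples, gives $2\,{\rm{Rad}}_{n}(\mathcal{G}_1) + C\|w^*\|_\infty\sqrt{\log(2/\delta)/n}$ with probability at least $1-\delta$. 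The second bracket is the weight-estimation error; grouping the empirical sum by class label and setting $b_k = \tfrac1n\sum_{i:y_i=k} l(h(x_i),k)$, it equals $\langle w^*-\hat w,\,b\rangle$, so Cauchy--Schwarz bounds it by $\|w^*-\hat w\|_2\,\|b\|_2$. The boundedness $|l|\le C$ forces $\|b\|_2\le C$ (because $b_k\le C\,n_k/n$ and $\sum_k (n_k/n)^2 \le 1$), whence the second bracket is at most $C\|w^*-\hat w\|_2$.

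Finally I would union-bound the two failure events (the source and target concentration inequalities), yielding the claim with probability at least $1-2\delta$, and assemble the pieces scaled by $\beta$ and $1-\beta$. I expect the main obstacle to be the source-side analysis: one must identify the correct class $\mathcal{G}_1$ carrying the factor $\|w^*\|_\infty$ in its range, so that concentration delivers exactly that constant, and---more delicately---swap the $w^*$-weighted empirical loss for the $\hat w$-weighted one at the \emph{$\ell_2$} cost $C\|w^*-\hat w\|_2$ rather than a cruder $\ell_1$ or $\ell_\infty$ bound. This is precisely where the per-class grouping and the estimate $\sum_k (n_k/n)^2\le 1$ are needed to control $\|b\|_2$ by $C$; the target side, by contrast, is routine.
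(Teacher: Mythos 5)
Your proposal follows essentially the same route as the paper's proof: the same $\beta/(1-\beta)$ decomposition of $\mathbb{E}(h,P^{w^*}_m)$, the same telescoping of the weighted-source term into a generalization gap for $\mathcal{G}_1$ plus a weight-estimation error, the same per-class grouping with Cauchy--Schwarz to obtain $C\|w^*-\hat w\|_2$ (your bound $\sum_k (n_k/n)^2\le 1$ is in fact a slightly more careful justification of the paper's claim $\|\tilde l\|_2\le Cn$), and the same union bound over the two concentration events. The argument is correct and no gaps need to be addressed.
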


\begin{proof}
We partition the expectation concerning mixed distribution into the sum of expectations related to the weighted source and target distributions:
\begin{equation}\label{seq19}
\begin{split} 
\mathbb{E}(h,{P}^{w^*}_m)= \beta \mathbb{E}(h,P_s^{w^*}) + (1 - \beta )\mathbb{E}(h,{P_t}).
\end{split}
\end{equation}	
[1] For the expectation of the weighted source distribution, employing principles of addition and subtraction, we obtain
\begin{equation}\label{aeq1}
\begin{split} 
\mathbb{E}(h,P_s^{w^*}) &=\mathbb{\hat E}(h,P_s^{\hat w}) - \mathbb{\hat E}(h,P_s^{\hat w}) + \mathbb{\hat E}(h,P_s^{w^*}) 
\\& \qquad\qquad\quad\,\,\, -\mathbb{\hat E}(h,P_s^{w^*}) +  \mathbb{E}(h,P_s^{w^*})
\\& \le \mathbb{\hat E}(h,P_s^{\hat w}) + \left|\mathbb{\hat E}(h,P_s^{w^*})- \mathbb{\hat E}(h,P_s^{\hat w}) \right|
\\& \qquad\qquad\quad\,\,\,  + \left|\mathbb{E}(h,P_s^{w^*}) -\mathbb{\hat E}(h,P_s^{w^*}) \right|.
\end{split}
\end{equation}
Let us define $\tilde l \in \mathbb{R}^K$ and $\tilde l(j) =\sum\nolimits_{i=1}^{n} \mathds{1}_{y_i=j}l(h(x_i),y_i)$, and it is obvious that $||\tilde l||_1 \le n$ and $||\tilde l||_{\infty} \le n$. Then by Hoelder’s inequality, $||\tilde l||_{2} \le n$ always holds. For the empirical error disparity, we have
\begin{equation}\label{seq20}
\begin{split} 
&\left|\mathbb{\hat E}(h,P_s^{w^*})- \mathbb{\hat E}(h,P_s^{\hat w})\right| 
\\&\qquad\quad\quad = \left|\frac{1}{n}\sum\nolimits_{i=1}^{n} {(w^*(y_i)-{\hat w}(y_i))l(h(x_i),y_i)} \right|
\\&\qquad\quad\quad \le \left|\frac{1}{n}\sum\nolimits_{j=1}^{K}{(w^*(j)-{\hat w}(j)){\tilde l}(j)} \right|
\\&\qquad\quad\quad \le \frac{1}{n}\left\| w^*-{\hat w} \right\|_2 \left\|{\tilde l}(j)\right\|_2
\\&\qquad\quad\quad \le C\left\| w^*-{\hat w} \right\|_2.
\end{split}
\end{equation}	
In accordance with the illustration in Statistical Learning Theory and Chapter 4~\cite{wainwright2019high}, we have
\begin{equation}\label{aeq2}
\begin{split} 
&\left|\mathbb{E}(h,P_s^{w^*}) -\mathbb{\hat E}(h,P_s^{w^*}) \right|
\\&\qquad\qquad\qquad \le 2{\rm{Rad}}_{n}(\mathcal{G}_1) + C\left\| w^*\right\|_{\infty}\sqrt {\frac{\log{\frac{2}{\delta}}}{n}}
\end{split}
\end{equation}	
with probability at least $1-\delta$. Substituting Eq. (\ref{seq20}) and Eq. (\ref{aeq2}) into Eq. (\ref{aeq1}), the following equation holds:
\begin{equation}\label{seq21}
\begin{split} 
\mathbb{E}(h,P_s^{w^*}) \le&  \mathbb{\hat E}(h,P_s^{\hat w}) + C\left\| w^*-\hat w \right\|_2
\\& + 2{\rm{Rad}}_{n}(\mathcal{G}_1) + C\left\| w^*\right\|_{\infty}\sqrt {\frac{\log{\frac{2}{\delta}}}{n}}.
\end{split}
\end{equation}
[2] For the expectation of target distribution, according to the illustration in Statistical Learning Theory and Chapter 4~\cite{wainwright2019high}, we have 
\begin{equation}\label{aeq3}
	\begin{split} 
		&\mathbb{E}(h,P_t) \le \mathbb{\hat E}(h,P_t) + 2{\rm{Rad}}_{m}(\mathcal{G}) + C\sqrt {\frac{\log{\frac{2}{\delta}}}{m}}.
	\end{split}
\end{equation}	
with probability at least $1-\delta$. Substituting Eq. (\ref{seq21}) and Eq. (\ref{aeq3}) into Eq. (\ref{seq19}), the following generalization bound holds with probability at least $1-2\delta$
\begin{equation}
	\begin{split}
		\mathbb{E}(h,{P}^{w^*}_m) &  = \beta \mathbb{E}(h,P_s^{w^*}) + (1 - \beta )\mathbb{E}(h,{P_t})
		\\&\le \beta\mathbb{\hat E}(h,P_s^{\hat w}) +(1 - \beta ) \mathbb{\hat E}(h,{P_t}) + \beta C\left\| w^*-\hat w \right\|_2
		\\& \qquad\,\, + \beta\left( 2 {\rm{Rad}}_{n}(\mathcal{G}_1)+ C\left\| w^*\right\|_{\infty}\sqrt {\frac{\log{\frac{2}{\delta}}}{n}} \right)
		\\& \qquad\,\, + (1-\beta)\left( 2 {\rm{Rad}}_{m}(\mathcal{G}) + C\sqrt {\frac{\log{\frac{2}{\delta}}}{m}} \right).     	
	\end{split}
\end{equation}
\end{proof}

\section{The proposed experience framework} 
Our objective is to determine the optimal target classifier, which is equal to minimize the expectation of the aligned distribution mixture $\mathbb{E}(h,{P}^{w^*}_m)$. By leveraging the theoretical findings outlined earlier, we can establish the optimization goal without Rademacher complexity:
\begin{equation}\label{seq23}
\begin{split} 
\mathop {\min }\limits_{h \in \mathcal{H}} {\beta\mathbb{\hat E}(h,P_s^{\hat w}) +(1 - \beta ) \mathbb{\hat E}(h,{P_t}) + \beta C\left\| w^*-\hat w \right\|_2},
\end{split}
\end{equation}
where $0<\beta<1$ is the tradeoff parameter. As illustrated in Eq. (\ref{eq3}), the unsupervised empirical risk $ \mathbb{\hat E}(h,{P_t})$ is impacted by the unsupervised loss  ${l_u}(h(x);{P_t})$, which is determined by the conditional distribution $P_t(Y|X)$. However, under typical circumstances, the exact value of $P_t(Y|X)$ is unknown. Traditional SSL approaches approximate the true conditional distributions by considering the output of softmax layer in a deep neural network, i.e., $h(x) \approx  P_t(Y|X=x)$. 

Recent studies~\cite{icml/GuoPSW17,tmlr/BohdalYH23} have indicated that contemporary neural networks exhibit calibration issues, leading to mismatches between confidence and accuracy. SSL work~\cite{iclr/SchmutzHM23} shows that overly confident outputs, denoted by $h_i(X)>>{\max}_iP_t(Y_i|X)$, can impede the model's ability to assimilate new knowledge from unlabeled data, potentially resulting in overfitting. Furthermore, overly confident predictions can compromise the integrity of estimated weights with label shift methods. A prevalent technology involves the utilization of post-processing calibration techniques to rectify the model's output. However, these calibration methods typically entail a two-step process, rendering them unsuitable for one-pass semi-supervised learning paradigms. In our study, we employ the $\gamma$-loss for classifier calibration, where the supervised $\gamma$-loss function is articulated as follows:
\begin{equation}\label{seq31}
	\begin{split} 
		l^{\gamma}\left(h(x),y\right) = \frac{\gamma}{\gamma-1}\left({1-{h_y}(x)}^{1-\frac{1}{\gamma}}\right),\,\gamma \in (0,\infty),
	\end{split}
\end{equation}
where $h_y(x)$ represents the $y$-th element of the output $h(x)$. Accordingly, the formulation of the unsupervised loss function employing the $\gamma$-loss can be expressed as:
\begin{equation}\label{seq32}
	\begin{split} 
		l^{\gamma}_u(h(x);P_t) = \frac{\gamma}{\gamma-1}\sum\limits_{j = 1}^{K}{{h}_j(x)\left({1-{h}_j(x)}^{1-\frac{1}{\gamma}}\right)}.
	\end{split}
\end{equation}
We delineate several properties of the $\gamma$-loss and provide a theoretical justification for its efficacy in calibrating the classifier's output.
\begin{theorem}\label{th5} 
Under the definition of $\gamma$-loss, for $\gamma \in (0,\infty)$, we have
\begin{equation}\label{seq33}
	\begin{split} 
		l^{\gamma}\left(h(x),y\right) = \left\{ {\begin{array}{*{20}{c}}
				-\log\left( {h_y}(x)\right),\; \gamma = 1;\qquad\quad\;\;\; \\
				\frac{\gamma}{\gamma-1}\left({1-{h_y}(x) }^{1-\frac{1}{\gamma}}\right),\;\text{others}.
		\end{array}} \right.
	\end{split}
\end{equation}
And the expected $\gamma$-loss an be delineated as follows:
\begin{equation}\label{seq34}
	\begin{split} 
	\mathbb{E}\left[l^{\gamma}\left(h(x),y\right)\right] = \left\{ {\begin{array}{*{20}{c}}
			\mathbb{E}_{XY}\left[-\log(h_y(x))\right],\; \gamma = 1;\qquad \qquad\qquad\\
			\mathbb{E}_{XY}\left[\frac{\gamma}{\gamma-1}\left(1-{h_y(x)}^{1-\frac{1}{\gamma}}\right)\right],\;\text{others}.
	\end{array}} \right.
	\end{split}
\end{equation}
By minimizing the expected $\gamma$-loss, we derive the optimal output $h^{*}(x)$ as depicted below:
\begin{equation}\label{seq35}
	\begin{split} 
	h_y^{*}(x) = \frac{{P(y|x)}^\gamma}{\sum\nolimits_{i} {{P(i|x)}^\gamma} },\quad \forall y\in[K].
	\end{split}
\end{equation}
\end{theorem}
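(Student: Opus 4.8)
The plan is to dispatch the three assertions of Theorem~\ref{th5} in order, with essentially all of the work residing in the final variational problem \eqref{seq35}.

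For the piecewise identity \eqref{seq33}, the only nontrivial case is $\gamma = 1$, where the prefactor $\frac{\gamma}{\gamma-1}$ in \eqref{seq31} is singular, so I would recover the log-loss by a limit. Writing $p = h_y(x)$ and substituting $t = 1 - \frac{1}{\gamma} = \frac{\gamma-1}{\gamma}$, so that $\frac{\gamma}{\gamma-1} = \frac{1}{t}$, the loss becomes $\frac{1}{t}\left(1 - p^{t}\right)$. As $\gamma \to 1$ we have $t \to 0$, and since $\left.\frac{\mathrm{d}}{\mathrm{d}t}\left(1 - p^{t}\right)\right|_{t=0} = -\ln p$, the difference quotient $\frac{1 - p^{t}}{t}$ converges to $-\ln p = -\log h_y(x)$ (equivalently, a one-line L'Hôpital step at $\gamma = 1$). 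This gives the stated piecewise form, and \eqref{seq34} then follows immediately by taking the expectation over $(X,Y)$ and using linearity.

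For the optimizer \eqref{seq35}, I would first reduce the global minimization to a pointwise one. Since the expected $\gamma$-loss can be written as $\int_{\mathcal{X}} P(x)\left(\sum_{y} P(y|x)\, l^{\gamma}(h(x),y)\right)\mathrm{d}x$ with $P(x) \ge 0$, and the integrand at each $x$ depends only on the vector $h(x)$, it suffices to minimize the inner conditional risk $R(q) := \frac{\gamma}{\gamma-1}\sum_{y} p_y\left(1 - q_y^{1-1/\gamma}\right)$ over $q \in \Delta_{K-1}$, where $q = h(x)$ and $p_y = P(y|x)$. Introducing a multiplier $\lambda$ for the constraint $\sum_y q_y = 1$ and differentiating, the decisive simplification is the identity $\frac{\gamma}{\gamma-1}\cdot\left(1 - \frac{1}{\gamma}\right) = 1$, which collapses the stationarity condition to $-p_y q_y^{-1/\gamma} + \lambda = 0$. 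Solving yields $q_y = (p_y/\lambda)^{\gamma}$, and imposing $\sum_y q_y = 1$ fixes $\lambda^{\gamma} = \sum_i p_i^{\gamma}$, so $q_y = p_y^{\gamma}/\sum_i p_i^{\gamma}$, which is exactly \eqref{seq35}; the log-loss case $\gamma = 1$ is recovered as $q_y = p_y$.

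The main obstacle will be certifying that this stationary point is the global minimizer rather than a mere critical point, since both the sign of $\frac{\gamma}{\gamma-1}$ and the exponent $1 - \frac{1}{\gamma}$ flip across $\gamma = 1$. I would settle this by a convexity argument split into two regimes. For $\gamma > 1$ the exponent $\alpha := 1 - \frac{1}{\gamma} \in (0,1)$ makes $q_y \mapsto q_y^{\alpha}$ concave, and since $\frac{\gamma}{\gamma-1} > 0$ the risk $R$ is convex on the simplex; for $\gamma \in (0,1)$ one has $\alpha < 0$, so $q_y \mapsto q_y^{\alpha}$ is convex while $\frac{\gamma}{\gamma-1} < 0$, and again $R$ is convex. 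In both regimes $R$ is convex over the convex set $\Delta_{K-1}$, so the interior stationary point is the unique global minimum, and nonnegativity $q_y \ge 0$ holds automatically because $p_y \ge 0$. A short boundary check, verifying that $R$ does not attain a smaller value as some $q_y \to 0$, rules out spurious minimizers on the relative boundary of the simplex.
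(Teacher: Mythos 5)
Your proposal is correct and follows essentially the same route as the paper: the $\gamma=1$ case via the limit $\lim_{t\to 0}(1-p^{t})/t=-\log p$, and the optimizer via Lagrange multipliers on the simplex constraint, with the normalization fixed by $\sum_y q_y=1$. Your explicit convexity check in the two regimes $\gamma>1$ and $\gamma\in(0,1)$ is a welcome addition, since the paper asserts convexity only for $\gamma=1$ and otherwise relies on KKT stationarity without certifying global optimality.
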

\begin{proof}
	
	[1] Let us examine the scenarios in which $\gamma=1$:
	\begin{equation}\label{seq36}
		\begin{split} 
		l^{1}(h(x),y) &=\mathop {\lim }\limits_{\gamma  \to 1}\frac{\gamma}{\gamma-1}\left({1-h_y(x)}^{1-\frac{1}{\gamma}}\right)
		\\& = -\mathop {\lim }\limits_{\hat \gamma  \to 0} \frac{ h_y(x)^{\hat \gamma}-1}{\hat \gamma}
		 = -\log  h_y(x).
		\end{split}
	\end{equation}
	Subsequently, we aim to minimize the expected 1-loss, leading to the following optimization problem:
	\begin{equation}\label{seq37}
		\begin{split} 
			\left\{ {\begin{array}{*{20}{c}}
					\min -\sum\nolimits_{x,y} {P(x,y)}\log (h_y(x)),\qquad\qquad\quad\;\\
					{s.t.\;\sum\nolimits_y {h_y(x)}  = 1,\;h_y(x) \ge 0,\;\forall y\in[K].}
			\end{array}} \right.
		\end{split}
	\end{equation}
	The convex nature of the optimization problem in Equation (\ref{seq37}) is evident. Thus, the global optimum can be efficiently determined by employing Lagrangian multipliers $\eta$ and $\mu$, defined as follows:
	\begin{equation}\label{seq38}
		\begin{split} 
			\mathcal{L}(h(x),\eta,\mu) &=  -\sum\nolimits_{x,y} {P(x,y)}\log (h_y(x)) 
			\\&\qquad+ \eta(\sum\nolimits_y {h_y(x)}-1) + \sum\nolimits_y \mu h_y(x).
		\end{split}
	\end{equation}
	Upon equating the gradients to zero, we have
	\begin{equation}
		\begin{split} 
			\frac{{\partial	\mathcal{L}(h(x),\eta,\mu)}}{{\partial h_y(x)}} = -\frac{P(x,y)}{{h_y(x)}} + \eta + \mu = 0,
		\end{split}
	\end{equation}
    Upon solving the above equation, we derive the corresponding optimal value as:
	\begin{equation}
		\begin{split} 
			h^*_y(x) = \frac{P(x,y)}{ \eta + \mu}.
		\end{split}
	\end{equation}
	Since $h^*_y(x)$ must satisfy the condition $\sum\nolimits_y {h^*_y(x)}  = 1$, we have $ \eta + \mu = P(x)$, and  the final global optimal value $h^*(x)$ is
	\begin{equation}\label{seq39}
		\begin{split} 
		h_y^{*}(x) = P(y|x),\quad \forall y\in[K].
		\end{split}
	\end{equation}
    [2] For $\gamma \in(0,1)\cup (1,\infty)$, the minimal $\gamma$-loss is
    \begin{equation}\label{seq41}
    	\begin{split} 
    		\left\{ {\begin{array}{*{20}{c}}
    				{\min \mathbb{E}\left[ {{l^\gamma }(h(x),y)} \right] = \min -\sum\nolimits_{x,y}  \frac{\gamma}{\gamma-1}{P(x,y)} h_y(x)^{1 - \frac{1}{\gamma }},}\\
    				{s.t.\;\sum\nolimits_y {h_y(x)}  = 1,\;h_y(x) \ge 0,\;\forall y\in[K].}\qquad\quad\quad\quad\;\;\;\;
    		\end{array}} \right.
    	\end{split}
    \end{equation}
    Similarly, in tackling the aforementioned issue, we construct the Lagrange function with two Lagrange multipliers $\eta$ and $\mu$:
    \begin{equation}
    	\begin{split} 
    		\mathcal{L}(h(x),\eta,\mu) &=  -\sum\nolimits_{x,y}  \frac{\gamma}{\gamma-1}{P(x,y)} h_y(x)^{1 - \frac{1}{\gamma }} 
    		\\&\qquad+ \eta(\sum\nolimits_y {h_y(x)}-1) + \sum\nolimits_y \mu h_y(x),
    	\end{split}
    \end{equation}
    Upon computation, we determine the corresponding optimal value to be: 
    \begin{equation}
    	\begin{split} 
    		h^*_y(x) = \left(\frac{P(x,y)}{ \eta + \mu}\right)^\gamma.
    	\end{split}
    \end{equation}
    By applying the Karush-Kuhn-Tucker~(KKT) conditions, we ascertain the optimal values of the Lagrange multipliers as
    \begin{equation}\label{seq42}
    	\begin{split} 
         \eta + \mu = \sum\nolimits_{i} {{P(x,y_i)}^\gamma} 
    	\end{split}
    \end{equation}
   with the optimal solution as 
   	\begin{equation}\label{seq43}
   		\begin{split} 
        	h_y^{*}(x) = \frac{{P(y|x)}^\gamma}{\sum\nolimits_{i} {{P(i|x)}^\gamma} },\quad \forall y\in[K].
   		\end{split}
   	\end{equation}
   \end{proof}
   Theorem \ref{th5} elucidates that  the parameter $\gamma$ has the ability to 'soften' the outputs, thereby increasing the output entropy with the range of $0< \gamma <1$. As $\gamma \to \infty$, the output $h(x)$ converges towards a singular point mass configuration~($h_y(x)=1$ and $h_i(x)=0$, $\forall i \ne y$). When $\gamma =1$, the original probability distribution is restored. In the limit as $\gamma \to 0$, the output $h(x)$ gravitates towards $1/K$, indicating maximum uncertainty. Notably, the $\gamma$-loss exhibits similarities to the post-processing calibration technique known as temperature scaling~\cite{icml/GuoPSW17}, which has demonstrated remarkable efficacy in enhancing prediction calibration. Consequently, the $\gamma$-loss serves as an effective mechanism for calibrating the outputs of classifiers.
   
   Upon establishing the loss function, we revisit the theoretical ADM framework elucidated in Eq. (\ref{seq23}). In practical applications, the target label distribution is often unknown, leading to ambiguity regarding the true weight $w^*$. Within the ADM framework, we employ two distinct strategies for estimating the true weight. The first strategy involves the two-step approaches that integrate traditional techniques, whereas the second strategy involves a one-step approach grounded in bi-level optimization principles.

   \subsection{Two-step Approaches}
   In addressing the aforementioned issue, we employ established label shift methods~(such as BBSE, RLLS, MLLS and SCML) to approximate the true weight, denoted as ${\hat w}^*$. Consequently, the theoretical loss function Eq.~(\ref{seq23}) undergoes a transformation, yielding the subsequent expression:
   \begin{equation}\label{seq44}
   	\begin{split}
   				 \mathop {\min }\limits_{h} \beta\mathbb{\hat E}(h,P_s^{{\hat w}^*}) +(1 - \beta ) \mathbb{\hat E}(h,{P_t}),
   	\end{split}
   \end{equation}
   which is equal to 
   \begin{equation}\label{seq45}
   	\begin{split}
   		\mathop {\min }\limits_{h} \frac{\beta}{n}\sum\limits_{i = 1}^n {{\hat w}^*({y_i})l^{\gamma}(h({x_i}),{y_i})} + \frac{1 - \beta }{m}\sum\limits_{j = n+1}^{n+m}	l^{\gamma}_u(h(x_j);P_t).
   	\end{split}
   \end{equation}
   Upon deriving the optimal target classifier, predictions can be generated for every dataset adhering to the label distribution. The primary steps of our proposed two-step approaches are succinctly outlined in Algorithm \ref{alg1}.
   \begin{algorithm}[ht]
   	\caption{Procedure of two-step approaches}
   	\label{alg1}
   	\begin{algorithmic}
   		\STATE \textbf{Input}: The labeled source samples $\{x_i,y_i\}_{i=1}^{n}$, unlabeled target samples $\{x_i\}_{i=n+1}^{n+m}$ and test target set $X_{te}$, tradeoff parameter $\beta$ and 'soften' parameter $\gamma$.
   		\STATE \textbf{Initialize}: Initialize network parameters $h_0$.
   		\STATE \qquad Use traditional label shift method (such as BBSE, RLLS, MLLS and SCML) to estimate importance weight ${\hat w}^*$;
   		\STATE \textbf{while} not converged \textbf{do} 		
   		\STATE \qquad Update the network parameters $h$ via stochastic gradient descent by solving Eq. (\ref{seq45});
   		\STATE \textbf{end}
   		\STATE \qquad Use the ultimate classifier to make predictions $Y_{te}$ on the test set $X_{te}$;
   		\STATE \textbf{Output}: The ultimate classifier and the predicted labels $Y_{te}$.		
   		\STATE \textbf{End procedure}
   	\end{algorithmic}
   \end{algorithm}
   
   Algorithm \ref{th1} illustrates that the two-step approaches separate the crucial weight estimation on mixed distributions from the  target classifier training. This segregation not only guarantees the robustness of weight estimation, but also raises a pertinent query: the weight estimation process hinges on the classifier predictions for the target data, and conversely, the estimated weights exert influence on the classifier training. Is it plausible to concurrently address both these facets to enhance the model's efficacy? Consequently, in the subsequent subsection, we introduce a one-step approach to tackle the aforementioned challenge.
   \subsection{One-step Approach}
   \subsubsection{Algorithm}
   Traditional label shift methods do not incorporate the target samples during classifier training; however, they necessitate the predictions of the target samples to estimate the importance weights. Therefore, they inherently operate as two-step procedures. In our theoretical framework, the inclusion of target samples in the training phase presents an opportunity to train the target classifier using a single loss function.
   
   Let us consider the case where $\gamma = 1$. According to Theorem \ref{th3}, upon acquiring the optimal weight $w^*$, the optimal target classifier $h_t$ can be derived by minimizing Eq. (\ref{seq23}). Subsequently, in accordance with Theorem \ref{th5}, the predictions for target samples yield $h_t(x) = P_t(Y|x), \forall x \sim  P_t$. Concurrently, guided by the definition of $w^*$ in Theorem \ref{th3}, we have
   \begin{equation}
   \begin{split} 
   	{w^*} = \frac{{{P_t}(Y)}}{{{P_s}(Y)}} = \frac{{\sum\nolimits_{i = n + 1}^{n + m} {{P_t}(Y|{x_i})} }}{{m{P_s}(Y)}} = \frac{{\sum\nolimits_{i = n + 1}^{n + m} {{h_t}({x_i})} }}{{m{P_s}(Y)}}.
   \end{split}   
   \end{equation}
   In conclusion, based on the theoretical loss function Eq. (\ref{seq23}), we derive the following one-step loss formulation: 
    \begin{equation}\label{seq46}
    	\begin{split} 
    	\left\{ {\begin{array}{*{20}{c}}
    				{	\mathop {\min }\limits_{h} \mathcal{L}(h,\hat w) = \frac{\beta}{n}\sum\limits_{i = 1}^n {{\hat w}({y_i})l^{\gamma}(h({x_i}),{y_i})}} \qquad\qquad\qquad\qquad\qquad
    					\\{\qquad\qquad\qquad\qquad+ \frac{1 - \beta }{m}\sum\limits_{j = n+1}^{n+m}	l^{\gamma}_u(h(x_j);P_t)},\\
    				{s.t.\;\hat w = \arg {{\min }_{w\succeq0, w^{\rm T}{P_s}(Y)=1}}\left\|w - \frac{{\sum\nolimits_{i = n + 1}^{n + m} {{h}({x_i})} }}{{m{P_s}(Y)}}\right\|_F^2,\;\;\;\;}
    		\end{array}} \right.
    	\end{split}
    \end{equation}
    where $P_s(Y)$ is the source label distribution and determined by source samples, i.e. $P_s(Y=j) = \frac{1}{n}\mathbbm{1}\sum\nolimits_{i = 1}^{n}\{y_i=j\}, \forall j\in[K]$. 
    
    The proposed one-step approach manifests as a comprehensive hierarchical optimization conundrum, wherein model training and weight estimation represent distinct optimization levels. Diverging from the two-step strategies, the one-step approach embodies a notable advantage. Specifically, it facilitates explicit modeling and optimization of the interplay between the estimated weights $\hat w$ and the model parameters $h$ via an implicit gradient~(IG)-based optimization framework~\cite{pami/LiuGZML22}. Here, IG denotes the gradient of lower-level solution $\hat w$ in relation to the upper-level variable $h$. The integration of IG sets the one-step approach apart from the two-step approaches as the former jointly optimizes the upper-level and lower-level variables, in contrast to the latter that optimizes them independently~(minimizes final loss by fixing the weight). However, the reliance on IG makes the weight estimation more dependent on the classifier performance, potentially yielding less stable outcomes compared to the two-step methods.Subsequently, we delineate the methodology for solving the optimization problem specified in Eq. (\ref{seq46}).
    
    \subsubsection{Optimization}  In the context of gradient descent, the gradient of the objective function in Eq. (\ref{seq46}) can be expressed as: 
    \begin{equation}
    	\begin{split} 
    		\nabla\mathcal{L}(h,\hat w) =  {\nabla _h}\mathcal{L}(h,\hat w) + \underbrace{\frac{{\text{d}\hat w^{\rm T}}}{{\text{d}h}}}_{\text{IG}}{\nabla _{\hat w}}\mathcal{L}(h,\hat w),
    	\end{split}
    \end{equation}
    where $\nabla _h$ and $\nabla _{\hat w}$ denote the partial derivatives of the bi-variate function. Traditional bi-level methods derive the IG formula through the rigorous implicit function theory. Nevertheless, the calculation of IG poses inherent challenges, primarily stemming from the complexities associated with matrix inversions, second-order partial derivatives, and constraints. In the present study, it becomes apparent that the optimal solution $\tilde w$ of the unconstrained lower-level loss can be succinctly expressed as:
    \begin{equation}
    	\begin{split} 
    	  {\tilde w} = \frac{{\sum\nolimits_{i = n + 1}^{n + m} {{h}({x_i})} }}{{m{P_s}(Y)}}.
    	\end{split}
    \end{equation}
     Given that for every $x\in\mathcal{X}$, the output $h(x)$ satisfies the stipulations: $h(x)\succeq 0$ and $\sum\nolimits_{j =1}^{K} h_j(x)=1$, alongside $P_s(Y) \succ 0$, we can deduce
\begin{equation}
     	\begin{split} 
     \left\{ \begin{array}{l}
     	{\tilde w} \succeq 0,\\
     	{\tilde w}^{\rm T}{P_s}(Y) = \frac{1}{m}\sum\nolimits_{j =1}^{K}{\sum\nolimits_{i = n + 1}^{n + m} {{h_j}({x_i})} }=1.
     \end{array} \right.
    \end{split}
\end{equation}
Hence, we can derive the optimal solution $\hat w$ for the constrained lower-level loss:
\begin{equation}\label{seq63}
	\begin{split} 
		\hat w = {\tilde w} = \frac{{\sum\nolimits_{i = n + 1}^{n + m} {{h}({x_i})} }}{{m{P_s}(Y)}}.
	\end{split}
\end{equation}
We summarize the main procedure of our proposed two-step approaches in Algorithm \ref{alg2}.
   \begin{algorithm}[ht]
	\caption{Procedure of one-step approach}
	\label{alg2}
	\begin{algorithmic}
		\STATE \textbf{Input}: The labeled source samples $\{x_i,y_i\}_{i=1}^{n}$, unlabeled target samples $\{x_i\}_{i=n+1}^{n+m}$ and test target set $X_{te}$, tradeoff parameter $\beta$, upper-level learning rate $\upsilon $ and 'soften' parameter $\gamma$.
		\STATE \textbf{Initialize}: Initialize network parameters $h_0$ and weight $\hat w_0=\mathbf{1}$.
		\STATE \qquad Use the source and target samples to train a classifier $\hat h_0$ at the fixed weight $\hat w_0$; 
		\STATE \textbf{for}  Iteration $j = 1, . . . ,T$ do	
		\STATE \qquad \textbf{Lower-level:} Update the weight $\hat w_j$ by optimizing:
		\[\hat w_j =  \frac{{\sum\nolimits_{i = n + 1}^{n + m} {{\hat h}_{j-1}({x_i})} }}{{m{P_s}(Y)}};\]	
		\STATE \qquad \textbf{Upper-level:} Update the network parameters $\hat h_j$ via stochastic gradient descent calling:
		\[ \left. {\hat h_j = \hat h_{j-1} -\upsilon\left(  {\nabla _h}\mathcal{L}(h,\hat w) + {\frac{{\text{d}\hat w^{\rm T}}}{{\text{d}h}}}{\nabla _{\hat w}}\mathcal{L}(h,\hat w)  \right)} \right|_{\scriptstyle {\hat w=\hat w_j}\hfill\atop
			\scriptstyle {h=\hat h_{j-1}}\hfill};\]
		\STATE \textbf{end for}
		\STATE \qquad Use the updated classifier $\hat h_T$ to make predictions $Y_{te}$ on the test set $X_{te}$;
		\STATE \textbf{Output}: The ultimate classifier $\hat h_T$ and the predicted labels $Y_{te}$.		
		\STATE \textbf{End procedure}
	\end{algorithmic}
\end{algorithm}

\section{Experiments} 
In this section, we evaluate the performance of our ADM framework in comparison to other closely related methods in different aspects. There are totally three groups of experiments. Firstly, we compare the traditional label shift methods with their two-step extended counterparts and one-step approach under various shift scenarios. Subsequently, the second set of experiments delves into comprehensive analyses of ADM, including assessments of the deviation of importance weight estimation, examination of the impact of labeled and unlabeled data volumes, exploration of various parameter configurations and visualization of convergence curves. To culminate, we apply our approach to the application of COVID-19 diagnosis. Before delving into the specifics, we provide an overview of the used datasets, the comparative methodologies, network architecture details, and the parameter configurations.
\subsection{Configuration}
\textbf{Datasets.} We conduct a comprehensive evaluation of the performance and efficacy of ADM framework utilizing diverse datasets with numerous artificial shifts, including MNIST~\cite{lecun1998gradient}, Fasion MNIST~\cite{xiao2017fashion}, USPS~\cite{291440}, CIFAR10~\cite{krizhevsky2009learning} and CIFAR100~\cite{krizhevsky2009learning}. These datasets encompass varying numbers of categories, ranging from 10 to 100. In our experimental setup, each dataset is randomly divided into two balanced parts, representing the source and target domains correspondingly. Subsequently, we introduce two distinct types of shifts in our experiments: (1) Tweak-One shift, which makes the probability of a certain source class change to $\rho$, and the probability of other source classes keep the same ratio. (2) Dirichlet shift, which generates a Dirichlet distribution by the concentration parameter $\alpha$ and makes the source label scale consistent with it. We then sample the source and target data based on the aforementioned proportions, thereby establishing the source and target sets respectively. It is worth noting that to ensure the effectiveness of training process, each class is represented by a minimum of 30 data points within the shift set. The subsequent part provides a detailed exposition on the distinctive characteristics of the datasets utilized in our study.
\begin{itemize}
	\item MNIST\footnote{http://yann.lecun.com/exdb/mnist/}: A dataset consisting of 10 variations~('0' to '9') of handwritten Arabic numerals. The images are grayscale pictures with a resolution of 28x28 pixels. For our analysis, 1000, 10000 and 5000 samples are used as source, target and test sets respectively. 
	\item Fasion MNIST\footnote{https://www.kaggle.com/zalando-research/fashionmnist}: A dataset consisting of 10 types of fashion items: shirt, T-shirt, pullover, dress, coat, trouser, bag, sandals, sneaker, and ankle boots. For our analysis, 2000, 20000 and 5000 samples are used as source, target and test sets respectively. 
	\item USPS\footnote{https://www.csie.ntu.edu.tw/~cjlin/libsvmtools/datasets}:  A dataset consisting of 10 numeric categories, which is similar to MNIST. The images are grayscale pictures with a resolution of 16x16 pixels. For our analysis, 1000, 3000 and 1000 samples are used as source, target and test sets respectively. 
	\item CIFAR10\footnote{https://www.cs.toronto.edu/~kriz/cifar.html}: A dataset consisting of colored images of 10 items: airplane, cat, deer, dog, automobile, frog, bird, ship, horse and truck. For our analysis, 2000, 20000 and 5000 samples are used as source, target and test sets respectively. 
	\item CIFAR100\footnote{https://www.cs.toronto.edu/~kriz/cifar.html}: A dataset consisting of colored images of 100 items. These images have a resolution of 64x64 pixels. For our analysis, 5000, 20000 and 5000 samples are used as source, target and test sets respectively. 
\end{itemize}

\noindent\textbf{Methods.} In the primary experimental section, we demonstrate the efficacy of ADM framework through a comparative analysis involving multiple methods and their respective variants. 
\begin{itemize}
	\item WW illustrates the performance of the base classifier in the absence of the estimated importance weights.
	\item CSSL~(2022)~\cite{aminian2022information}  is a semi-supervised method adept at mitigating the challenges posed by covariate shift scenario.
	\item  BBSE~(2018)~\cite{lipton2018detecting} and RLLS~(2019)~\cite{iclr/Azizzadenesheli19} are two label shift methods founded on hard and soft confusion matrices, respectively. On the other hand, MLLS~(2020)~\cite{nips/GargWBL20} and SCML~(2022)~\cite{wacv/SipkaSM22} are  two additional label shift methods that incorporate classifier calibration. 
	\item ADM-BBSE, ADM-RLLS, ADM-MLLS and ADM-SCML represent four two-step variations aimed at showcasing the efficacy of the ADM framework, where the base methods align with the nomenclature suffix. Furthermore, ADM-OS is the one-step approach of our ADM framework. 
	\item AvgImp denotes the average enhancement in performance achieved by both the two-step and one-step approaches across all label shift methods. 
\end{itemize}
 
\noindent\textbf{Network architecture and evaluation indicators.} All methods exhibit the flexibility to utilize any classifier as the foundational model for weight estimation. Moreover, under unchanged conditions, a more precise base~(source) classifier results in more accurate weight estimations and subsequent weighted classifier. In our experimental setup, we utilize a two-layer fully connected neural network for the MNIST, Fasion MNIST and USPS datasets, while employing ResNet18 for the CIFAR10 and CIFAR100 datasets. Each distribution parameter is randomly sampled 10 times for every shift type to assess the Accuracy~(Acc) and Mean Square Error~(MSE) of the estimated weights. Specifically, MSE is defined as:
\begin{equation}\label{eq39}
	\begin{split}
		\text{MSE}(\hat w) = \frac{1}{\text{len}(\hat w)}\left\|\frac{\hat w}{ p_s(Y)} - \frac{p_t(Y)}{p_s(Y)}\right\|^2.
	\end{split}
\end{equation}

\noindent\textbf{Parameter setting.} For the comparison methods, all parameters are selected in accordance with the strategies outlined in their respective references. In our study, the balance parameter, denoted as ${(1-\beta) \mathord{\left/{\vphantom {(1-\beta) \beta}} \right.\kern-\nulldelimiterspace} \beta}$, is chosen from the discrete set $[0.01,0.05,0.1,0.5,1]$, while the calibration parameter $\lambda$ is selected from the discrete set $[0.8,0.9,1,1.5,2]$. Moreover, we choose the values of shift parameter $\alpha$ from the set $[0.1,0.5,1,5]$(smaller values of $\alpha$ result in more extreme label shift), and select the values of shift parameter $\rho$ from the set $[0.3,0.5,0.7,0.9]$(larger values of $\rho$ result in more extreme label shift).

\begin{table*}[!htbp]
	\caption{Acc performance(mean(std)) comparison on Dirichlet and Tweak-One shift datasets. Improvements of ADM framework are boldfaced.}
	\label{Tab1}
	\centering
	\setlength{\tabcolsep}{3.5pt}
	\renewcommand\arraystretch{1.1}
	\begin{tabular}{c|c|cccc|cccc}
		\toprule[1.5pt]
		\midrule[0.75pt]
		Dataset &Methods &{$\alpha = 0.1$}&{$\alpha = 0.5$} &{$\alpha = 1$} &{$\alpha = 5$} &{$\rho = 0.3$}&{$\rho = 0.5$} &{$\rho = 0.7$} &{$\rho = 0.9$}	\\
		\midrule[0.75pt]
		\multirow{12}{*}{MNIST}
		&WW       & 0.7621(.0079) & 0.7897(.0068) & 0.8277(.0089) & 0.8592(.0038) & 0.8604(.0021) & 0.8402(.0065) & 0.8127(.0040) & 0.6436(.0299) \\
		&CSSL     & 0.7990(.0075) & 0.8222(.0062) & 0.8492(.0070) & 0.8709(.0037) & 0.8714(.0013) & 0.8578(.0061) & 0.8381(.0035) & 0.7453(.0158) \\
		&BBSE     & 0.8212(.0046) & 0.8429(.0023) & 0.8530(.0037) & 0.8688(.0041) & 0.8628(.0013) & 0.8565(.0043) & 0.8455(.0043) & 0.7195(.0256) \\
		&ADM-BBSE & 0.8306(.0052) & 0.8514(.0045) & 0.8654(.0048) & 0.8800(.0038) & 0.8720(.0014) & 0.8660(.0046) & 0.8557(.0036) & 0.7382(.0221) \\
		&RLLS     & 0.8237(.0056) & 0.8418(.0037) & 0.8567(.0046) & 0.8663(.0037) & 0.8645(.0022) & 0.8541(.0053) & 0.8478(.0029) & 0.7494(.0031) \\
		&ADM-RLLS & 0.8317(.0058) & 0.8525(.0026) & 0.8654(.0046) & 0.8789(.0042) & 0.8733(.0015) & 0.8665(.0052) & 0.8545(.0033) & 0.7808(.0077) \\
		&MLLS     & 0.8278(.0066) & 0.8480(.0025) & 0.8618(.0016) & 0.8689(.0032) & 0.8648(.0022) & 0.8542(.0053) & 0.8491(.0036) & 0.7391(.0223) \\
		&ADM-MLLS & 0.8369(.0054) & 0.8578(.0032) & 0.8693(.0026) & 0.8792(.0028) & 0.8733(.0017) & 0.8673(.0047) & 0.8559(.0036) & 0.7550(.0243) \\
		&SCML     & 0.8141(.0075) & 0.8449(.0048) & 0.8614(.0029) & 0.8678(.0034) & 0.8651(.0022) & 0.8548(.0055) & 0.8416(.0023) & 0.7003(.0396) \\
	    &ADM-SCML & 0.8299(.0066) & 0.8535(.0033) & 0.8680(.0032) & 0.8821(.0034) & 0.8759(.0016) & 0.8636(.0057) & 0.8509(.0030) & 0.7414(.0393) \\
	    &ADM-OS   & 0.8346(.0059) & 0.8544(.0029) & 0.8673(.0033) & 0.8794(.0037) & 0.8748(.0020) & 0.8640(.0051) & 0.8532(.0039) & 0.7862(.0067) \\
	    &AvgImp   & \textbf{.0106}/\textbf{.0129} & \textbf{.0094}/\textbf{.0120}  & \textbf{.0088}/\textbf{.0091} &  \textbf{.0141}/\textbf{.0114} 
	              & \textbf{.0093}/\textbf{.0105} & \textbf{.0110}/\textbf{.0091}  & \textbf{.0083}/\textbf{.0072} &  \textbf{.0267}/\textbf{.0590} \\
		\midrule[0.75pt]
		\multirow{12}{*}{\makecell[c]{Fasion\\MNIST}}
		&WW       & 0.6436(.0297) & 0.7088(.0156) & 0.7632(.0167) & 0.8036(.0039) & 0.8051(.0028) & 0.7885(.0055) & 0.7655(.0084) & 0.6367(.0093) \\
		&CSSL     & 0.7285(.0175) & 0.7482(.0145) & 0.7754(.0154) & 0.8082(.0026) & 0.8137(.0025) & 0.8063(.0038) & 0.7936(.0034) & 0.7039(.0144) \\
		&BBSE     & 0.7073(.0294) & 0.7337(.0219) & 0.7842(.0191) & 0.8073(.0033) & 0.8056(.0016) & 0.8018(.0064) & 0.7869(.0055) & 0.7167(.0164) \\
		&ADM-BBSE & 0.7146(.0309) & 0.7422(.0229) & 0.7896(.0185) & 0.8104(.0032) & 0.8105(.0028) & 0.8086(.0043) & 0.7998(.0048) & 0.7481(.0227) \\
		&RLLS     & 0.7277(.0147) & 0.7354(.0223) & 0.7836(.0192) & 0.8087(.0033) & 0.8069(.0018) & 0.7993(.0053) & 0.7893(.0041) & 0.7322(.0192) \\
		&ADM-RLLS & 0.7562(.0105) & 0.7670(.0232) & 0.7935(.0129) & 0.8120(.0037) & 0.8117(.0016) & 0.8075(.0043) & 0.7961(.0033) & 0.7752(.0191) \\
		&MLLS     & 0.7418(.0081) & 0.7659(.0077) & 0.7868(.0083) & 0.8081(.0038) & 0.8069(.0022) & 0.8025(.0052) & 0.7938(.0031) & 0.7541(.0072) \\
		&ADM-MLLS & 0.7525(.0080) & 0.7824(.0156) & 0.8006(.0077) & 0.8155(.0031) & 0.8112(.0020) & 0.8095(.0047) & 0.7995(.0034) & 0.7629(.0083) \\
		&SCML     & 0.7086(.0294) & 0.7535(.0184) & 0.7725(.0181) & 0.8041(.0039) & 0.8070(.0023) & 0.7950(.0183) & 0.7735(.0024) & 0.7263(.0034) \\
		&ADM-SCML & 0.7162(.0298) & 0.7673(.0140) & 0.7861(.0179) & 0.8101(.0031) & 0.8107(.0022) & 0.8089(.0200) & 0.7859(.0114) & 0.7669(.0039) \\
		&ADM-OS   & 0.7618(.0116) & 0.7884(.0388) & 0.7938(.0242) & 0.8128(.0063) & 0.8147(.0024) & 0.8158(.0371) & 0.8005(.0028) & 0.7736(.0068) \\
		&AvgImp   & \textbf{.0135}/\textbf{.0405} & \textbf{.0176}/\textbf{.0413}  & \textbf{.0107}/\textbf{.0120} &  \textbf{.0050}/\textbf{.0058}       
		          & \textbf{.0044}/\textbf{.0081} & \textbf{.0090}/\textbf{.0162}  & \textbf{.0095}/\textbf{.0146} &  \textbf{.0310}/\textbf{.0413} \\
		\midrule[0.75pt]
		\multirow{12}{*}{USPS}
		&WW       & 0.7436(.0293) & 0.7790(.0116) & 0.8028(.0112) & 0.8831(.0057) & 0.8914(.0040) & 0.8316(.0059) & 0.7001(.0125) & 0.1564(.0019) \\
		&CSSL     & 0.8223(.0264) & 0.8328(.0023) & 0.8657(.0104) & 0.9079(.0031) & 0.9087(.0043) & 0.8700(.0032) & 0.8035(.0077) & 0.1206(.0435) \\
		&BBSE     & 0.7946(.0239) & 0.8378(.0233) & 0.8641(.0274) & 0.9036(.0031) & 0.9014(.0028) & 0.8751(.0018) & 0.7211(.0424) & 0.1798(.0256) \\
		&ADM-BBSE & 0.8365(.0197) & 0.8589(.0249) & 0.8729(.0287) & 0.9095(.0019) & 0.9090(.0025) & 0.8892(.0025) & 0.7663(.0345) & 0.2089(.0292) \\
		&RLLS     & 0.8147(.0232) & 0.8373(.0254) & 0.8841(.0113) & 0.9028(.0025) & 0.9031(.0031) & 0.8788(.0038) & 0.7339(.0471) & 0.2249(.0470) \\
		&ADM-RLLS & 0.8357(.0282) & 0.8584(.0241) & 0.8961(.0095) & 0.9096(.0024) & 0.9095(.0028) & 0.8881(.0028) & 0.7848(.0529) & 0.2606(.0536) \\
		&MLLS     & 0.7778(.0255) & 0.8038(.0516) & 0.8925(.0088) & 0.9014(.0027) & 0.9014(.0036) & 0.8659(.0043) & 0.6682(.0545) & 0.2786(.0349) \\
		&ADM-MLLS & 0.8212(.0402) & 0.8563(.0138) & 0.9011(.0081) & 0.9090(.0019) & 0.9086(.0029) & 0.8822(.0022) & 0.7229(.0759) & 0.3386(.0521) \\
		&SCML     & 0.7657(.0376) & 0.8118(.0360) & 0.8968(.0054) & 0.9041(.0031) & 0.9005(.0034) & 0.8703(.0028) & 0.6532(.0276) & 0.3617(.0585) \\
		&ADM-SCML & 0.8026(.0534) & 0.8463(.0256) & 0.9043(.0055) & 0.9111(.0029) & 0.9093(.0030) & 0.8854(.0015) & 0.6918(.0207) & 0.4007(.0443) \\
		&ADM-OS   & 0.8747(.0064) & 0.8950(.0031) & 0.9123(.0038) & 0.9168(.0026) & 0.9178(.0025) & 0.9004(.0022) & 0.8753(.0019) & 0.7437(.0109) \\
		&AvgImp   & \textbf{.0358}/\textbf{.0865} & \textbf{.0282}/\textbf{.0723}  & \textbf{.0092}/\textbf{.0279} &  \textbf{.0068}/\textbf{.0148}       
		          & \textbf{.0075}/\textbf{.0162} & \textbf{.0137}/\textbf{.0279}  & \textbf{.0474}/\textbf{.1812} &  \textbf{.0410}/\textbf{.4825} \\
		\midrule[0.75pt]
		\multirow{12}{*}{CIFAR10}
		&WW       & 0.3502(.0399) & 0.4908(.0293) & 0.5265(.0323) & 0.5596(.0176) & 0.5671(.0059) & 0.5482(.0192) & 0.4763(.0017) & 0.2414(.0598) \\
		&CSSL     & 0.4785(.0338) & 0.5749(.0264) & 0.5936(.0204) & 0.5982(.0075) & 0.6113(.0046) & 0.5824(.0141) & 0.5137(.0301) & 0.3033(.0316) \\
		&BBSE     & 0.3729(.0520) & 0.5391(.0339) & 0.5789(.0146) & 0.6021(.0068) & 0.5916(.0059) & 0.5680(.0160) & 0.5099(.0285) & 0.3143(.0413) \\
		&ADM-BBSE & 0.4327(.0476) & 0.5593(.0197) & 0.5994(.0139) & 0.6314(.0035) & 0.6120(.0048) & 0.5939(.0145) & 0.5449(.0214) & 0.3313(.0343) \\
		&RLLS     & 0.3759(.0456) & 0.5537(.0232) & 0.5861(.0135) & 0.6150(.0058) & 0.5767(.0079) & 0.5742(.0209) & 0.5038(.0246) & 0.3250(.0163) \\
		&ADM-RLLS & 0.4189(.0355) & 0.5836(.0282) & 0.6041(.0066) & 0.6463(.0048) & 0.6143(.0049) & 0.5904(.0153) & 0.5286(.0191) & 0.3769(.0154) \\
		&MLLS     & 0.3907(.0280) & 0.5567(.0255) & 0.5887(.0091) & 0.5903(.0069) & 0.5973(.0076) & 0.5540(.0204) & 0.4975(.0354) & 0.2949(.0446) \\
		&ADM-MLLS & 0.4218(.0172) & 0.5821(.0125) & 0.6023(.0040) & 0.6114(.0035) & 0.6052(.0065) & 0.5871(.0178) & 0.5201(.0381) & 0.3266(.0486) \\
		&SCML     & 0.3564(.0462) & 0.5477(.0176) & 0.5755(.0083) & 0.5884(.0058) & 0.5931(.0074) & 0.5842(.0207) & 0.4449(.0573) & 0.2384(.0592) \\
		&ADM-SCML & 0.3955(.0490) & 0.5626(.0234) & 0.5989(.0078) & 0.6101(.0057) & 0.6207(.0049) & 0.6101(.0160) & 0.4828(.0324) & 0.2761(.0487) \\
		&ADM-OS   & 0.5081(.0388) & 0.5793(.0248) & 0.6124(.0053) & 0.6237(.0046) & 0.6065(.0043) & 0.5949(.0141) & 0.5294(.0208) & 0.3582(.0367) \\
		&AvgImp   & \textbf{.0433}/\textbf{.1341} & \textbf{.0226}/\textbf{.0300}  & \textbf{.0189}/\textbf{.0301} &  \textbf{.0235}/\textbf{.0248}       
        & \textbf{.0234}/\textbf{.0168} & \textbf{.0252}/\textbf{.0248}  & \textbf{.0301}/\textbf{.0404} &  \textbf{.0346}/\textbf{.0651} \\
		\midrule[0.75pt]
		\multirow{12}{*}{CIFAR100}
		&WW       & 0.1631(.0223) & 0.1845(.0174) & 0.2143(.0211) & 0.3335(.0114) & 0.3094(.0191) & 0.2001(.0173) & 0.1876(.0207) & 0.1849(.0185) \\
		&CSSL     & 0.1948(.0176) & 0.2071(.0164) & 0.2265(.0146) & 0.3620(.0108) & 0.3318(.0226) & 0.2301(.0158) & 0.2052(.0152) & 0.2073(.0115) \\
		&BBSE     & 0.1871(.0246) & 0.2032(.0143) & 0.2665(.0078) & 0.3595(.0101) & 0.3383(.0093) & 0.2175(.0151) & 0.1978(.0147) & 0.1582(.0244) \\
		&ADM-BBSE & 0.1984(.0218) & 0.2172(.0155) & 0.2853(.0108) & 0.3794(.0125) & 0.3504(.0102) & 0.2333(.0138) & 0.2103(.0144) & 0.1687(.0211) \\
		&RLLS     & 0.1898(.0201) & 0.1937(.0108) & 0.2713(.0097) & 0.3737(.0068) & 0.3476(.0101) & 0.2028(.0154) & 0.1974(.0201) & 0.1884(.0207) \\
		&ADM-RLLS & 0.1992(.0180) & 0.2149(.0258) & 0.2937(.0087) & 0.3817(.0087) & 0.3530(.0097) & 0.2265(.0171) & 0.2219(.0121) & 0.2043(.0141) \\
		&MLLS     & 0.1776(.0140) & 0.2092(.0266) & 0.2660(.0049) & 0.3732(.0053) & 0.3515(.0114) & 0.2243(.0123) & 0.2089(.0075) & 0.1931(.0108) \\
		&ADM-MLLS & 0.1920(.0144) & 0.2224(.0189) & 0.2848(.0064) & 0.3833(.0065) & 0.3631(.0087) & 0.2602(.0116) & 0.2383(.0101) & 0.2131(.0096) \\
		&SCML     & 0.1581(.0174) & 0.1982(.0202) & 0.2558(.0086) & 0.3732(.0062) & 0.3567(.0127) & 0.1989(.0155) & 0.2110(.0168) & 0.2082(.0191) \\
		&ADM-SCML & 0.1704(.0136) & 0.2098(.0154) & 0.2715(.0074) & 0.3848(.0085) & 0.3726(.0111) & 0.2096(.0145) & 0.2204(.0155) & 0.2176(.0184) \\
		&ADM-OS   & 0.0649(.0080) & 0.1233(.0133) & 0.1810(.0172) & 0.3116(.0169) & 0.2423(.0292) & 0.1288(.0230) & 0.0734(.0118) & 0.0632(.0109) \\
		&AvgImp   & \textbf{.0133}/\textbf{-.1118} & \textbf{.0150}/\textbf{-.0778}  & \textbf{.0189}/\textbf{-.0839} &  \textbf{.0124}/\textbf{-.0583}       
        & \textbf{.0112}/\textbf{-.1062} & \textbf{.0215}/\textbf{-.0821}  & \textbf{.0189}/\textbf{-.1304} &  \textbf{.0390}/\textbf{-.1237} \\
		\midrule[0.75pt]
		\bottomrule[1.5pt]
	\end{tabular}
\end{table*}

\begin{figure*}[tbp] \label{fig4}
	\centering
	\subfigure[MNIST($\alpha$)]{
		\includegraphics[width=0.24\textwidth]{./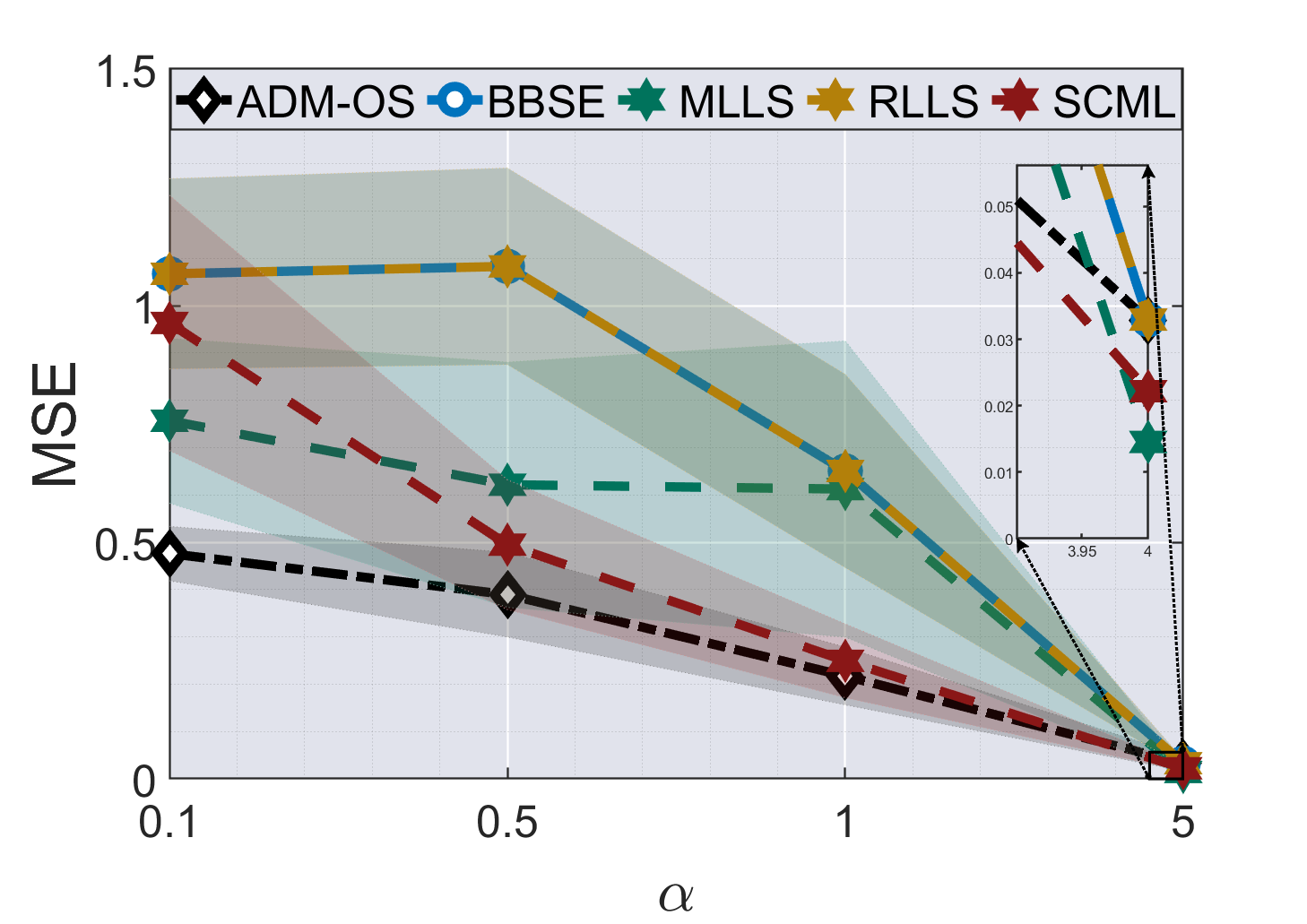}}
	\subfigure[MNIST($\rho$)]{
		\includegraphics[width=0.24\textwidth]{./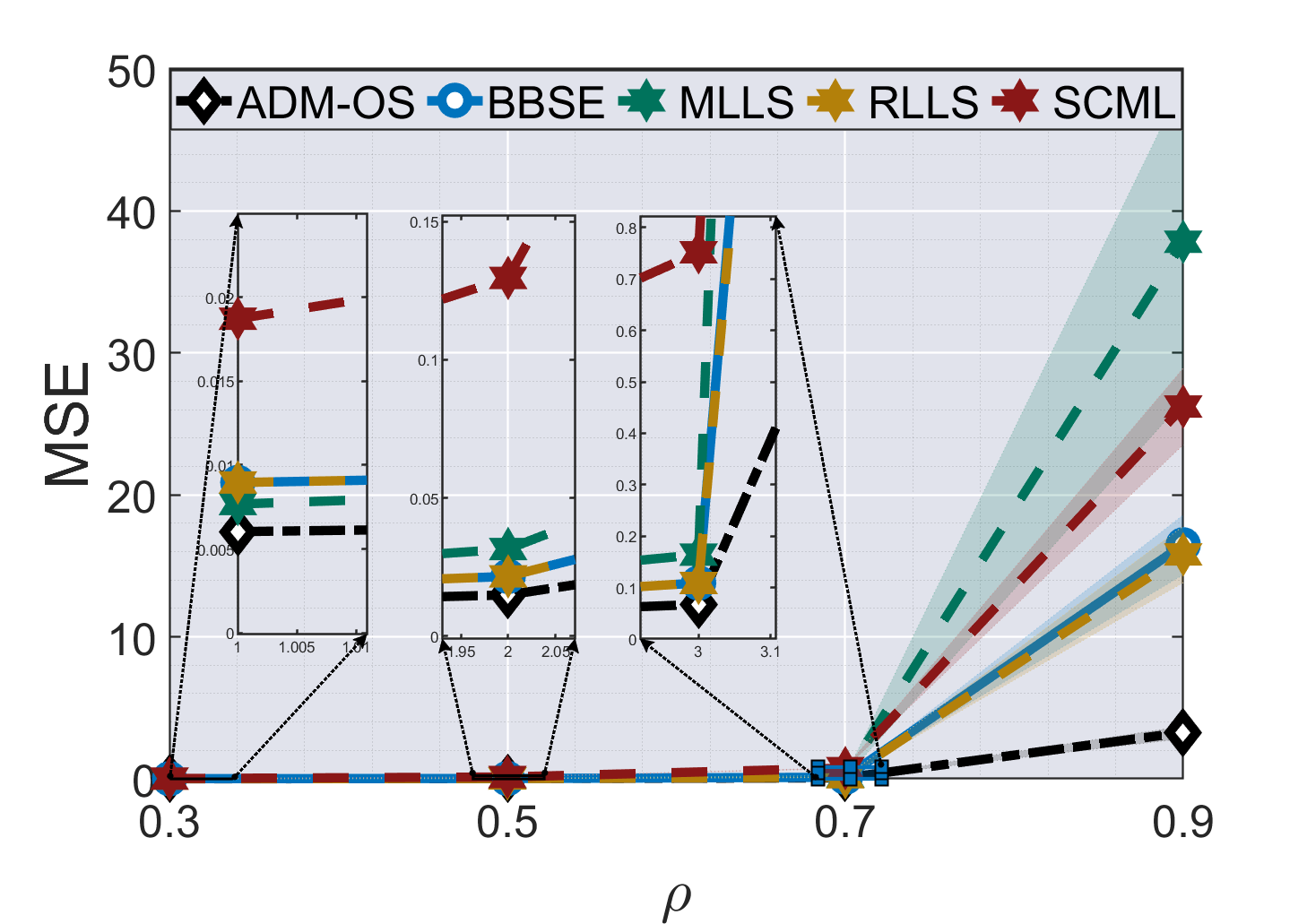}}
	\subfigure[FMNIST($\alpha$)]{
		\includegraphics[width=0.24\textwidth]{./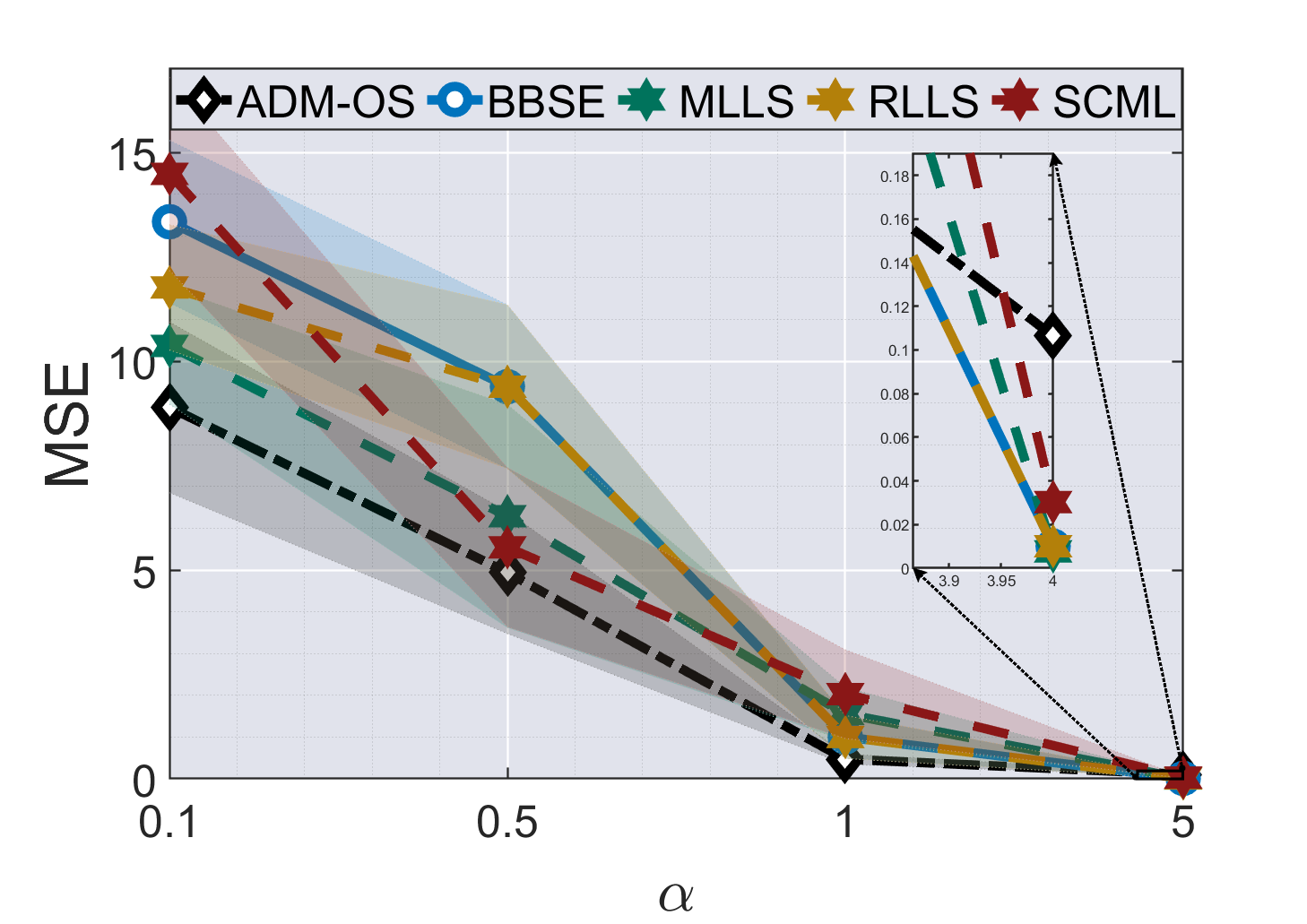}}
	\subfigure[FMNIST($\rho$)]{
		\includegraphics[width=0.24\textwidth]{./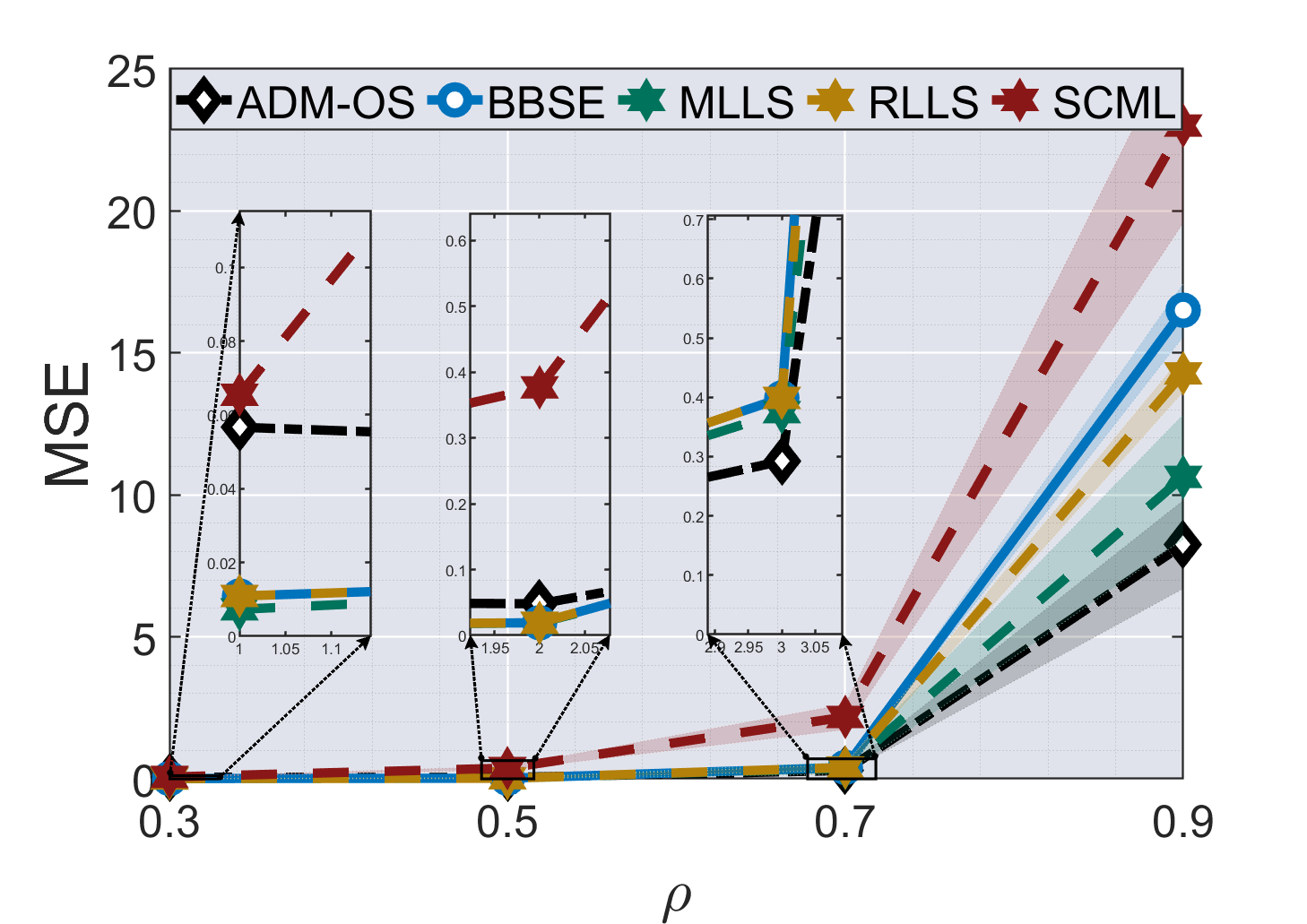}}
	\subfigure[USPS($\alpha$)]{
		\includegraphics[width=0.24\textwidth]{./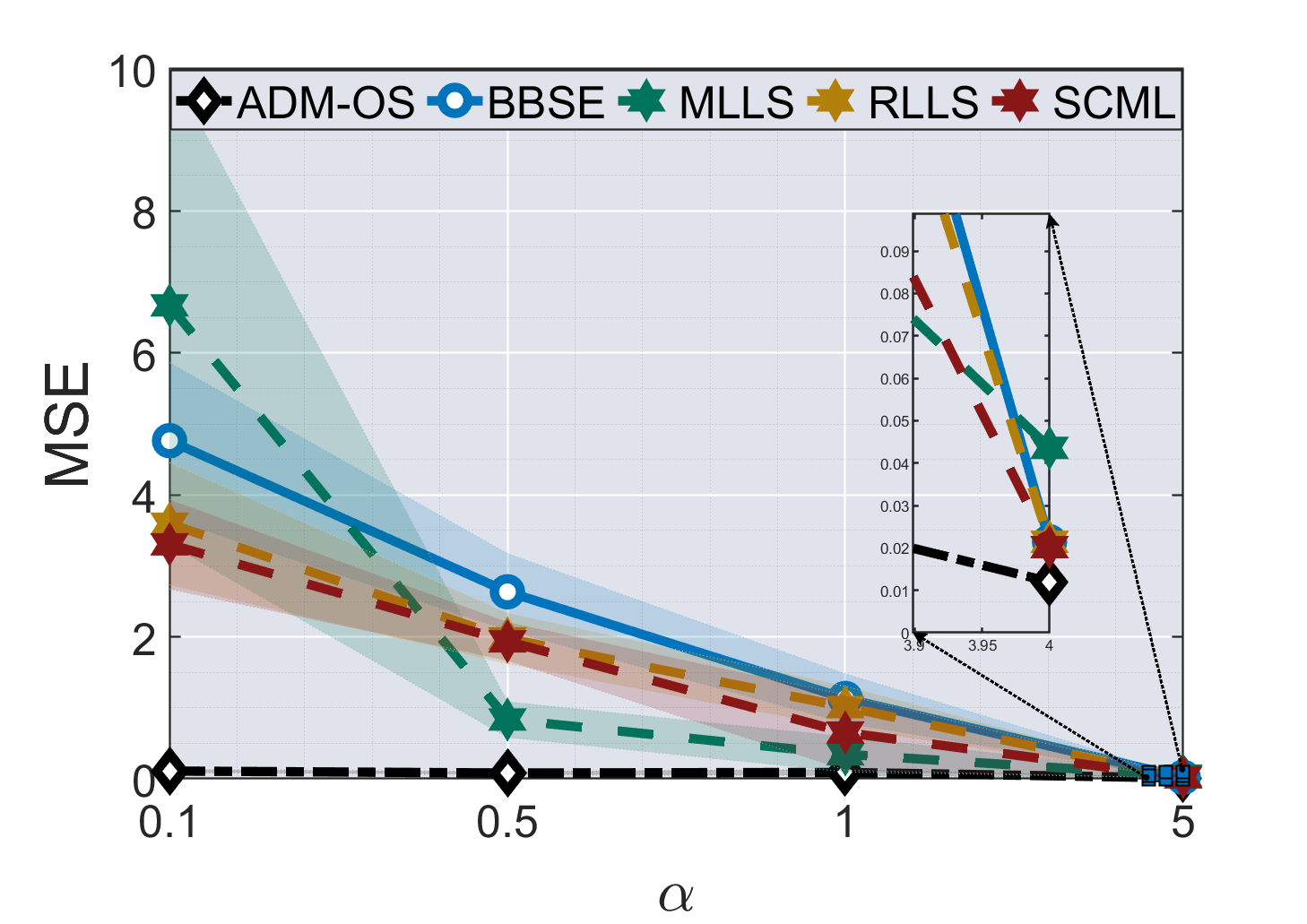}}
	\subfigure[USPS($\rho$)]{
		\includegraphics[width=0.24\textwidth]{./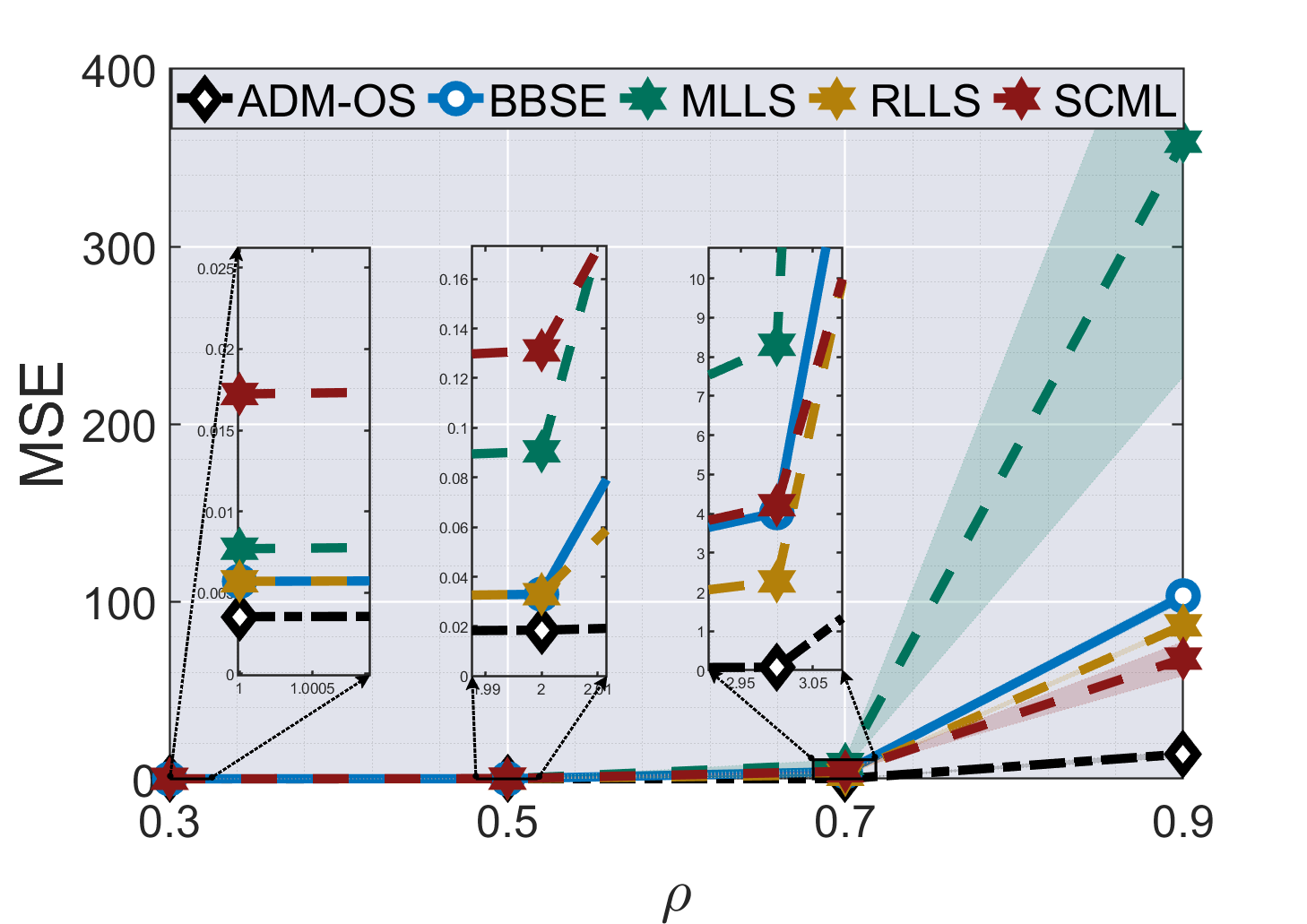}}
	\subfigure[CIFAR10($\alpha$)]{
		\includegraphics[width=0.24\textwidth]{./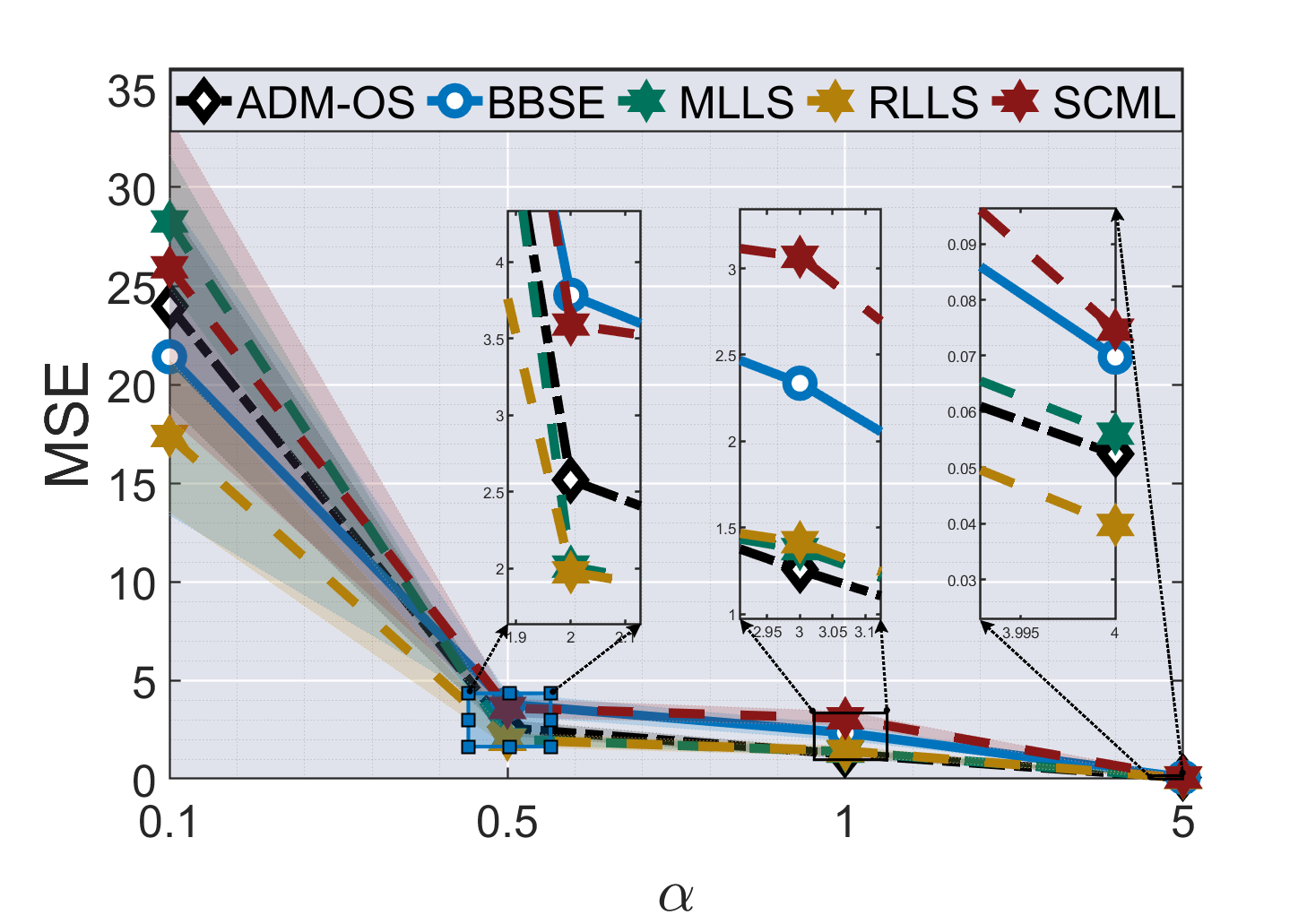}}
	\subfigure[CIFAR10($\rho$)]{
		\includegraphics[width=0.24\textwidth]{./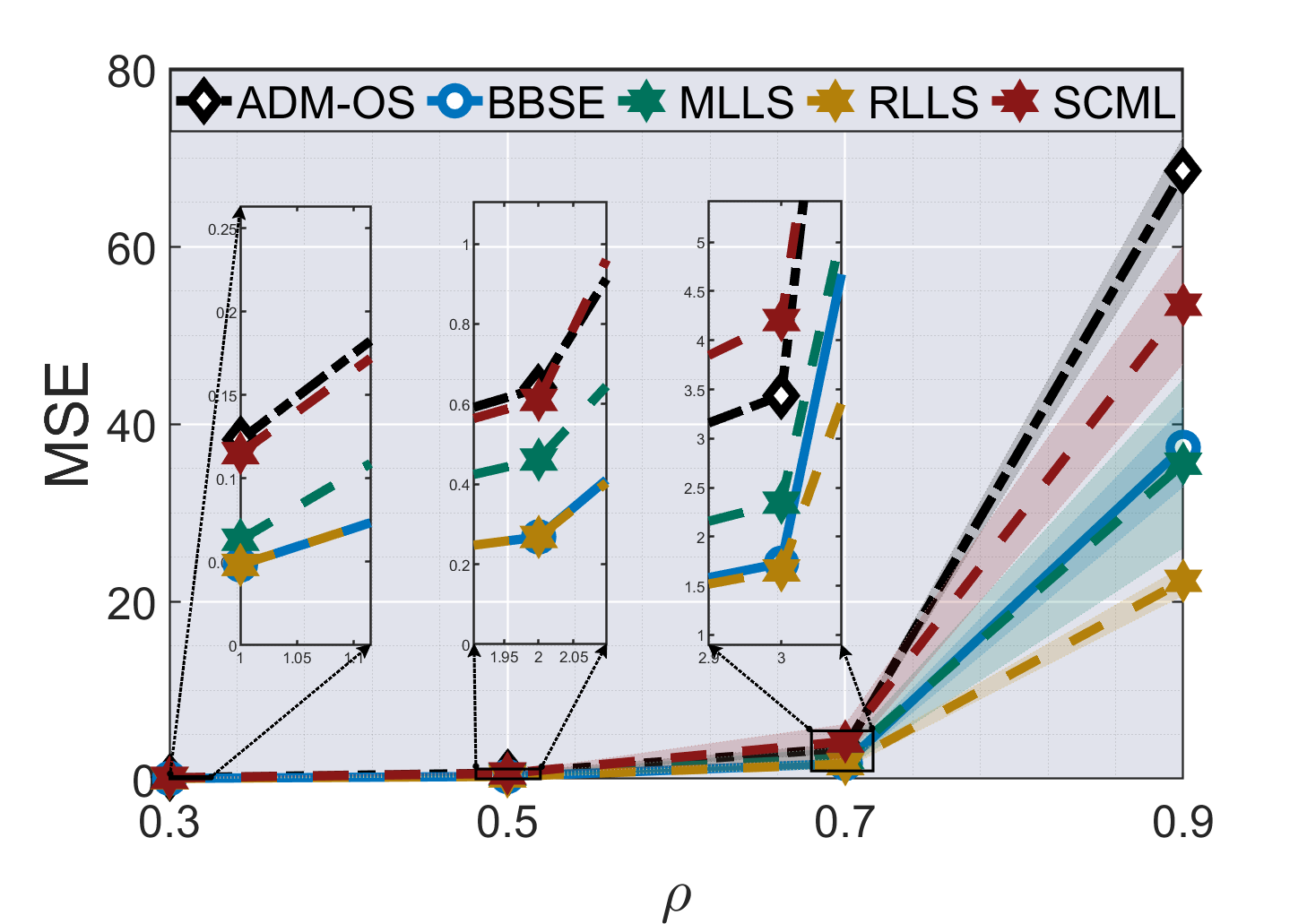}}
	\subfigure[CIFAR100($\alpha$)]{
		\includegraphics[width=0.24\textwidth]{./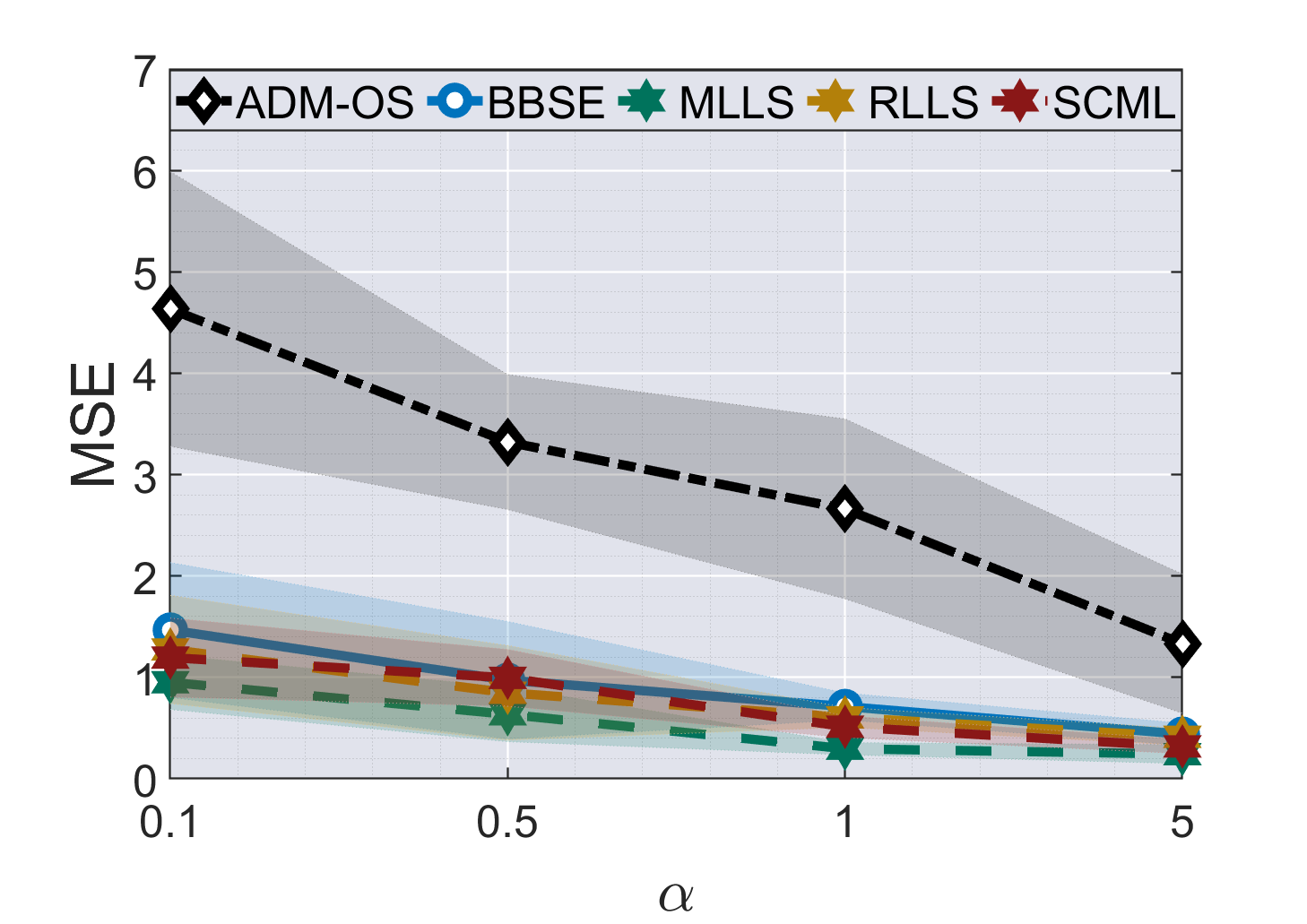}}
	\subfigure[CIFAR100($\rho$)]{
		\includegraphics[width=0.24\textwidth]{./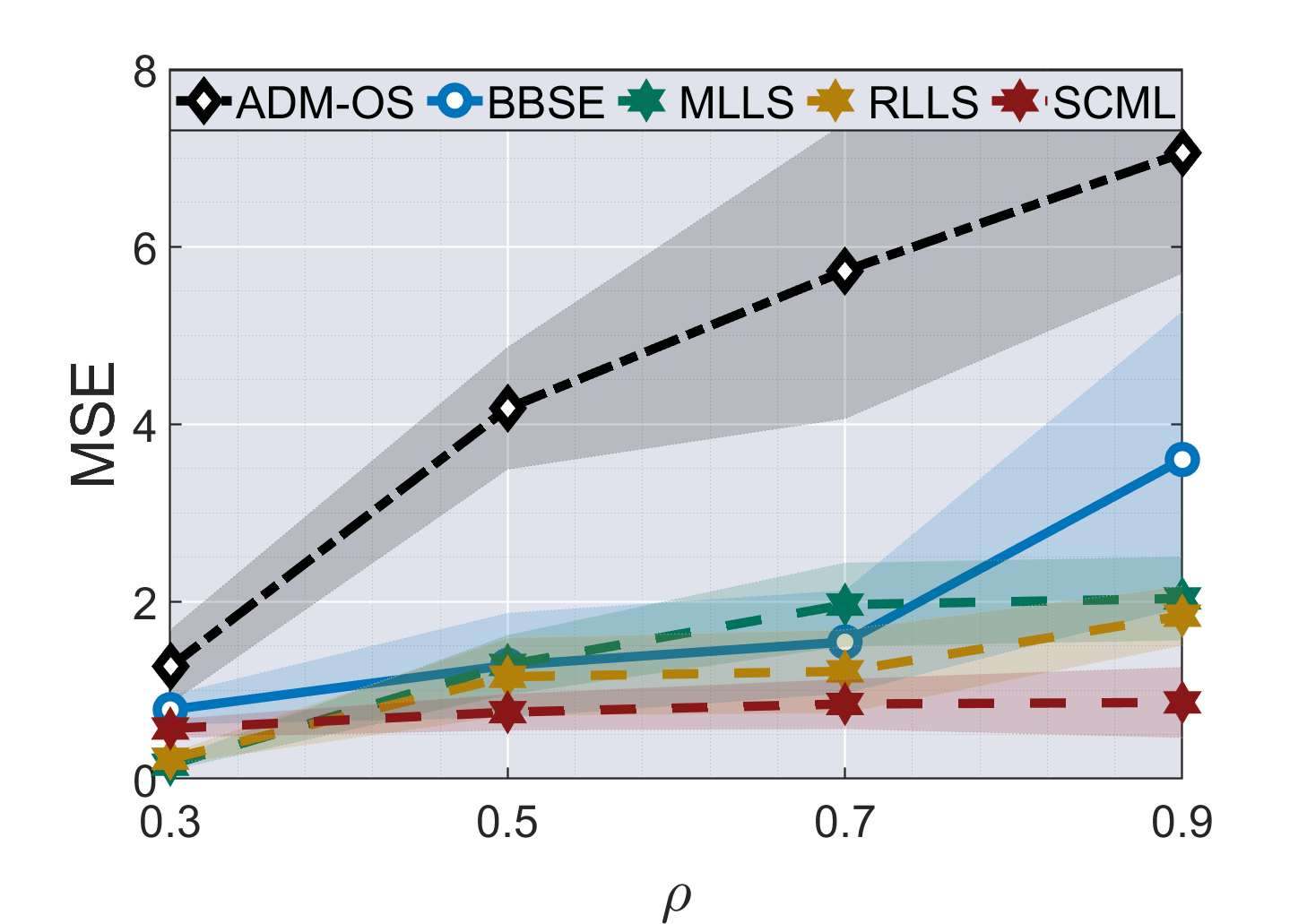}}
	\caption{The MSE comparison of ADM-OS(black line) with other label shift methods under different datasets.}
\end{figure*}
\begin{figure*}[htbp] \label{fig5}
	\centering
	\subfigure[MNIST($\alpha=1$)]{
		\includegraphics[width=1\textwidth]{./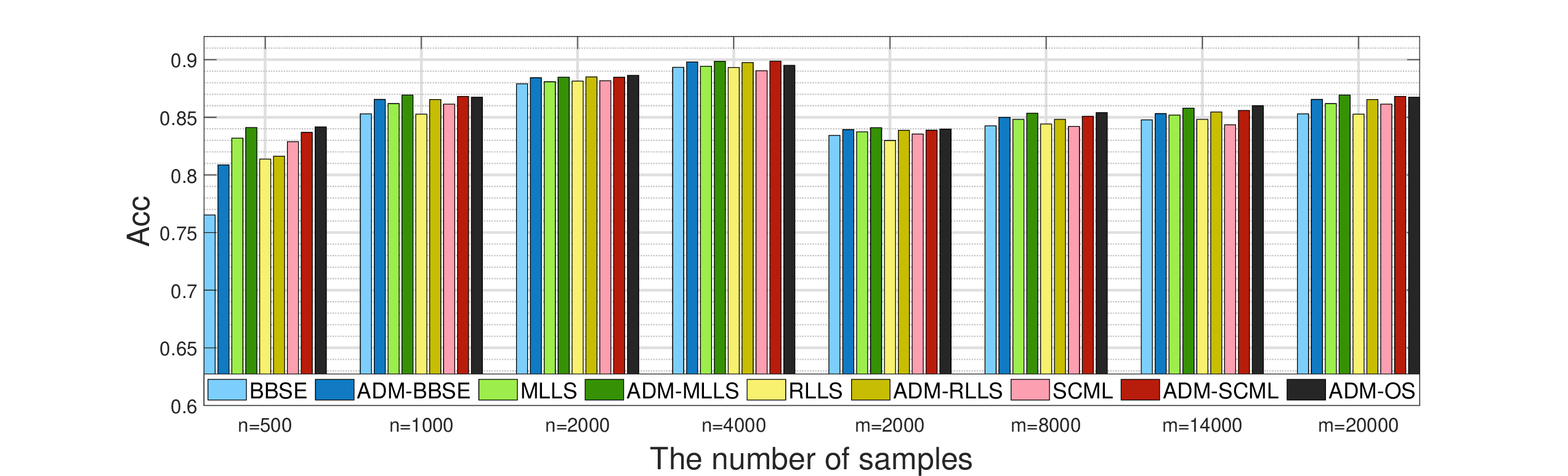}}
	\subfigure[Fasion MNIST($\alpha=1$)]{
		\includegraphics[width=1\textwidth]{./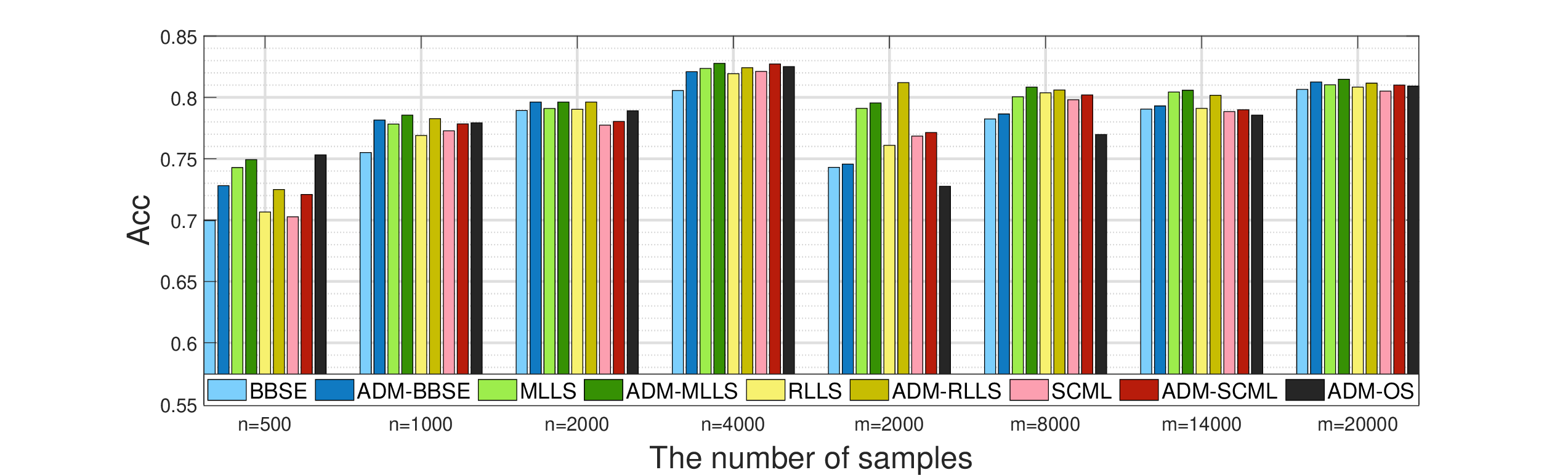}}
	\caption{The influence of the sample quantity from both the source and target domain on MNIST and Fasion MNIST datasets with $\alpha=1$.}
\end{figure*}
\subsection{Accuracy Comparison}
In order to verify the efficacy  of the proposed ADM framework, a comparative analysis is conducted against advanced label shift and SSL methods. The evaluation focuses on assessing the Acc on datasets with Dirichlet and Tweak-One label shifts. Notably, in consideration of the substantial number of classes in the Tweak-One shift CIFAR100 dataset, a normalization procedure is implemented to ensure that the proportions of 10 classes sum up to $\rho$. The experimental results are displayed in Table \ref{Tab1} and we have the following observations.
\begin{enumerate}
	\item While the performance of different comparative methods varies across different datasets, our two-step ADM approaches consistently enhance the performance of existing label shift methods in all cases, as evidenced by the AvgImp metric. Notably, on the Tweak-One shift USPS dataset with $\rho =0.9$ and the Dirichlet shift CIFAR10 dataset with $\alpha =0.1$, the two-step approaches showcase an average enhancement exceeding 4\%.	These findings serve as compelling evidence of the stability and validity of the ADM framework.
	\item In comparison to the two-step approaches, ADM-OS exhibits superior enhancements across most scenarios. To illustrate, on the Dirichlet shift CIFAR10 dataset with $\alpha =0.1$, ADM-OS showcases an average improvement exceeding 10\%, while on the Tweak-One shift USPS dataset with $\rho =0.9$, ADM-OS displays an average enhancement nearing 50\%. Nonetheless, ADM-OS encounters challenges on the CIFAR100 dataset. This situation is possibly attributed to the dataset's extensive category count and the model's diminished accuracy, impeding the effective learning of importance weights. Despite this, ADM-OS maintains commendable performance with datasets featuring a modest category count.
	\item The ADM framework exhibits a propensity for achieving greater enhancements in scenarios characterized by substantial label shift compared to those with minor label shift. For instance, on the Tweak-One shift Fasion MNIST dataset with $\rho =0.9$, the ADM framework attains an average improvement exceeding 3\%. In contrast, on the Tweak-One shift Fasion MNIST dataset with $\rho =0.3$, the ADM framework yields a modest average increase of less than 1\%. This disparity can be attributed to the inherent challenges associated with learning precise weights in instances of large label shift. Therefore, the integration of target samples into the training process is an effective strategy for mitigating the disparity between the training and test distributions.
	\item In contrast to WW, ADM framework enhances the predictive capabilities, especially in large label shift. This observation stands as robust evidence supporting the effectiveness of the aligned distribution mixture. Furthermore, when compared with the semi-supervised approach CSSL, the ADM framework outperforms, underscoring the superior performance achieved through distribution alignment strategies.
	\item It is evident that the two-step ADM approaches consistently enhance the efficacy of existing methods and the performance of ADM-OS is contingent upon the alignment between model's output and conditional distribution. Optimal alignment leads to substantial performance enhancements, whereas misalignment may result in performance degradation. Hence, in practical applications, the selection of an appropriate strategy can be guided by evaluating the model's output on the training dataset.
\end{enumerate}
\subsection{Performance Analysis}
\subsubsection{MSE comparison}
In assessing the precision of weight estimation within ADM-OS, a comparative analysis is conducted against existing label shift methods utilizing the MSE metric. The results are depicted in Fig. 3, from which several key observations have been derived.
\begin{enumerate}
	\item In the majority of scenarios, ADM-OS excels in achieving precise weight estimation, showcasing the efficacy of incorporating weight estimation within classifier training processes. Notably, on the CIFAR10 dataset, ADM-OS marginally trails behind partial
	label shift methods. Nevertheless, the accuracy performance of ADM-OS, as indicated in Table \ref{Tab1}, surpasses that of its comparisons. This discrepancy may stem from the fact that the weights in ADM-OS are derived through aligned distribution mixture and have been subject to iterative refinements by the classifier, rendering them better suited for the target data.
	\item On the CIFAR100 dataset, ADM-OS exhibits suboptimal performance. This outcome is likely attributed to the dataset's extensive category count and the model's diminished accuracy, posing challenges for the model to learn accurate weights effectively. On other datasets, the estimated weights derived by ADM-OS demonstrate greater accuracy and reduced variance. In general, ADM-OS demonstrates commendable performance in scenarios featuring small class sizes or good base classifiers.
\end{enumerate}

\begin{figure*}[!htbp] \label{fig7}
	\centering
	\subfigure[MNIST(ADM-MLLS)]{
		\includegraphics[width=0.24\textwidth]{./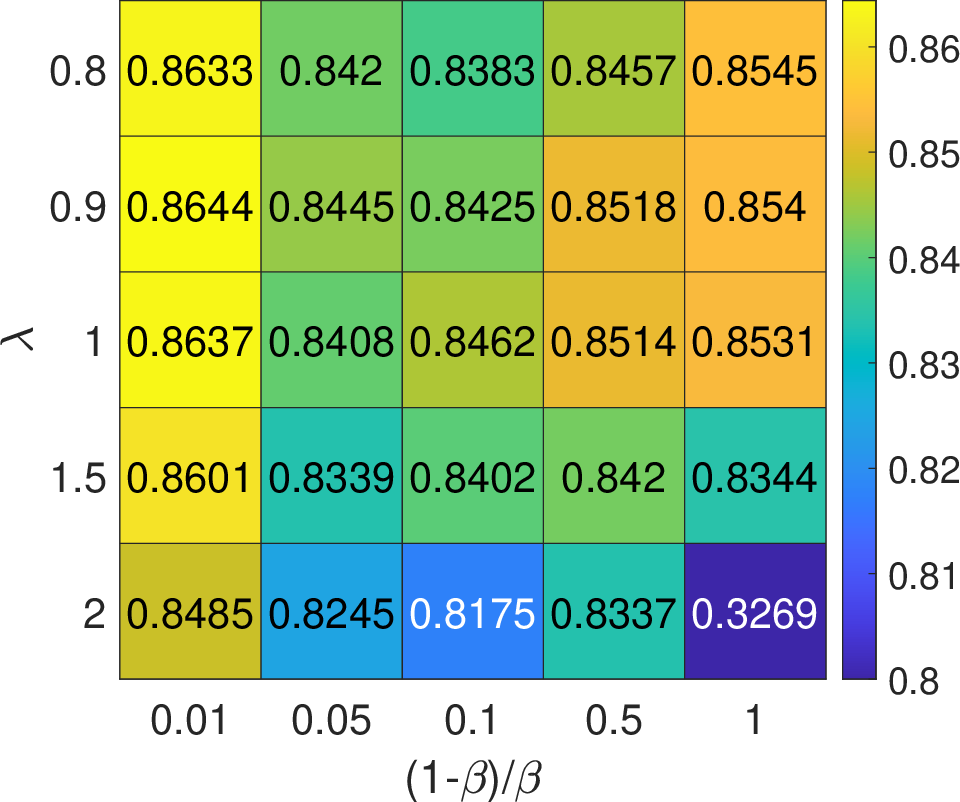}}
	\subfigure[MNIST(ADM-RLLS)]{
		\includegraphics[width=0.24\textwidth]{./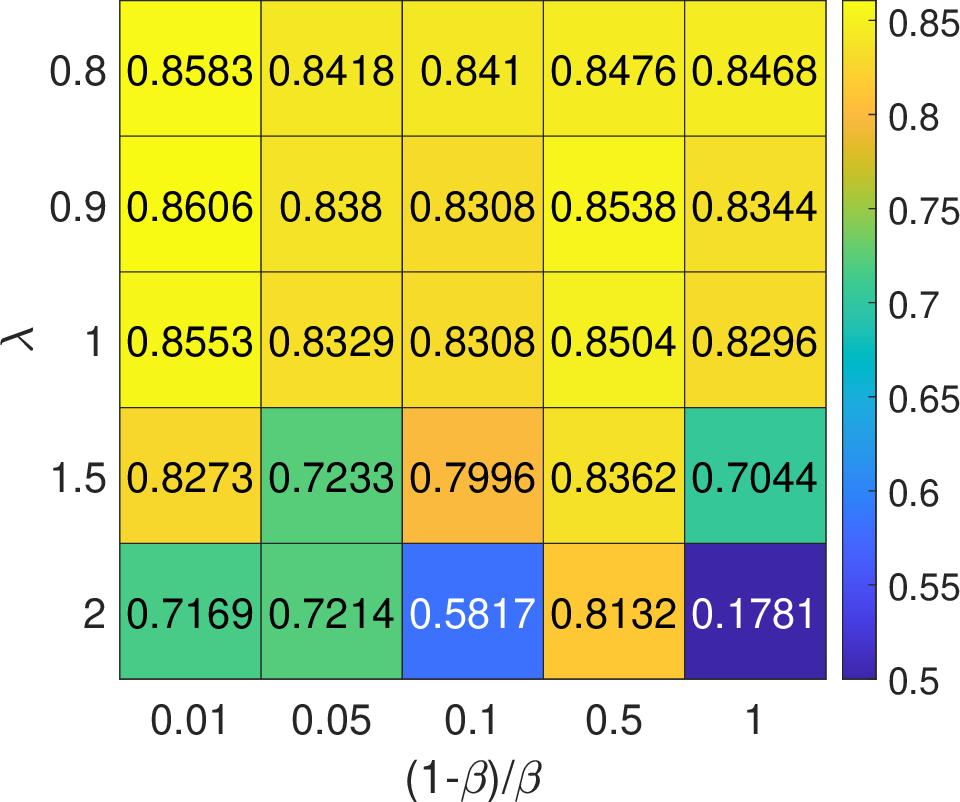}}	
	\subfigure[MNIST(ADM-OS)]{
		\includegraphics[width=0.24\textwidth]{./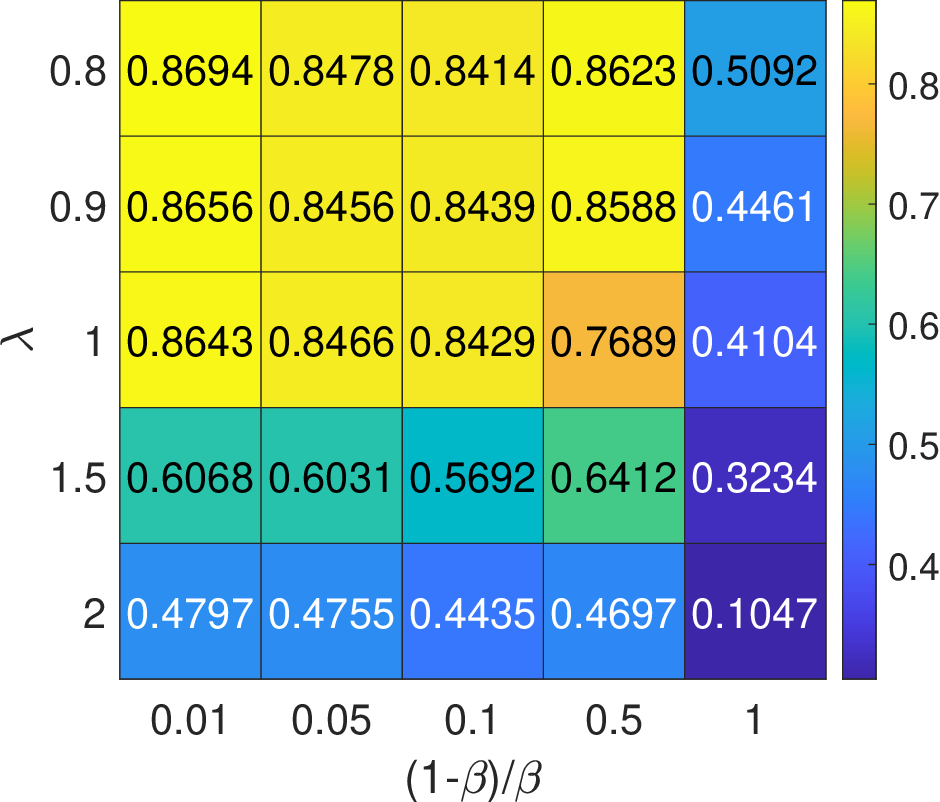}}
	\subfigure[FMNIST(ADM-RLLS)]{
		\includegraphics[width=0.24\textwidth]{./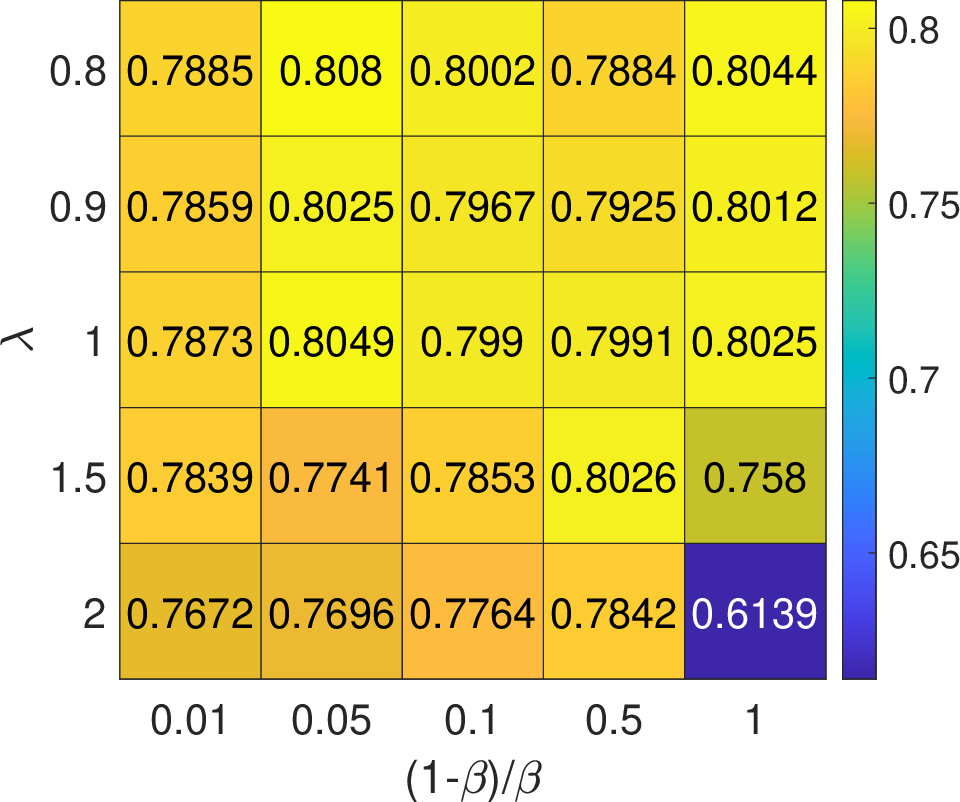}}
	\subfigure[FMNIST(ADM-OS)]{
		\includegraphics[width=0.24\textwidth]{./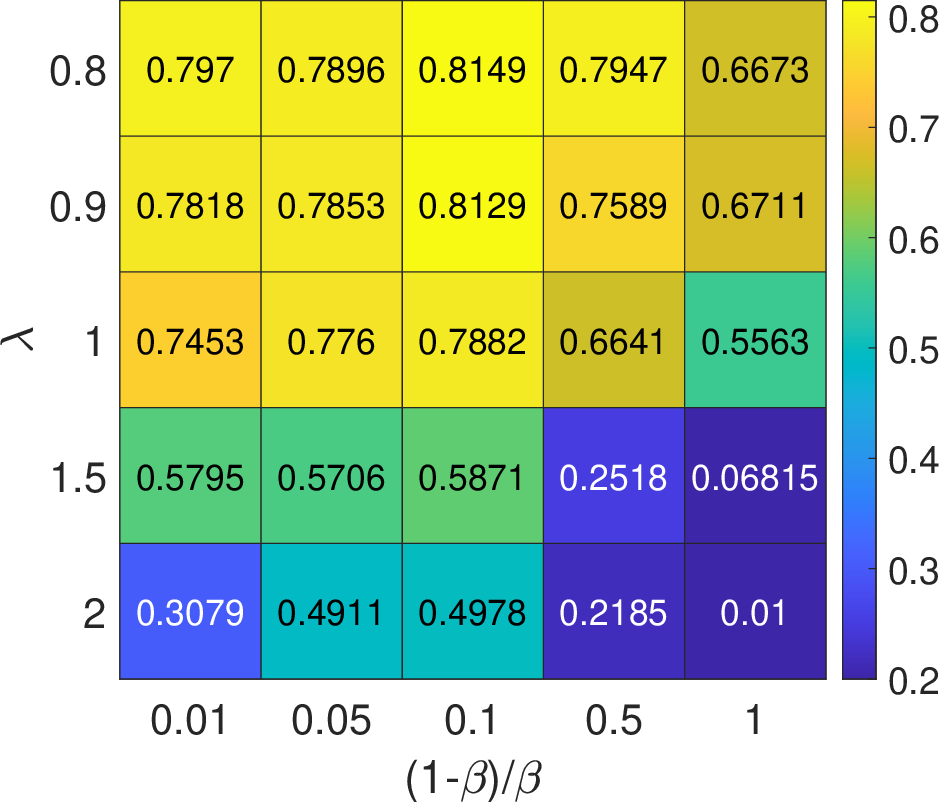}}
	\subfigure[USPS(ADM-MLLS)]{
		\includegraphics[width=0.24\textwidth]{./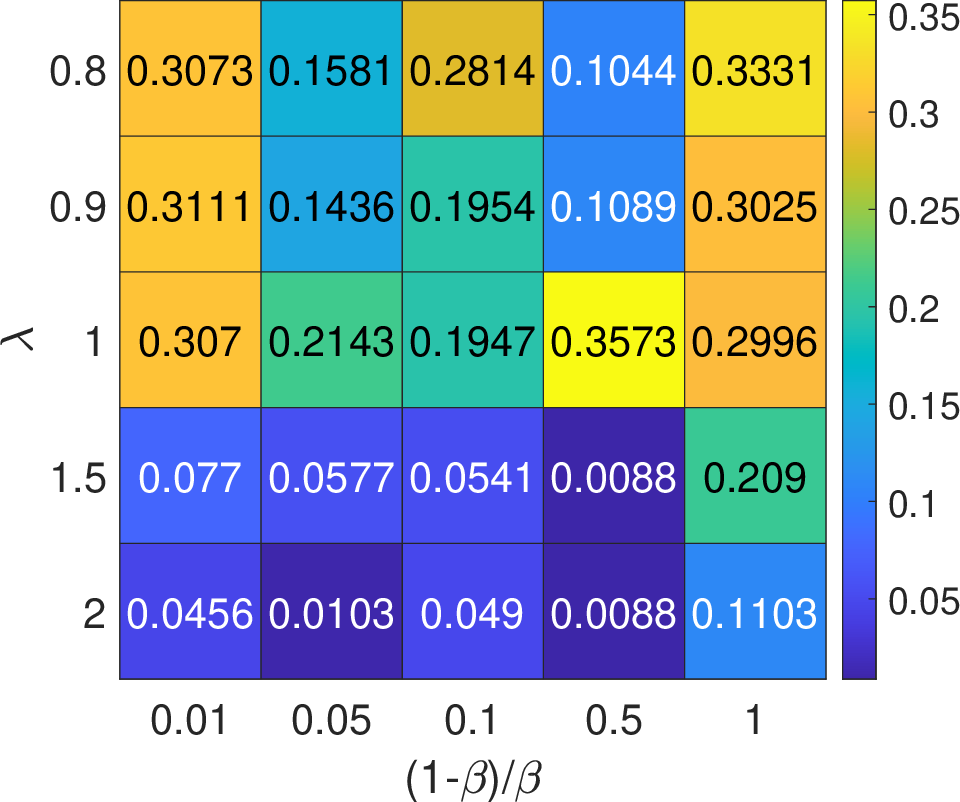}}
	\subfigure[USPS(ADM-RLLS)]{
		\includegraphics[width=0.24\textwidth]{./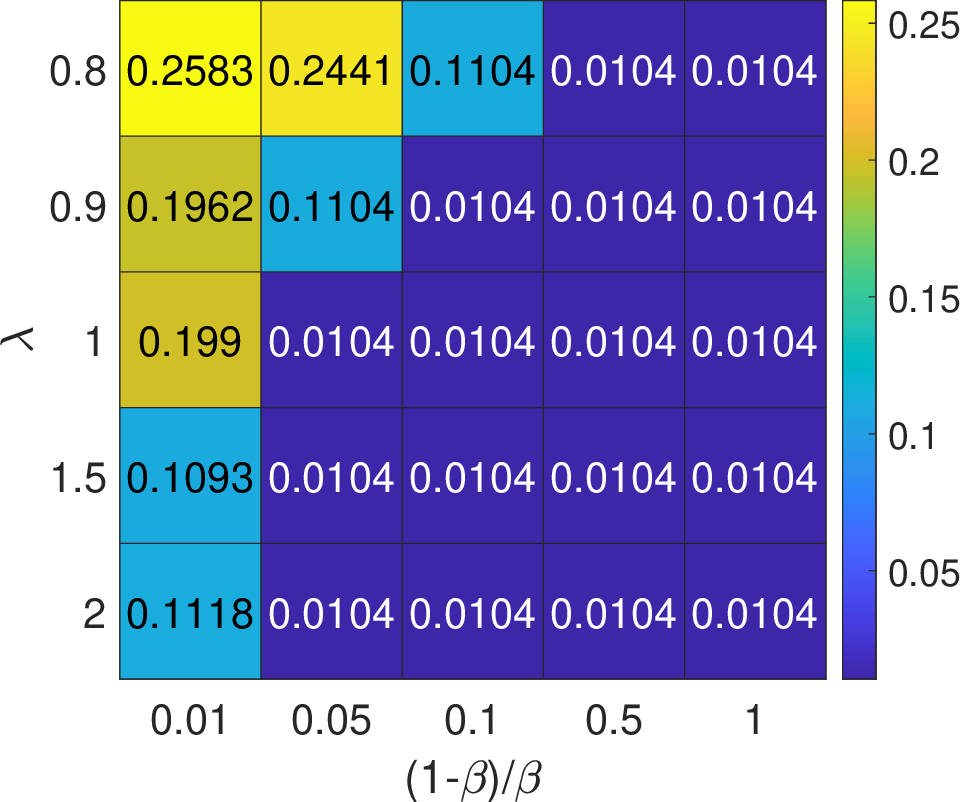}}	
	\subfigure[USPS(ADM-OS)]{
		\includegraphics[width=0.24\textwidth]{./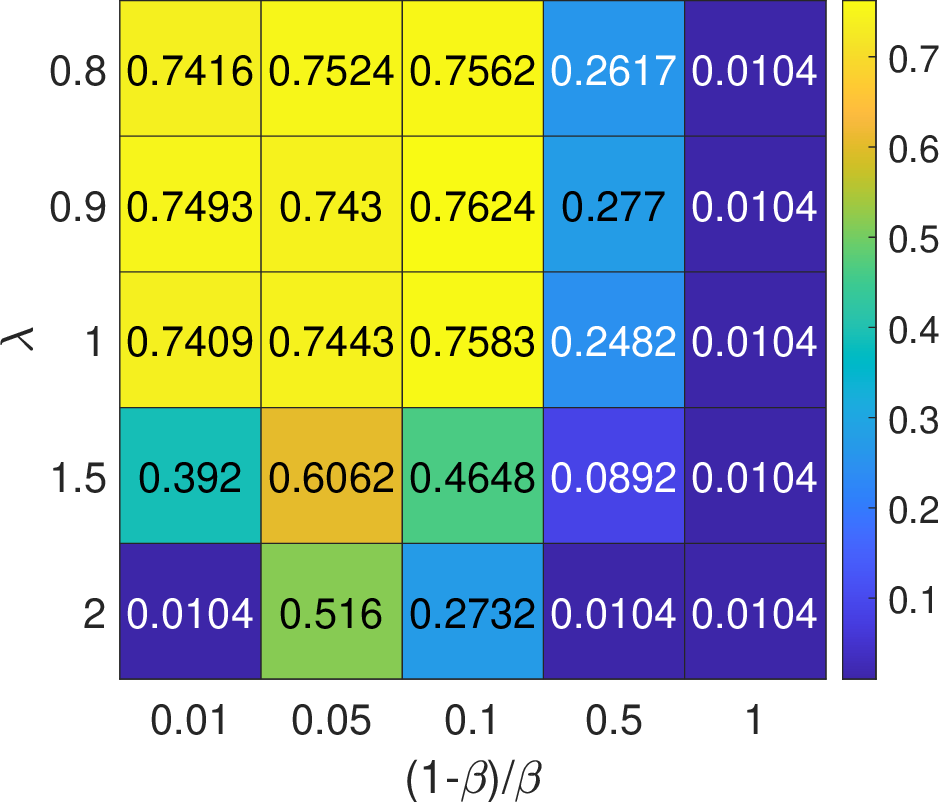}}
	\caption{Sensitivity analysis with different $\lambda$ and $\beta$ for multiple ADM approaches on MNIST, Fasion MNIST and USPS datasets with $\alpha=1$ and  $\rho=0.5$.}
\end{figure*}

\begin{figure*}[!htbp] \label{fig8}
	\centering
	\subfigure[MNIST($\alpha=1$)]{
		\includegraphics[width=0.24\textwidth]{./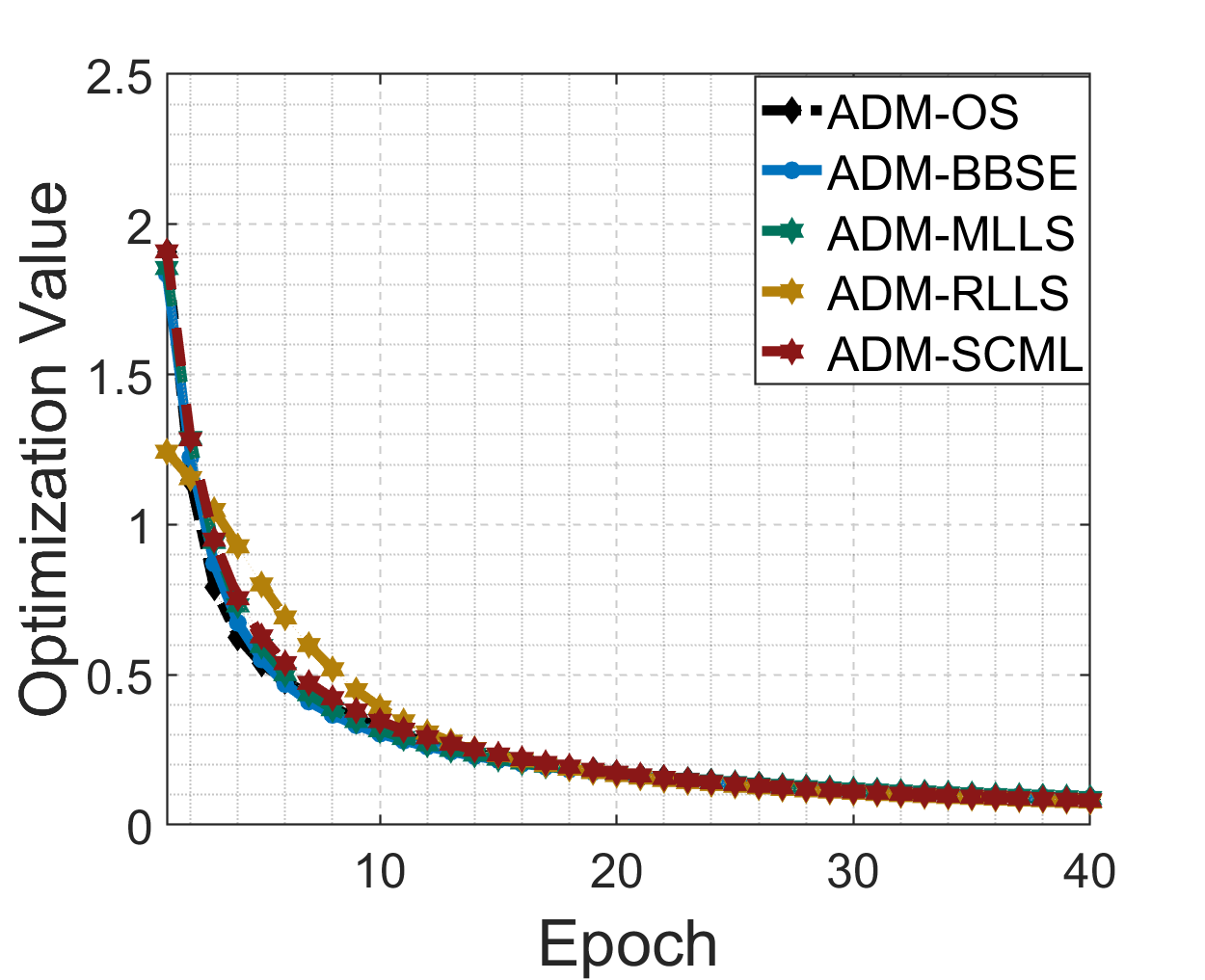}}
	\subfigure[MNIST($\rho=0.5$)]{
		\includegraphics[width=0.24\textwidth]{./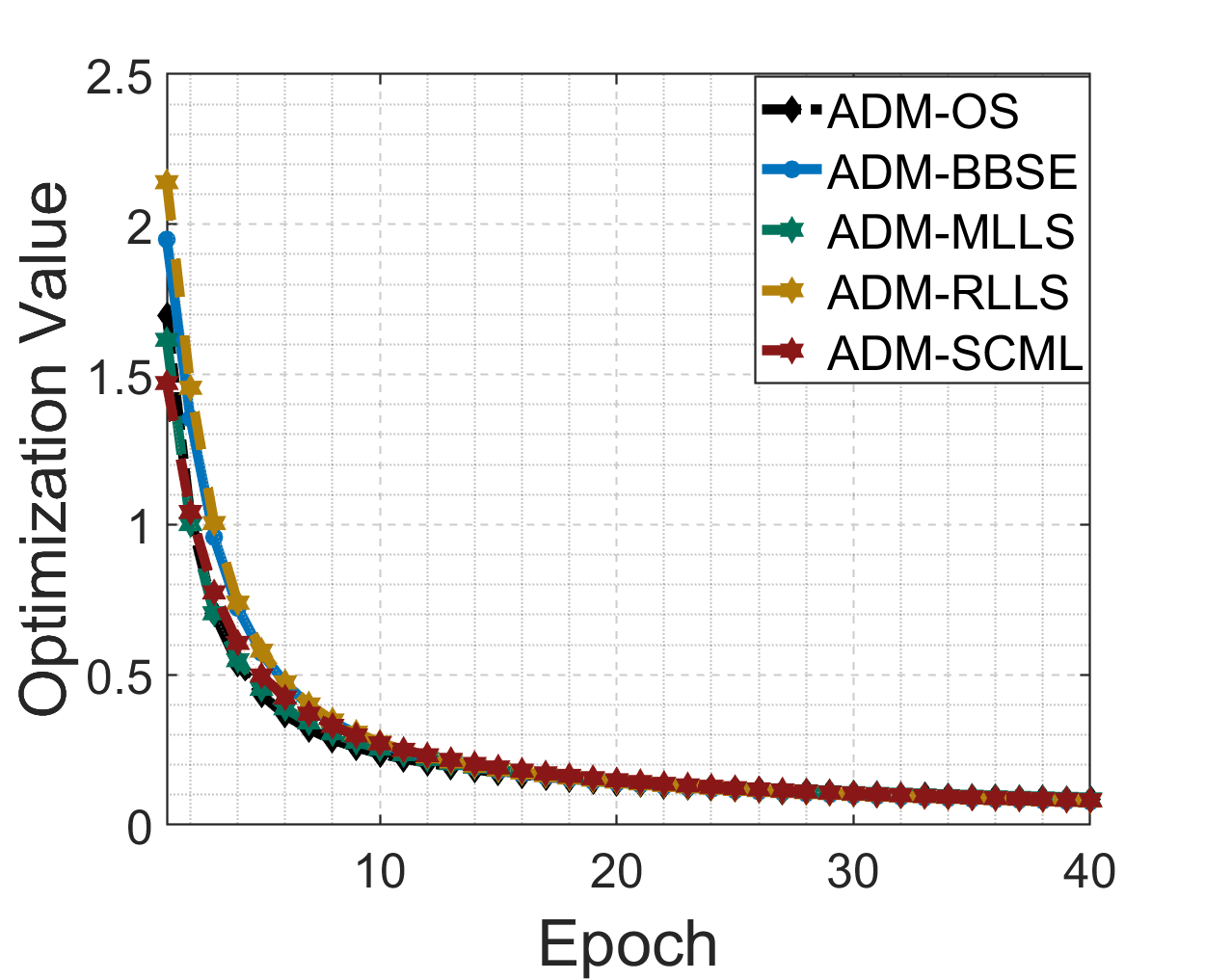}}
	\subfigure[FMNIST($\alpha=1$)]{
		\includegraphics[width=0.24\textwidth]{./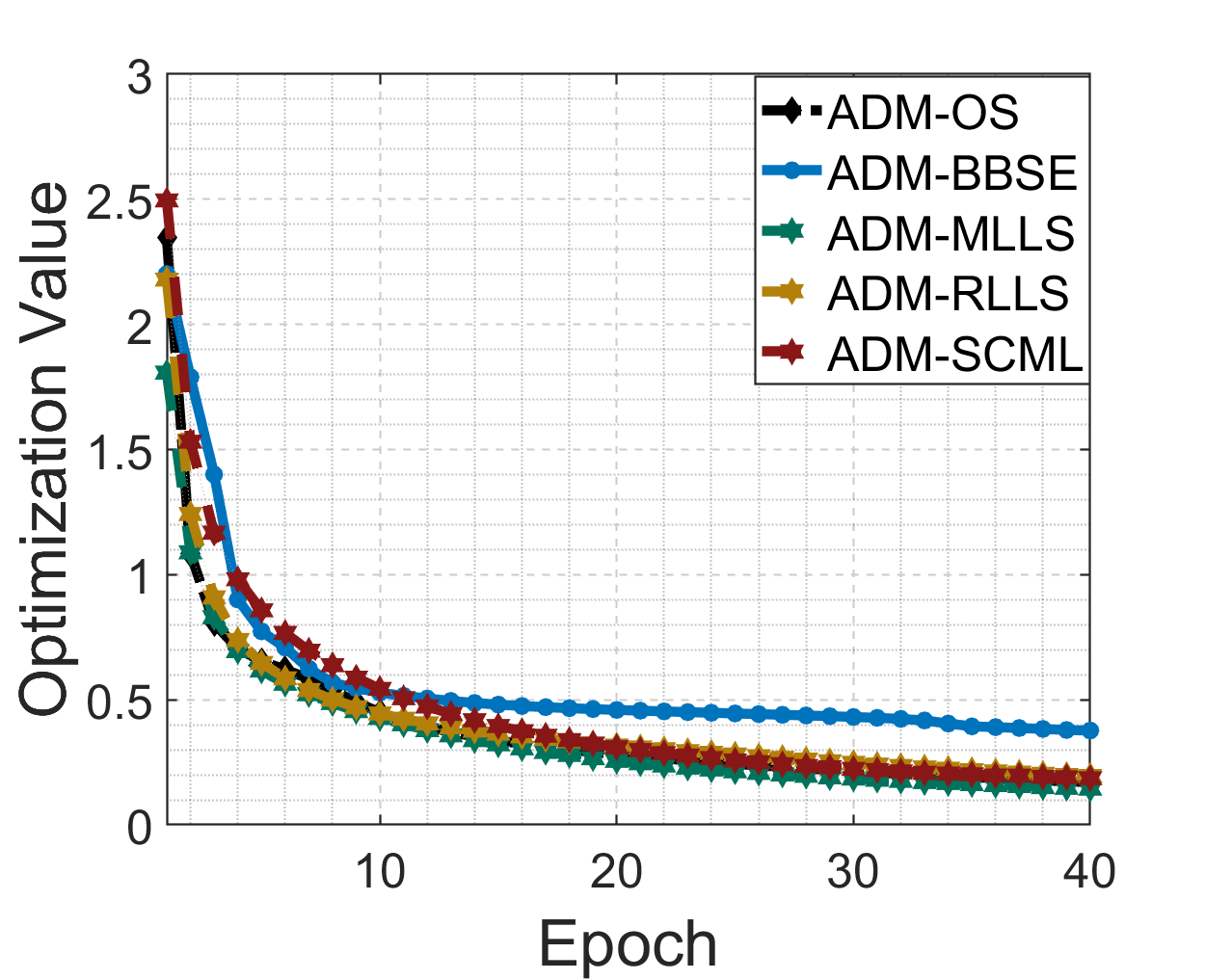}}
	\subfigure[FMNIST($\rho=0.5$)]{
		\includegraphics[width=0.24\textwidth]{./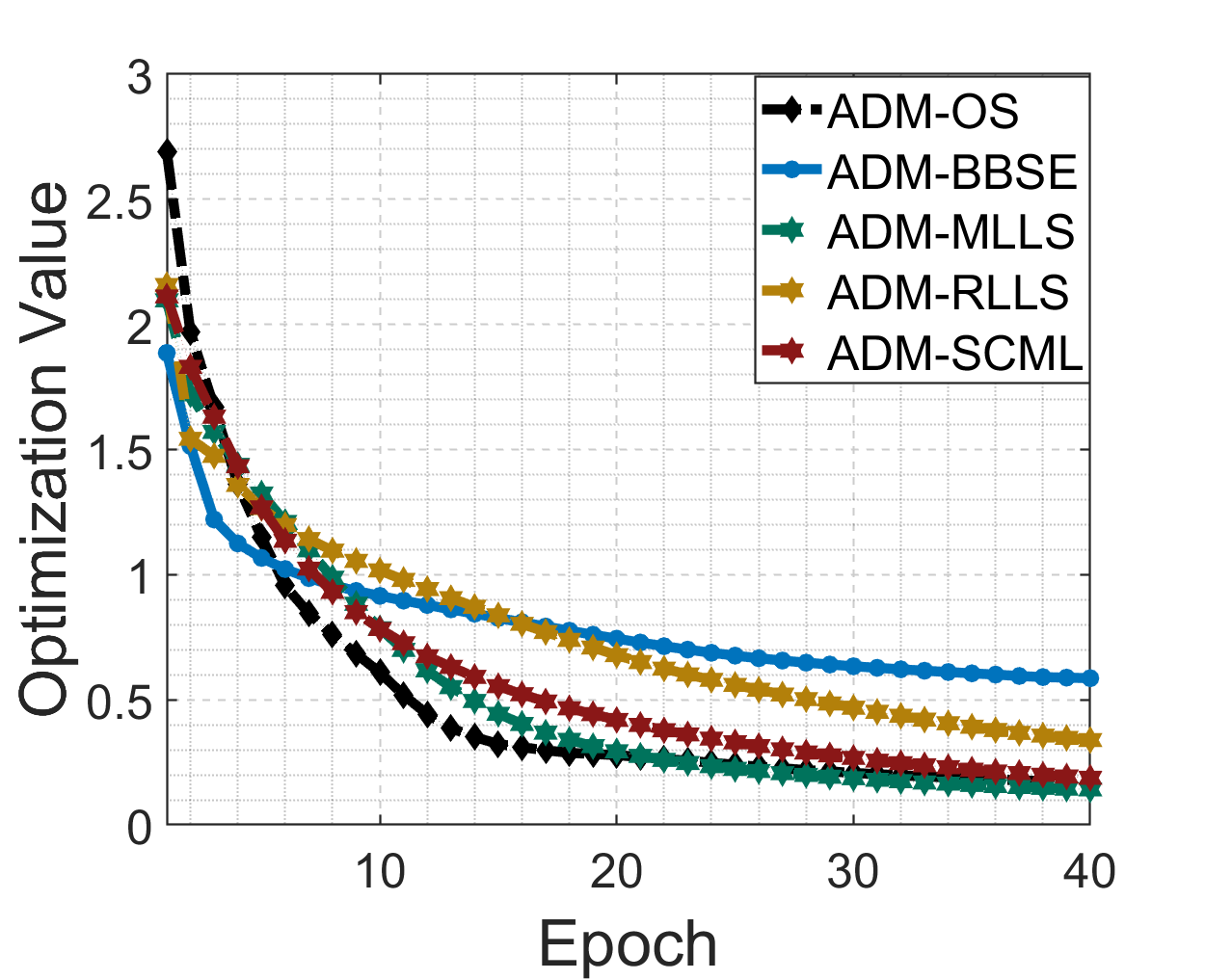}}
	\subfigure[USPS($\alpha=1$)]{
		\includegraphics[width=0.24\textwidth]{./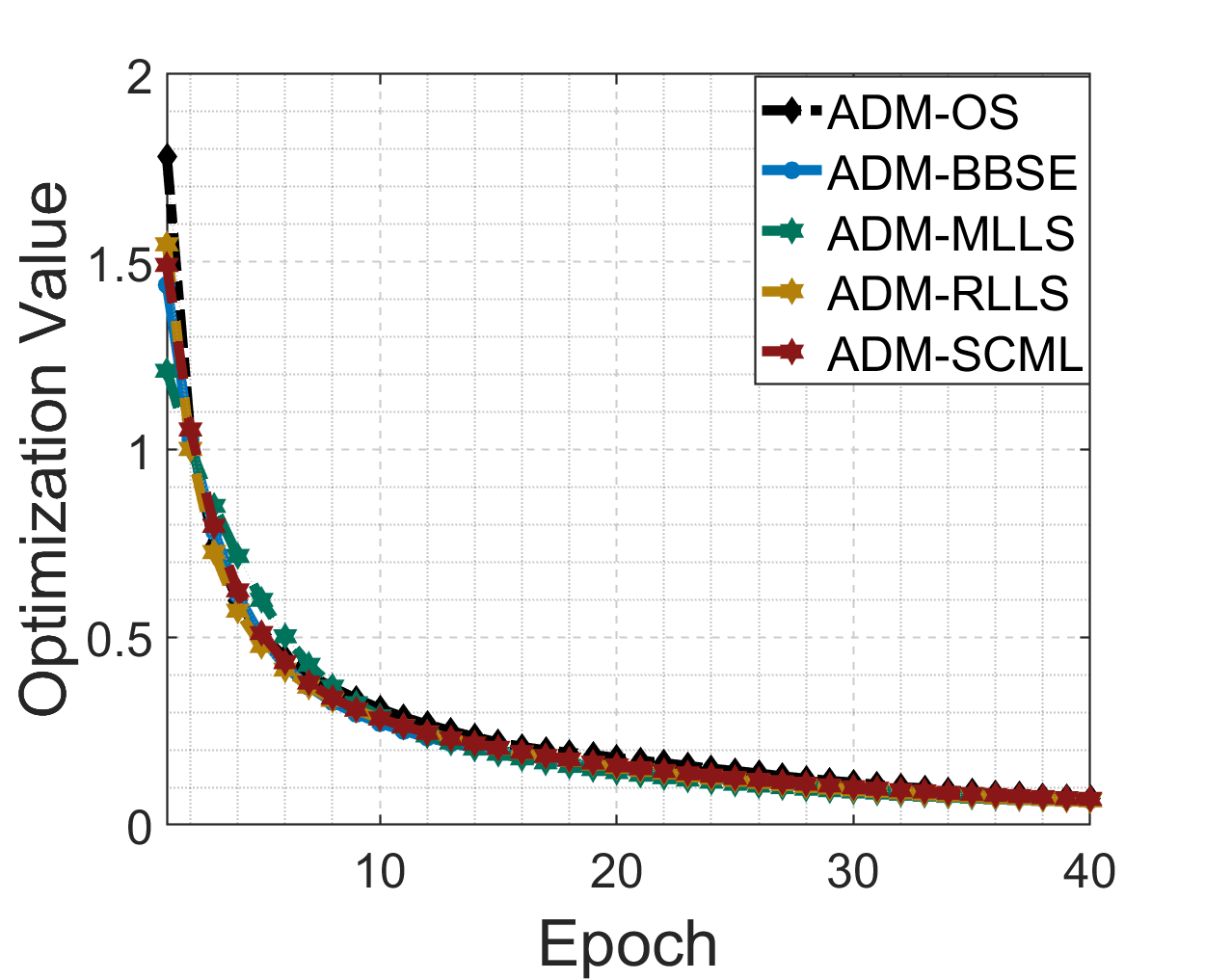}}
	\subfigure[USPS($\rho=0.5$)]{
		\includegraphics[width=0.24\textwidth]{./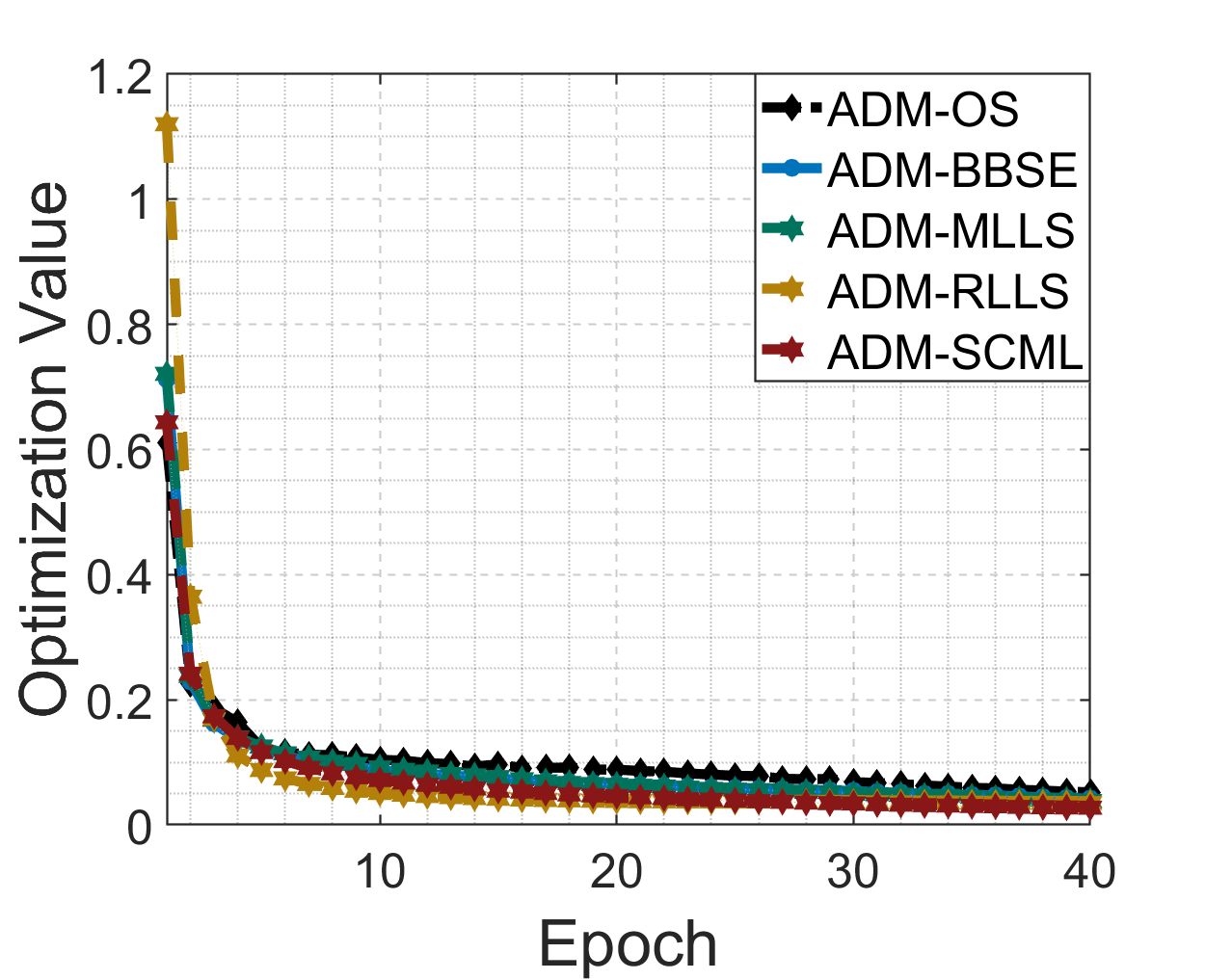}}
	\subfigure[CIFAR10($\alpha=1$)]{
		\includegraphics[width=0.24\textwidth]{./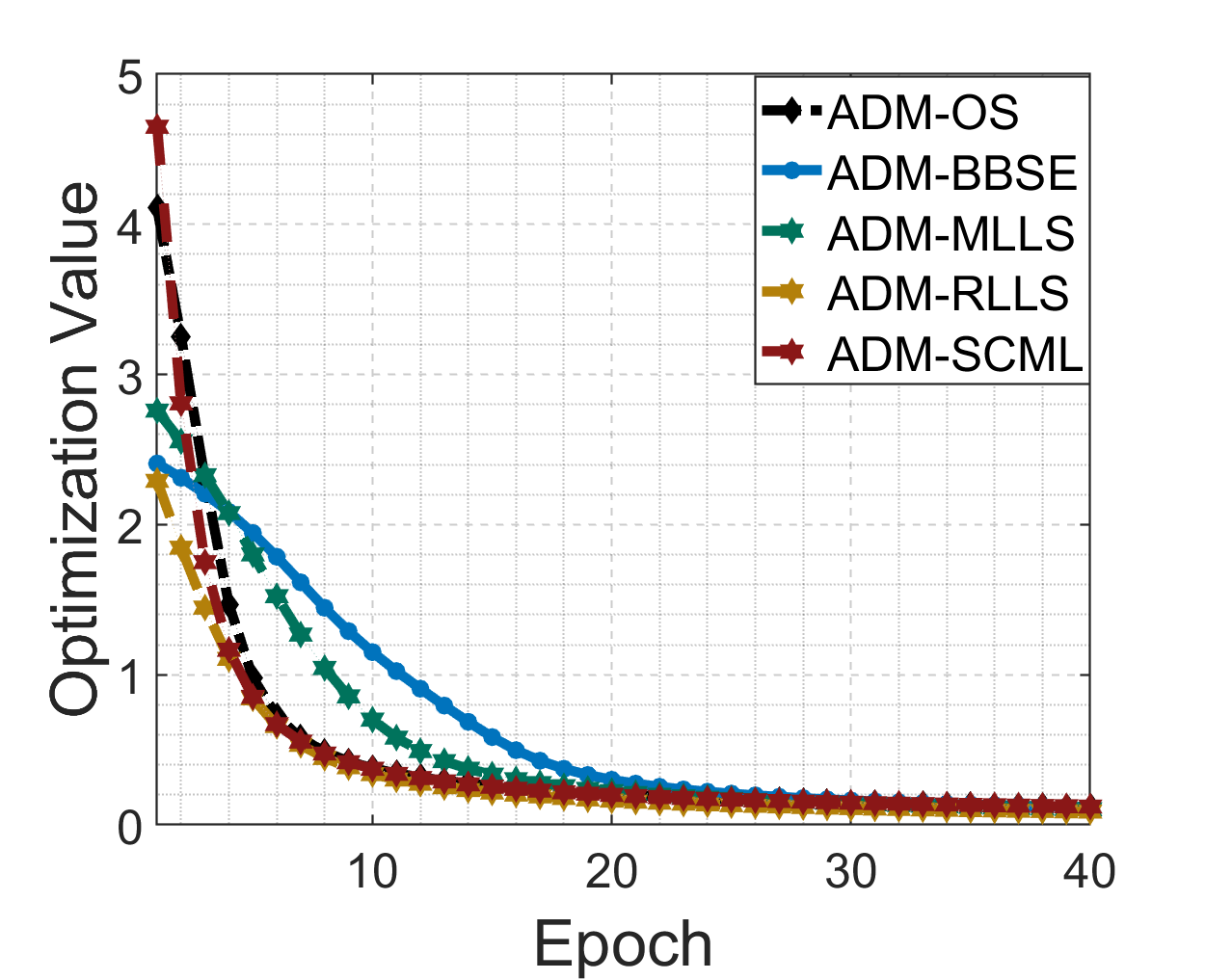}}
	\subfigure[CIFAR10($\rho=0.5$)]{
		\includegraphics[width=0.24\textwidth]{./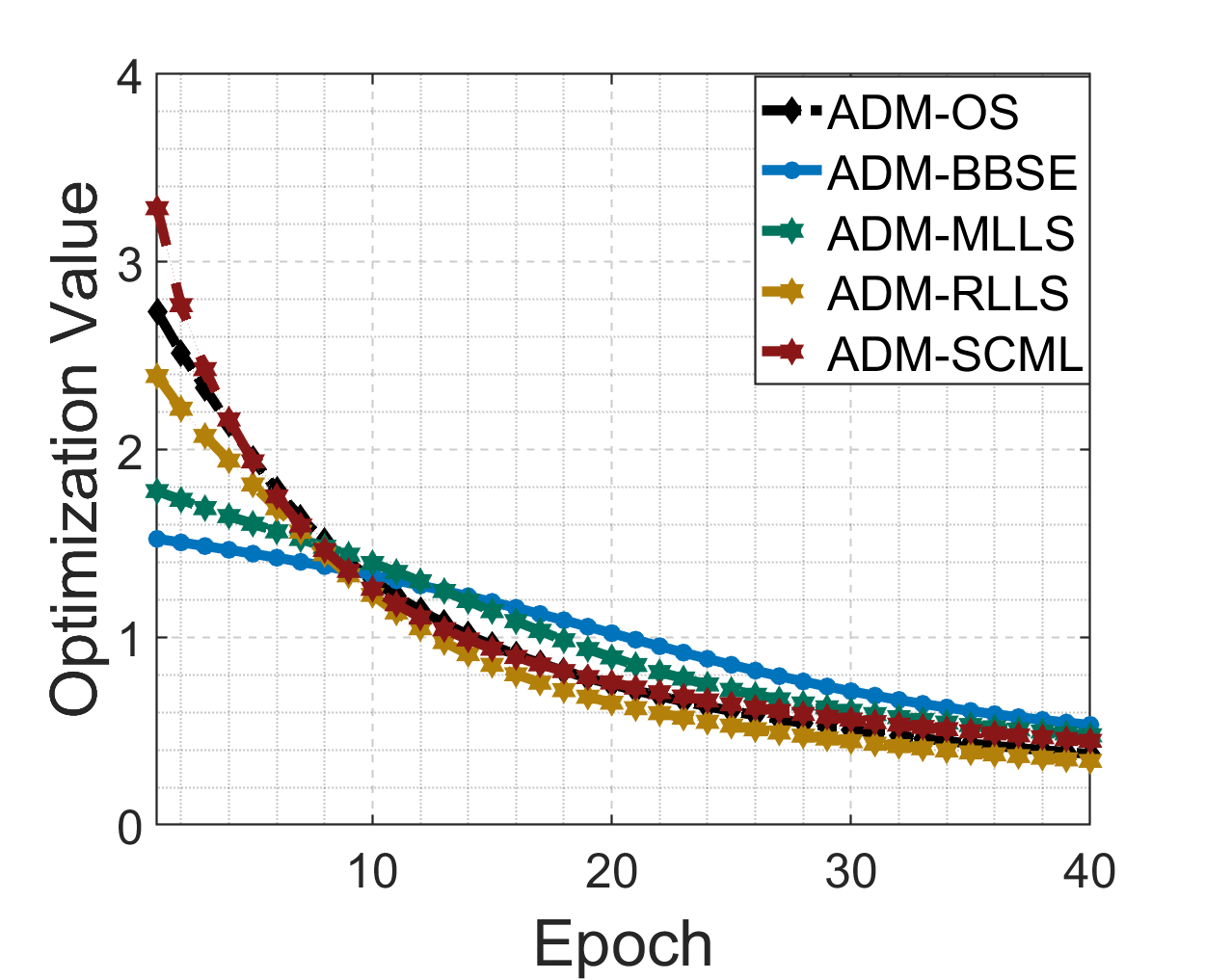}}
	\caption{Convergence behavior of multiple ADM approaches on four datasets with $\rho=0.5$.}
\end{figure*}

\subsubsection{The impact of sample quantity}
To assess the impact of sample quantity from both the source and target domains, the performance of ADM framework and comparison methods is evaluated on the MNIST and Fashion MNIST datasets with $\alpha=1$. Eight distinct scenarios are established, where the quantity of source data~($n$) spans from $[500,1000,2000,4000]$, and the quantity of target data~($m$) ranges from $[2000,8000,14000,20000]$. The results are presented in Fig. 4 and we have the following insights.
\begin{enumerate}
	\item Based on the experimental findings, it becomes apparent that in the majority of instances, the approaches encompassed within  ADM framework outperform comparison methods concerning Acc performance. This substantiates the effectiveness of ADM framework in enhancing predictive abilities. Furthermore, experimental results reveal that the influence of the quantity of source data supersedes that of target data. This reminds us that enhancing the performance of base classifiers can augment the overall model performance. 
	\item For the ADM-OS approach, it can be observed that there is a significant performance improvement when the quantity of source samples is limited. For example, on the Fasion MNIST dataset with $\alpha=1$, ADM-OS achieves an increase of more than 4\%. This demonstrates the effectiveness of incorporating target data into classifier training. Furthermore, ADM-OS exhibits a greater reliance on the quantity of target data compared to alternative methods. For instance, on the Fasion MNIST dataset with $\alpha=1$, the Acc of ADM-OS escalates by more than 5\% as the quantity of target data transitions from $m=2000$ to $m=20000$. This phenomenon can be attributed to the simultaneous impact of the target data quantity on both the weight estimation and classifier training facets within the ADM-OS approach.
\end{enumerate}
\begin{table*}[!htbp]
	\caption{Acc performance(mean(std)) comparison on Dirichlet and Tweak-One shift datasets. Improvements of ADM framework are boldfaced.}
	\label{Tab2}
	\centering
	\setlength{\tabcolsep}{3.5pt}
	\renewcommand\arraystretch{1.2}
	\begin{tabular}{c|c|cccc|cccc}
		\toprule[1.5pt]
		\midrule[0.75pt]
		Dataset &Methods &{$\alpha = 0.1$}&{$\alpha = 0.5$} &{$\alpha = 1$} &{$\alpha = 5$} &{$\rho = 0.3$}&{$\rho = 0.5$} &{$\rho = 0.7$} &{$\rho = 0.9$}	\\
		\midrule[0.75pt]
		\multirow{12}{*}{COVID-19}
		&WW       & 0.5344(.0203) & 0.7226(.0174) & 0.7598(.0061) & 0.7666(.0140) & 0.7805(.0044) & 0.7846(.0042) & 0.7625(.0068) & 0.5476(.0717) \\
		&CSSL     & 0.6713(.0226) & 0.7465(.0066) & 0.7784(.0090) & 0.7867(.0116) & 0.7928(.0083) & 0.7934(.0033) & 0.7710(.0049) & 0.6237(.0680) \\
		&BBSE     & 0.5831(.0864) & 0.7498(.0092) & 0.7736(.0063) & 0.7804(.0089) & 0.7853(.0060) & 0.7874(.0065) & 0.7695(.0055) & 0.6050(.0475) \\
		&ADM-BBSE & 0.6401(.0695) & 0.7669(.0074) & 0.7840(.0068) & 0.7959(.0038) & 0.7979(.0026) & 0.7982(.0048) & 0.7784(.0040) & 0.6453(.0578) \\
		&RLLS     & 0.6482(.0773) & 0.7499(.0092) & 0.7736(.0063) & 0.7804(.0089) & 0.7853(.0060) & 0.7874(.0065) & 0.7695(.0055) & 0.6785(.0559) \\
		&ADM-RLLS & 0.7090(.0638) & 0.7661(.0078) & 0.7830(.0057) & 0.7930(.0074) & 0.7976(.0026) & 0.7974(.0063) & 0.7795(.0038) & 0.7628(.0159) \\
		&MLLS     & 0.6971(.0380) & 0.7522(.0091) & 0.7735(.0062) & 0.7809(.0094) & 0.7857(.0057) & 0.7814(.0096) & 0.7699(.0079) & 0.7030(.0484) \\
		&ADM-MLLS & 0.7576(.0129) & 0.7618(.0039) & 0.7801(.0079) & 0.7916(.0079) & 0.7968(.0021) & 0.7978(.0057) & 0.7816(.0027) & 0.7296(.0408) \\
		&SCML     & 0.7026(.0155) & 0.7535(.0095) & 0.7728(.0054) & 0.7841(.0103) & 0.7852(.0049) & 0.7755(.0156) & 0.7706(.0048) & 0.5578(.0027) \\
		&ADM-SCML & 0.7529(.0157) & 0.7738(.0133) & 0.7827(.0053) & 0.7961(.0076) & 0.7971(.0027) & 0.7898(.0145) & 0.7798(.0022) & 0.5856(.0033) \\
		&ADM-OS   & 0.7340(.0127) & 0.7648(.0116) & 0.7817(.0056) & 0.7955(.0078) & 0.7969(.0025) & 0.7983(.0044) & 0.7792(.0030) & 0.7070(.0411) \\
		&AvgImp   & \textbf{.0572}/\textbf{.0763} & \textbf{.0158}/\textbf{.0135}  & \textbf{.0091}/\textbf{.0083} &  \textbf{.0127}/\textbf{.0141} 
		& \textbf{.0120}/\textbf{.0115} & \textbf{.0129}/\textbf{.0154}  & \textbf{.0099}/\textbf{.0093} &  \textbf{.0448}/\textbf{.0709} \\
		\midrule[0.75pt]
		\bottomrule[1.5pt]
	\end{tabular}
\end{table*}
\subsubsection{Influence of parameters}
Within the ADM framework, the primary parameters are $\lambda$ and $\beta$. This segment delves into a sensitivity analysis of these parameters, encompassing varying values of $\lambda$ and ${(1 - \beta) \mathord{\left/{\vphantom {(1 - \beta) \beta }} \right.\kern-\nulldelimiterspace} \beta }$ across three ADM approaches~(ADM-MLLS, ADM-RLLS and ADM-OS) applied on the MNIST, Fasion MNIST and USPS datasets under $\alpha=1$ and $\rho=0.5$. The selection ranges for $\lambda$ are $\{0.8,0.9,1,1.5,2\}$, while ${(1 - \beta) \mathord{\left/{\vphantom {(1 - \beta) \beta }} \right.\kern-\nulldelimiterspace} \beta }$ is varied within $\{0.01,0.05,0.1,0.5,1\}$. The results are visually presented in Fig. 5 and we have the following observations.
\begin{enumerate}
	\item The performance of our approaches exhibits variability across distinct parameter settings. Notably, within a specific range of adjustments to $\lambda$ and ${(1 - \beta) \mathord{\left/{\vphantom {(1 - \beta) \beta }} \right.\kern-\nulldelimiterspace} \beta }$, the performance of our approaches demonstrates little fluctuations. This characteristic mitigates the complexity associated with parameter tuning endeavors.
	\item In contrast to the two-step approaches ADM-MLLS and ADM-RLLS, ADM-OS displays greater sensitivity to parameter variations. This observation indicates that source samples and calibration loss have a significant impact on model performance, thereby  indirectly highlighting the effectiveness of ADM-OS.
\end{enumerate}

\subsubsection{Convergence Behavior}
To verify the convergence of our ADM procedure outlined in Algorithm 1 and 2, we present the objective function values of all ADM approaches on four datasets. As seen from the results in Fig. 6, we know that the objective function values of ADM approaches eventually converge. Furthermore, it can be observed that when ADM-OS updates weights iteratively, its convergence rate aligns closely with that of the two-stage approaches. This observation shows the efficacy of proposed bi-level optimization strategy. 

\subsection{Application to COVID-19 Screening}
The COVID-19 pandemic has rapidly spread across the globe in a condensed timeframe. While Reverse Transcription Polymerase Chain Reaction~(RT-PCR) tests are widely regarded as the diagnostic gold standard, their widespread implementation faced obstacles during the initial phases of the outbreak due to challenges in sample preparation and quality control~\cite{pami/ZengGHFHRW23}. Serving as a vital adjunctive diagnostic modality, chest Computed Tomography~(CT) scans can unveil bilateral patchy opacities in the lungs of infected individuals, exhibiting heightened sensitivity for COVID-19 detection in comparison to the initial RT-PCR testing of pharyngeal swab specimens~\cite{ChowdhuryRKMKMI20}. This insight provides crucial evidence to inform prompt clinical decision-making during the ongoing pandemic. Nevertheless, the laborious task of manually annotating and delineating lesions for diagnostic interpretation poses a formidable hurdle for radiologists, especially amid the surge in infection rates and suspected cases, compounded by the diminished sensitivity of visual inspection in detecting subtle lesions during the early infection stages. Therefore,  there exists an exigent need to develop an efficient screening model leveraging a limited amount of labeled data from the initial phases and a substantial volume of unlabeled data during the outbreak period.
\begin{figure}[!htbp] \label{fig9}
	\centering
	\subfigure[COVID($\alpha=0.5$)]{
		\includegraphics[width=0.23\textwidth]{./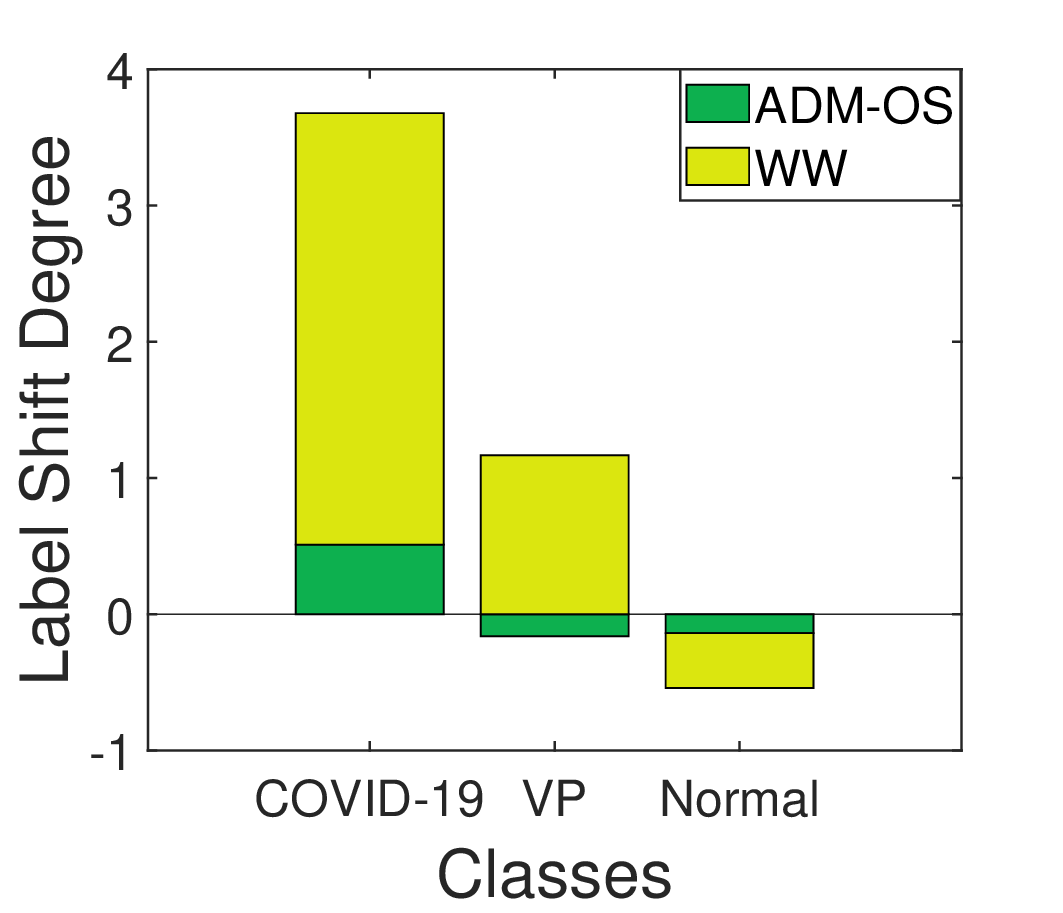}}
	\subfigure[COVID($\alpha=1$)]{
		\includegraphics[width=0.23\textwidth]{./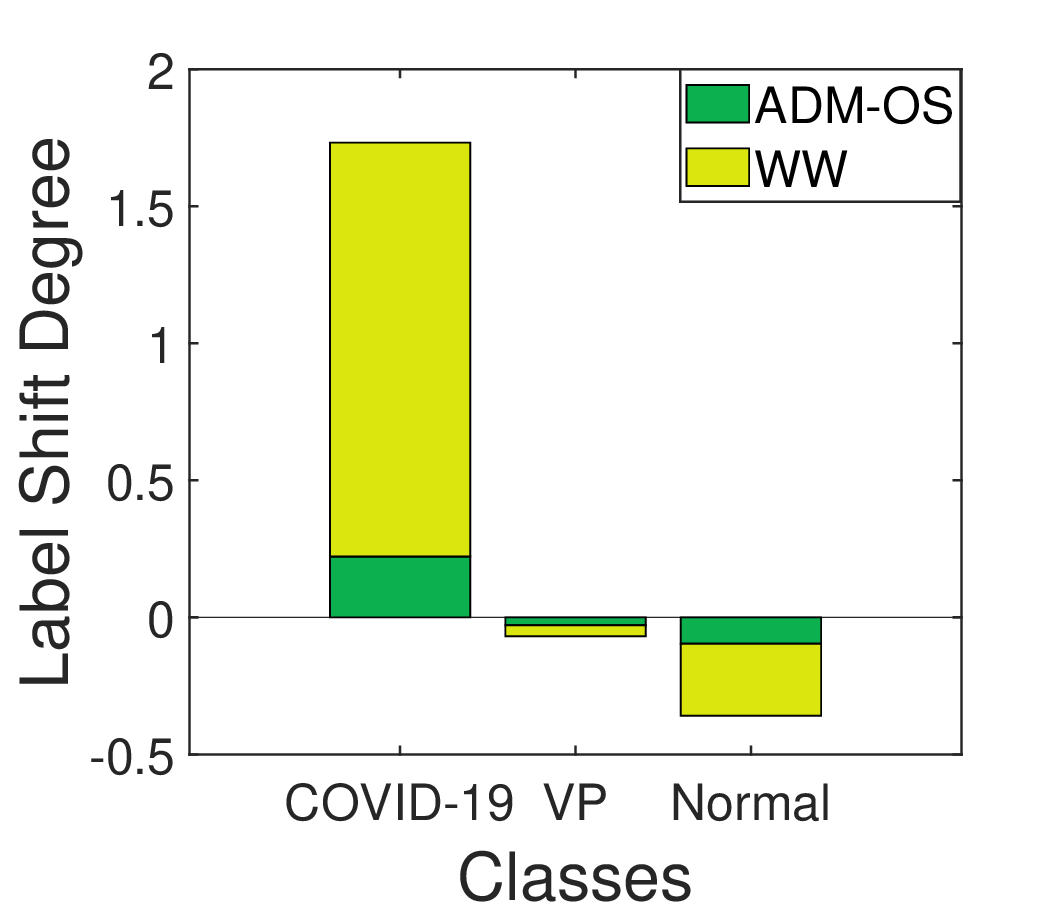}}
	\subfigure[COVID($\rho=0.7$)]{
		\includegraphics[width=0.23\textwidth]{./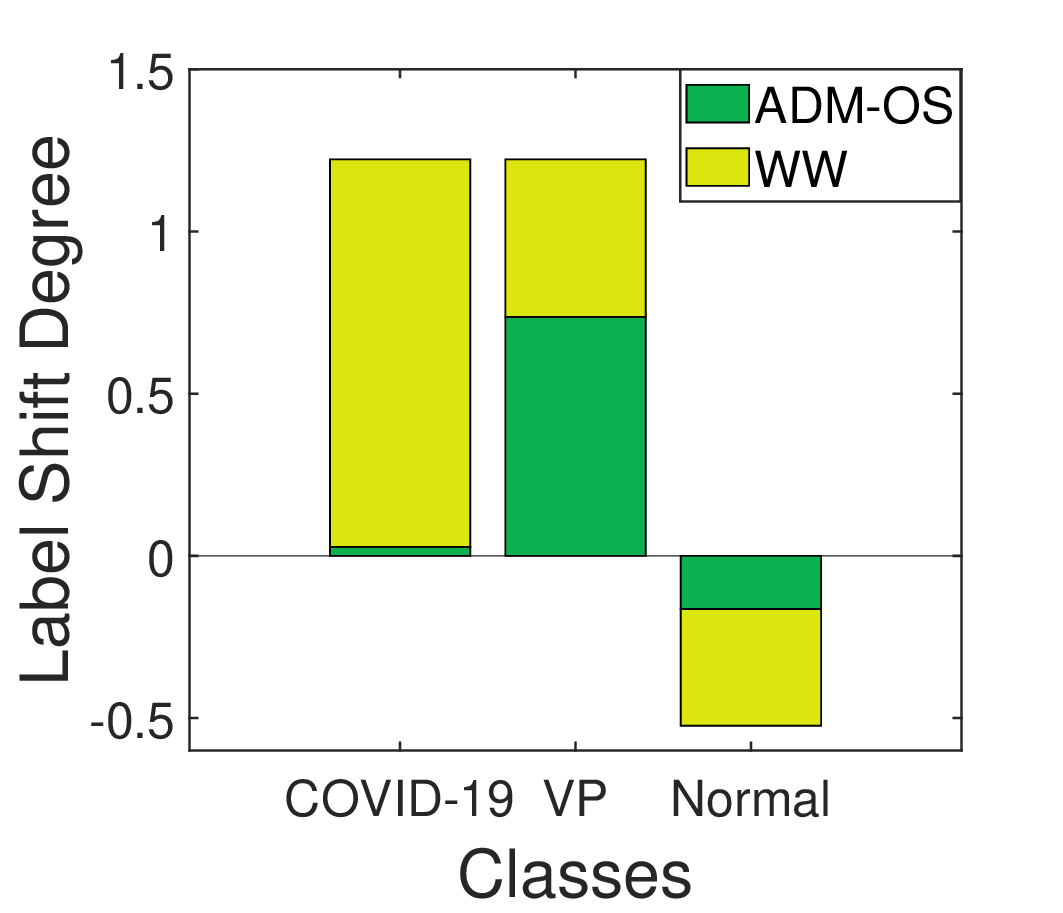}}
	\subfigure[COVID($\rho=0.9$)]{
		\includegraphics[width=0.23\textwidth]{./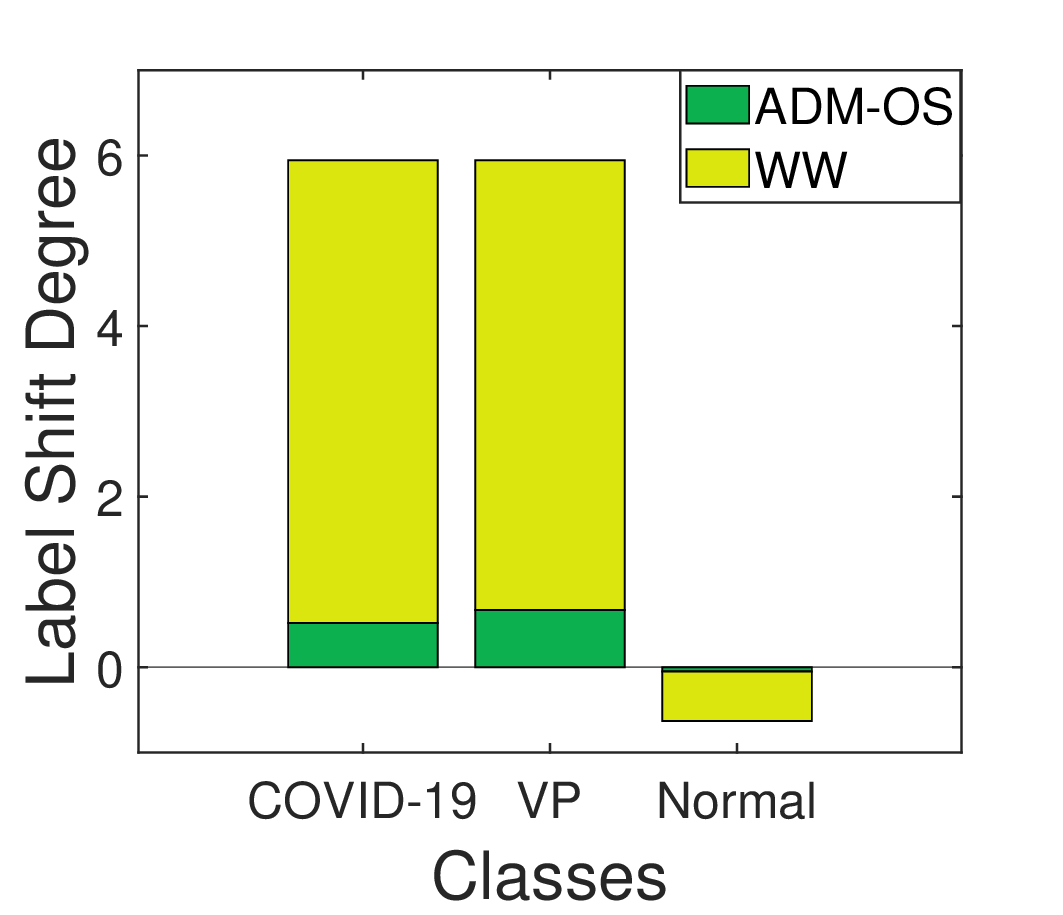}}
	\caption{The label distribution degree of ADM-OS on COVID dataset.}
\end{figure}

In order to evaluate the performance of ADM framework, we utilize one public COVID dataset\footnote{https://www.heywhale.com/mw/dataset/6027caee891f960015c863d7}, which contains a mixture of 1200 COVID-19, 1341 viral pneumonia~(VP), and 1345 normal chest X-ray images. Specifically, the dataset is partitioned into two subsets: 500 labeled samples~(including a small number of COVID-19 images) and 2000 unlabeled samples~(comprising a higher quantity of COVID-19 images). The proportion of COVID-19 in labeled images is determined by the shift parameters $\alpha\in[0.1,0.5,1,5]$ and $\rho\in[0.3,0.5,0.7,0.9]$. We randomly sample 10 times for each distribution parameter and employ a two-layer fully connected neural network as the base classifier. The experimental results are displayed in Table \ref{Tab2} and we obtain that the two-step and one-step ADM approaches have improved performance compared to the existing label shift methods in all shift cases. Notably, on the COVID dataset with $\rho =0.9$ and $\alpha =0.1$, the two-step ADM approaches demonstrate an average increase of more than 4\%, while ADM-OS achieves an average improvement exceeding 7\%. These results offer compelling validation of the stability and efficacy of ADM framework.

Furthermore, in order to validate the precision of weight estimation, a comparative analysis is conducted between ADM-OS and the non-weighted WW method. The label shift degree for ADM-OS and WW is characterized by $w^*-w_{ADM-OS}$ and $w^*-\mathbf{1}$ to elucidate  the weight bias. The outcomes, depicted in Fig. 7, manifest that the estimated weights yielded by ADM-OS closely align with the true weights, indicating the effectiveness of ADM-OS approach.

\section{Conclusion}
This paper is centered on leveraging target samples to enhance the performance of existing label shift methods. The concept of aligned distribution mixture is introduced as a theoretical mechanism to counteract the inherent theoretical deviation of direct distribution mixture within label shift scenario. Building upon this foundation, we propose a theory-inspired label shift framework with calibration loss, encompassing both two-step and one-step approaches. The experimental results highlight the superiority of ADM framework, displaying its effectiveness in addressing label shifts of varying magnitudes. We believe that our work represents a significant advancement in improving the efficacy of label shift algorithms in practical applications, owing to its remarkable performance. In future work, our focus will be on exploring strategies to optimize the performance of ADM-OS in scenarios involving a substantial number of classes.

\bibliographystyle{IEEEtran}
\bibliography{tsls}
\begin{IEEEbiography}[{\includegraphics[width=1in,height=1.25in,clip,keepaspectratio]{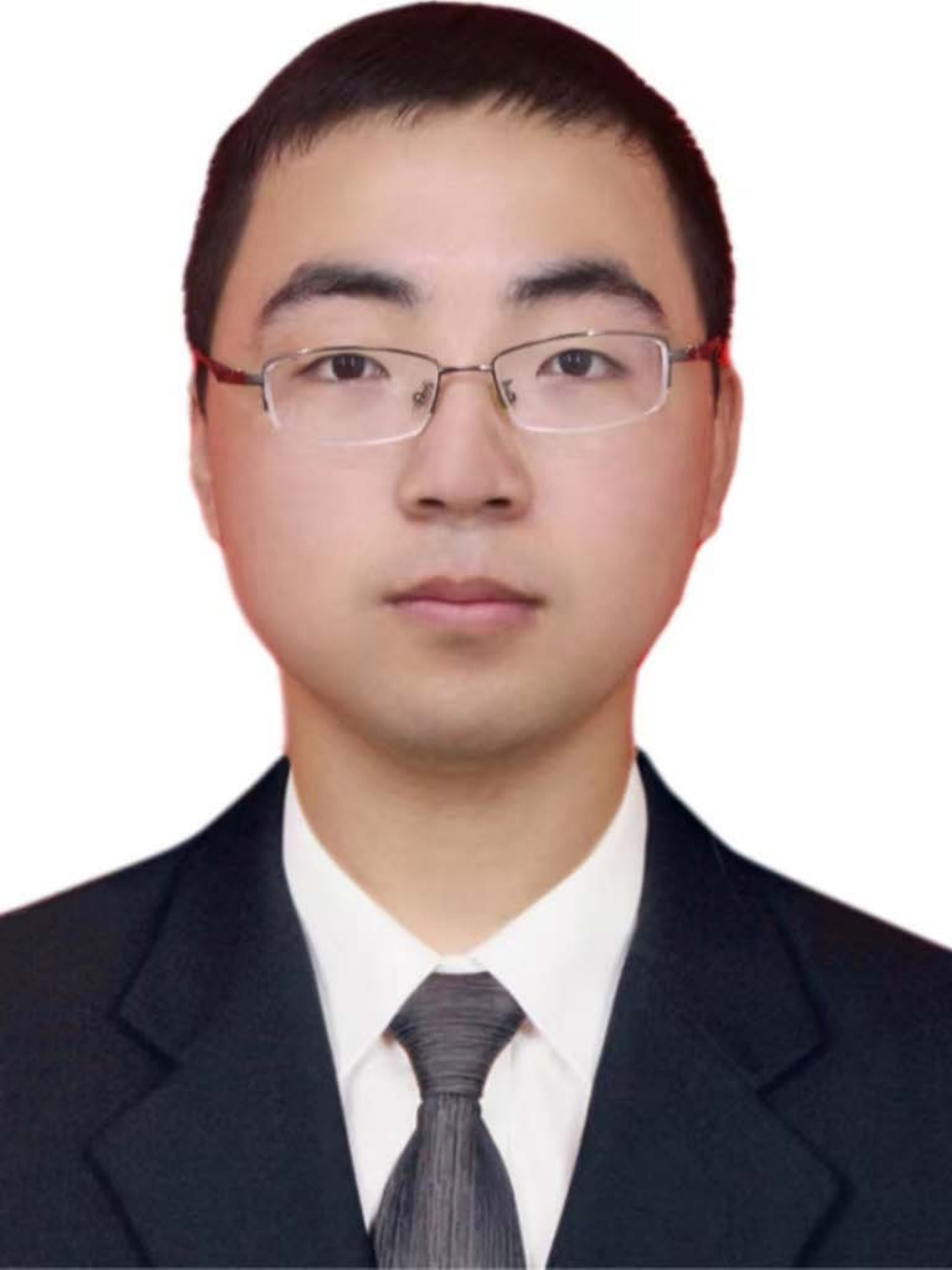}}] {Ruidong Fan}~{is a Ph.D. condidate at the National University of Defense Technology, Changsha, China. He received the B.S. degree from Lanzhou University in 2018 and the M.S. degree from the National University of Defense Technology in 2020. His research interests include data mining, optimization and transfer learning.}
\end{IEEEbiography}

\begin{IEEEbiography}[{\includegraphics[width=1in,height=1.25in,clip,keepaspectratio]{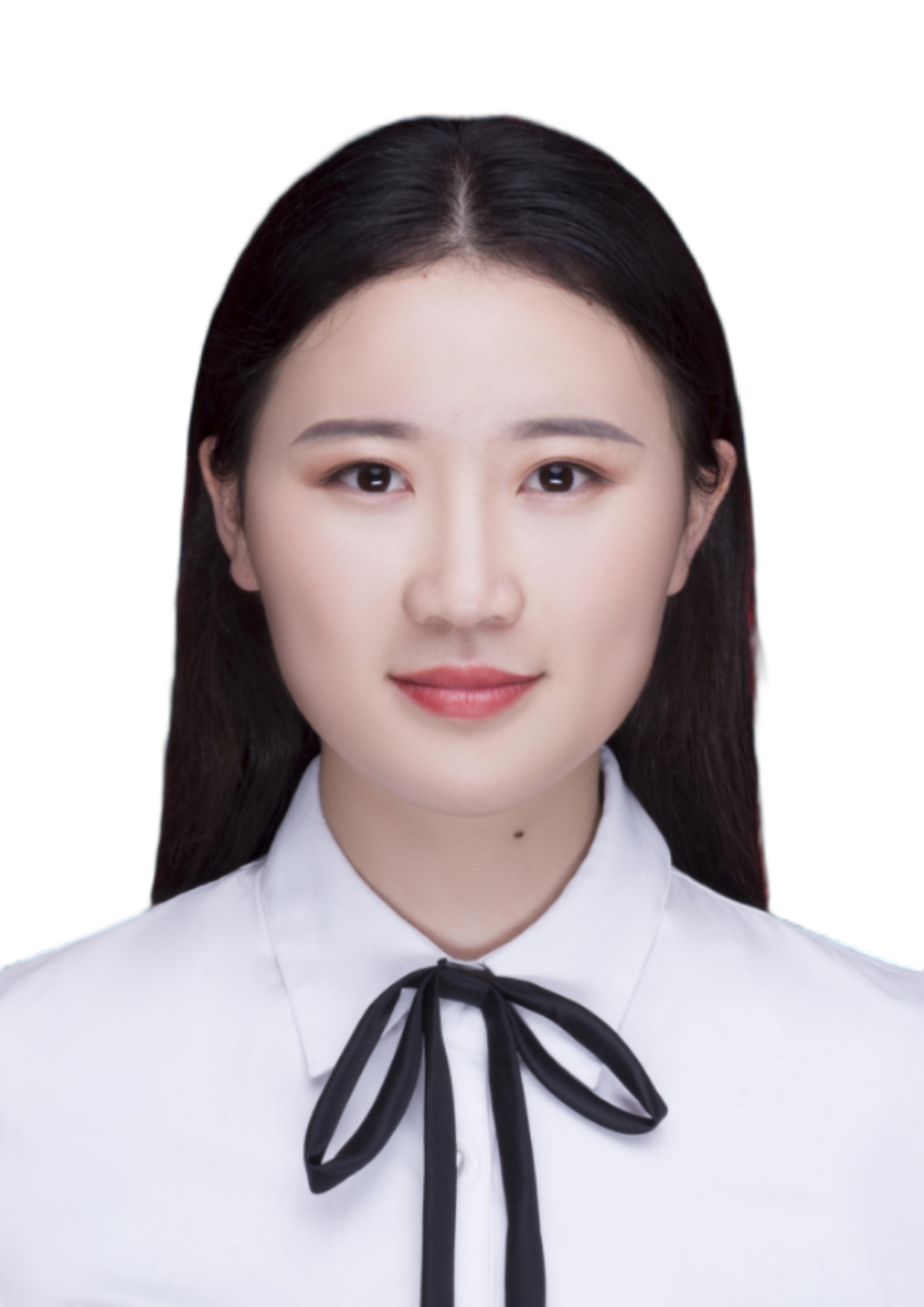}}] {Xiao Ouyang}~{is a Ph.D. condidate at the National University of Defense Technology, Changsha, China. She received the B.S. degree from Wuhan University of Technology in 2020 and the M.S. degree from the National University of Defense Technology in 2022. Her current research interests include data mining and streaming view learning.}
\end{IEEEbiography}

\begin{IEEEbiography}[{\includegraphics[width=1in,height=1.25in,clip,keepaspectratio]{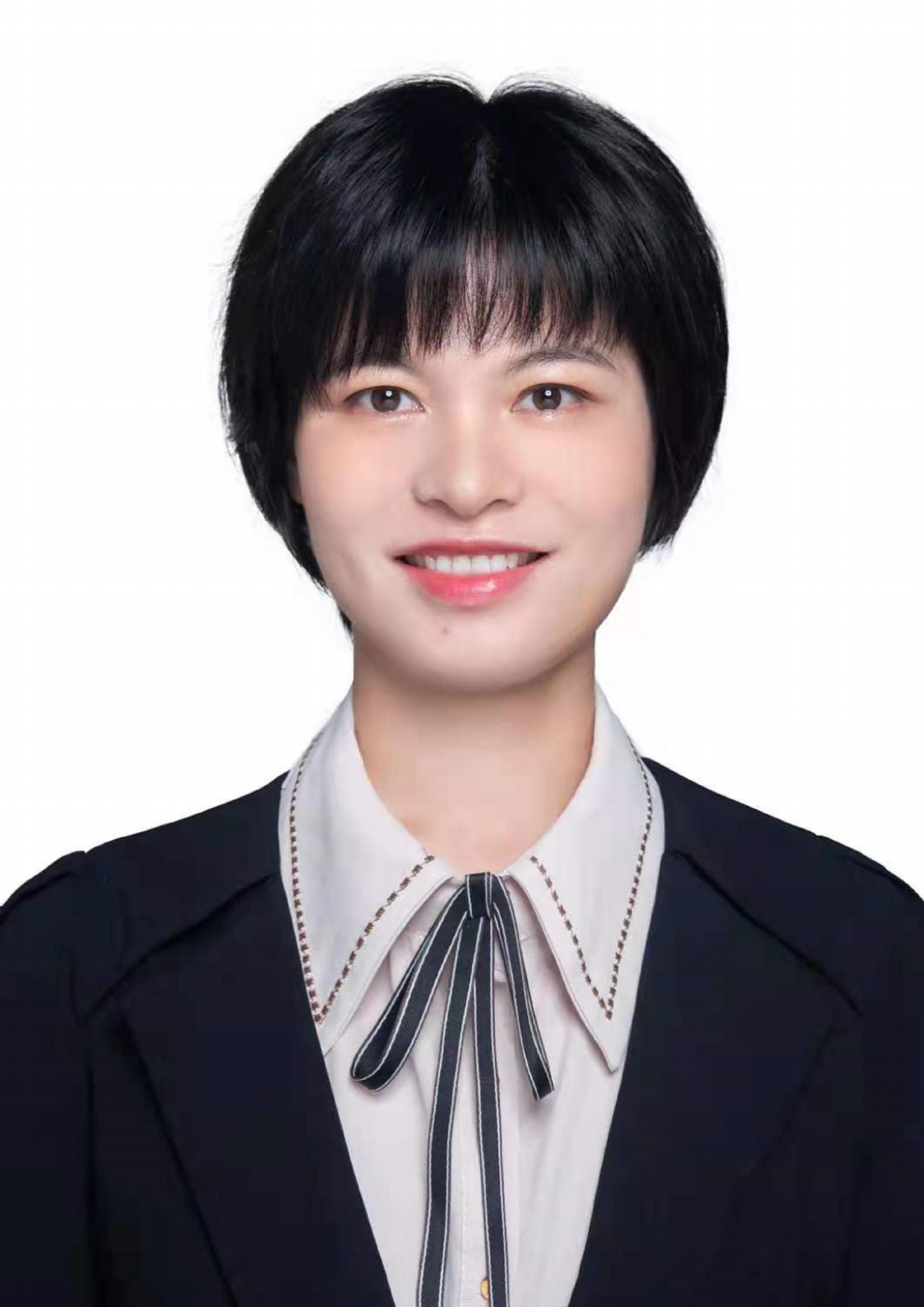}}] {Hong Tao}~{received the Ph.D. degree from the National University of Defense Technology, Chang sha, China, in 2019. She is currently an Associate Professor with the College of Science of the same university. Her research interests include machine learning, system science, and data mining.}
\end{IEEEbiography}

\begin{IEEEbiography}[{\includegraphics[width=1in,height=1.25in,clip,keepaspectratio]{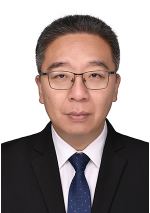}}] {Yuhua Qian}~{received the M.S. and Ph.D. degrees in computers with applications from Shanxi University, Taiyuan, China, in 2005 and 2011, respectively. He is currently a Professor at the Key Laboratory of Computational Intelligence and Chinese Information Processing, Ministry of Education, Shanxi University. He is best known for multi-granulation rough sets in learning from categorical data and granular computing. He is involved in research on machine learning, pattern recognition, feature selection, granular computing, and artificial intelligence. He has authored over 100 articles on these topics in international journals. He served on the Editorial Board of the International Journal of Knowledge-Based Organizations and Artificial Intelligence Research. He has served as the Program Chair or Special Issue Chair of the Conference on Rough Sets and Knowledge Technology, the Joint Rough Set Symposium, and the Conference on Industrial Instrumentation and Control, and a PC Member of many machine learning, and data mining conferences.}
\end{IEEEbiography}

\begin{IEEEbiography}[{\includegraphics[width=1in,height=1.25in,clip,keepaspectratio]{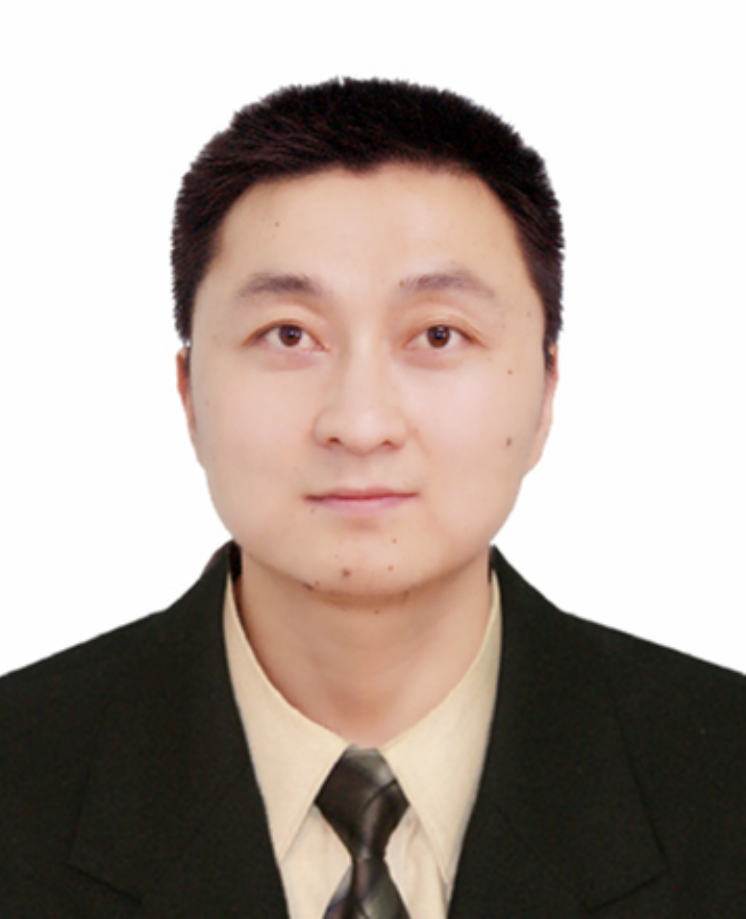}}] {Chenping Hou}~{received the B.S. and Ph.D. degrees in applied mathematics from the National University of Defense Technology, Changsha, China, in 2004 and 2009, respectively. He is currently a full Professor at the College of Science, National University of Defense Technology. He has authored more than 100 papers in journals and conferences, such as the IEEE TPAMI, IEEE TKDE, IEEE TNNLS/TNN, IEEE TCYB, IEEE TIP, PR, IJCAI, AAAI, etc. He has been a Program Committee member of several conferences including IJCAI, AAAI, etc. His current research interests include pattern recognition, machine learning, data mining, and computer vision.}
\end{IEEEbiography}

\end{document}